\documentclass{article}

\usepackage{microtype}
\usepackage{graphicx}
\usepackage{subcaption}
\usepackage{booktabs} 

\usepackage{hyperref}

\usepackage[accepted]{icml2026}

\usepackage{amsmath}
\usepackage{amssymb}
\usepackage{mathtools}
\usepackage{amsthm}

\usepackage[capitalize,noabbrev]{cleveref}

\theoremstyle{plain}
\newtheorem{theorem}{Theorem}[section]
\newtheorem{proposition}[theorem]{Proposition}
\newtheorem{lemma}[theorem]{Lemma}
\newtheorem{corollary}[theorem]{Corollary}
\theoremstyle{definition}
\newtheorem{definition}[theorem]{Definition}
\newtheorem{assumption}[theorem]{Assumption}
\theoremstyle{remark}

\usepackage[textsize=tiny]{todonotes}

\icmltitlerunning{Efficient Preference Poisoning Attack on Offline RLHF}

\begin{document}

\twocolumn[
  \icmltitle{Efficient Preference Poisoning Attack on Offline RLHF}



  \icmlsetsymbol{equal}{*}

  \begin{icmlauthorlist}
    \icmlauthor{Chenye Yang}{ucdavis}
    \icmlauthor{Weiyu Xu}{uoi}
    \icmlauthor{Lifeng Lai}{ucdavis}
  \end{icmlauthorlist}

  \icmlaffiliation{ucdavis}{Department of Electrical and Computer Engineering, University of California, Davis, Davis, CA, USA}
  \icmlaffiliation{uoi}{Department of Electrical and Computer Engineering, University of Iowa, Iowa City, IA, USA}

  \icmlcorrespondingauthor{Chenye Yang}{cyyyang@ucdavis.edu}
  \icmlcorrespondingauthor{Weiyu Xu}{weiyu-xu@uiowa.edu}
  \icmlcorrespondingauthor{Lifeng Lai}{lflai@ucdavis.edu}

  \icmlkeywords{Machine Learning, ICML}

  \vskip 0.3in
]



\printAffiliationsAndNotice{}  

\begin{abstract}
Offline Reinforcement Learning from Human Feedback (RLHF) pipelines such as Direct Preference Optimization (DPO) train on a pre-collected preference dataset, 
which makes them vulnerable to preference poisoning attack. 
We study label flip attacks against log-linear DPO. 
We first illustrate that flipping one preference label induces a parameter-independent shift in the DPO gradient. 
Using this key property, we can then convert the targeted poisoning problem into a structured binary sparse approximation problem. 
To solve this problem, we develop two attack methods: Binary-Aware Lattice Attack (BAL-A) and Binary Matching Pursuit Attack (BMP-A).
BAL-A embeds the binary flip selection problem into a binary-aware lattice and applies Lenstra-Lenstra-Lovász reduction and Babai's nearest plane algorithm; 
we provide sufficient conditions that enforce binary coefficients and recover the minimum-flip objective.
BMP-A adapts binary matching pursuit to our non-normalized gradient dictionary and yields coherence-based recovery guarantees and robustness (impossibility) certificates for $K$-flip budgets.
Experiments on synthetic dictionaries and the Stanford Human Preferences dataset validate the theory and highlight how dictionary geometry governs attack success.
\end{abstract}

\section{Introduction}\label{sec:introduction}
Reinforcement learning from human feedback (RLHF) is a popular approach for aligning a learned policy with human preferences when an explicit reward function is hard to specify~\cite{christiano2017deep, casper2023open}.
In many practical pipelines, the policy is trained from a pre-collected dataset of pairwise preference labels~\cite{christiano2017deep,bai2022training,rafailov2023direct}, 
which is referred to as \emph{offline} RLHF~\cite{xiong2023iterative}, 
to differentiate it from \emph{online} RLHF where the policy is learned from real-time interactions with human feedbacks~\cite{wang2023rlhf}.
Recent work has demonstrated that RLHF faces substantial security issues related to the reliability of the human preference data, 
i.e., when there are adversarial attacks on preferences or random noises in human preferences~\cite{casper2023open}.
The preference data can be manipulated by adversaries to mislead both offline and online RLHF to produce targeted outcomes~\cite{baumgartner2024best,wu2025preference,fu2025poisonbench,nika2025policy,pathmanathan2025poisoning, yang2025human}.

Two types of adversarial attack models are commonly studied in offline RLHF~\cite{baumgartner2024best,rando2023universal,wu2025preference,nika2025policy}: 
(i) label flip attack, which flips some labels within the original dataset; and
(ii) data injection attack, which appends poisoned trajectories and preferences to the original dataset. 
In particular, \cite{rando2023universal,baumgartner2024best,wu2025preference} provide valuable experimental results on the impact of label flip and data injection attacks, 
but focus less on the theoretical analysis.
\cite{nika2025policy} provides a comprehensive theoretical analysis of the effectiveness of data injection attacks,
showing that to mislead offline RLHF, the attacker needs to append numerous poisoned preference pairs, 
with size scaling linearly with the original dataset.

However, while label flip attacks on offline RLHF is practically plausible, a comprehensive theoretical study of such a setup remains an open problem. 
We aim to address this critical gap in this paper. 
Understanding this vulnerability is crucial for developing robust algorithms and ensuring the reliability of RLHF systems in real-world applications,
e.g., designing helpful and harmless LLMs robust to adversarial attacks on training set.
Compared with the data injection attack, the problem of understanding label flipping attack is more challenging:
First, flips are limited to the original dataset: the attacker can only change labels on existing comparisons and cannot freely create arbitrary many new preference pairs.
Second, the effect of flipping a single label on the learned policy is hard to predict, which makes it difficult to effectively choose which labels to attack.

To our best knowledge, this work is the first to provide a theoretical analysis of \emph{label flip} attacks on offline RLHF, specifically on Direct Preference Optimization (DPO) pipeline~\cite{rafailov2023direct} under the log-linear policy class.
Our focus on log-linear policies follows prior RLHF theory~\cite{xiong2023iterative,chowdhury2024provably,nika2025policy} and isolates the core geometric phenomenon that drives flip attacks.
The main technical observation is simple but powerful: in log-linear DPO, flipping one preference label changes the gradient of the DPO objective by a vector that does not depend on the current policy parameter.
The attacker’s problem can therefore be expressed in \emph{gradient space} as selecting a subset of existing preference pairs whose flip-induced gradient shifts add up to (or closely approximate) a target vector, 
which is decided by the attacker's desired policy to force the learner to adopt. 
We further show that controlling the gradient residual is a sufficient surrogate for policy-space attack success.

The reduction above implies that finding the optimal targeted flip attacks is a combinatorial problem and, in general, is computationally hard.
To address this, we develop two algorithms with different focuses.
First, for the general minimum-flip objective, we propose a \emph{Binary-Aware Lattice Attack} (BAL-A).
BAL-A uses a lattice embedding that simultaneously penalizes residual error and non-binary coefficients, 
and then applies Lenstra-Lenstra-Lovász (LLL) basis reduction and Babai’s nearest-plane algorithm to produce an integer solution that can be interpreted as a flip pattern.
We further analyze how the embedding parameter controls the method and show that
sufficiently large penalty enforces small integer coefficients and the proposed method finds the true minimum-flip solution under a separation condition.
Second, when the attacker has a flip budget $K$ and the true attack is sparse, 
we propose \emph{Binary Matching Pursuit Attack} (BMP-A), a greedy method adapted from Binary Matching Pursuit~\cite{wen2021binary}.
BMP-A selects examples using normalized correlations to handle non-normalized gradient columns and admits coherence-based recovery guarantees. 
We also provide simple robustness certificates showing when \emph{no} $K$-flip attack can succeed.

We validate the theory on synthetic dictionaries and on dictionaries constructed from the Stanford Human Preferences (SHP) dataset~\cite{ethayarajh2022understanding}.
The synthetic setting allows controlled tests of the lattice penalty and the coherence conditions.
On SHP, we observe that BAL-A’s success depends sharply on the penalty region predicted by the separation analysis, 
and that BMP-A benefits substantially from reducing dictionary coherence, supporting our theoretical results.

\textbf{Additional Related Work}:
Adversarial attacks on bandit and standard RL problems have been extensively studied. 
In these setups, the attacker manipulates rewards, actions, and transitions to force the learner toward targeted behaviors. 
However, attacks on RLHF changes \emph{human feedback}, such as preference labels, 
which are discrete, and indirectly shape the learned policy, making the effect of a single corruption harder to analyze.
We summarize these additional related work in Appendix~\ref{app:related-work}.

\section{Problem Formulation}\label{sec:problem-formulation}

\subsection{Offline RLHF and DPO}
In offline RLHF, we observe a fixed dataset $\mathcal{D}$ consisting of pairwise preferences over two candidate actions/responses $a$ under the same state/prompt $s$.
We write each comparison as $(s,a,a',o)$, where $o\in\{+1,-1\}$ encodes which action is preferred by the human labeler. $o=+1$ means $a \succ a'$, while $o=-1$ means $a' \succ a$.
Given a reference policy $\mu$ with parameter $\theta_\mu$, 
Direct Preference Optimization (DPO) trains a parametric policy $\pi_\theta$ directly from these comparisons, without explicitly fitting a reward model~\cite{rafailov2023direct}.

We focus on the log-linear policies for DPO, as defined in Definition~\ref{def:log-linear-policy}.
Log-linear policy class is commonly considered in RLHF literature~\cite{xiong2023iterative, chowdhury2024provably, nika2025policy}, and it features many good properties in the further analysis.

\begin{definition}[Log-linear Policy~\cite{nika2025policy}]\label{def:log-linear-policy}
    Let $\psi$ be a $d$-dimensional feature mapping $\psi: \mathcal{S} \times \mathcal{A} \to \mathbb{R}^{d}$ with $\max_{s,a} \|\psi(s,a)\| \leq 1$.
    We consider the following class of log-linear policies $\forall (s,a) \in \mathcal{S} \times \mathcal{A}$ where $\theta \in \mathbb{R}^{d}$:
    \begin{equation*}
        \Pi^{\text{log}} = \left\{ \pi_\theta : \pi_\theta(a \mid s) = \frac{\exp\left(\psi(s,a)^\top \theta\right)}{\sum_{a'} \exp\left(\psi(s,a')^\top \theta\right)} \right\}.
    \end{equation*}
\end{definition}

Then, the DPO objective can be written as a regularized logistic loss over comparisons~\cite{rafailov2023direct}:
\begin{equation*}
    \begin{split}
        L_{\mathrm{DPO}}(\theta;\mathcal{D})
        =
        &- \sum_{(s,a,a',o)\in \mathcal{D}}
        \log \sigma\!\left(o \cdot \beta \left(\log \frac{\pi_\theta(a\,|\,s)}{\mu(a\,|\,s)} \right.\right. \\
        & \left.\left. - \log \frac{\pi_\theta(a'\,|\,s)}{\mu(a'\,|\,s)} \right)\right) + \frac{\lambda}{2}\,\lVert \theta - \theta_\mu\rVert_2^2 ,
    \end{split}
\end{equation*}
where $\sigma(\cdot)$ is the sigmoid function, $\beta > 0$ is a temperature parameter, and $\lambda > 0$ is the regularization parameter.
For reader's convinience, we summarize the main steps of the derivation of DPO from the RLHF pipeline provided in~\cite{rafailov2023direct} in Appendix~\ref{app:rlhf-dpo-background}.

\subsection{Preference Attack}
We study label flip attacks on offline preference datasets.
The attacker selects a subset of comparisons and flips their outcomes, equivalently changing $o_i$ to $-o_i$ on the chosen indices.
The size of the dataset is fixed.
Let $x\in\{0,1\}^n$ indicate which comparisons are flipped, and write $\tilde{\mathcal{D}}(x)$ as the attacked dataset.

In this paper, a targeted label flip attacker's optimization problem is formulated as:
\begin{equation*}
    \min_{x\in\{0,1\}^n}\ \|x\|_0
    \quad \text{s.t.}\quad
    \bigl\|\pi_{\hat\theta(\tilde{\mathcal{D}}(x))}-\pi^\dagger\bigr\|_1 \le \varepsilon,
\end{equation*}
Here $\hat\theta(\tilde{\mathcal{D}}(x))$ is the DPO policy parameter trained on the attacked dataset, and $\pi_{\hat\theta(\tilde{\mathcal{D}}(x))}$ is the correpsonding policy.
In this formulation, the goal of the attacker is to force $\pi_{\hat\theta(\tilde{\mathcal{D}}(x))}$ to align with a target policy $\pi^\dagger$ (with parameter $\theta^\dagger$) decided by the attacker, 
while modifying as few labels as possible.

We also consider a budgeted variant, in which the attacker can at most flip $K$ entries in the training dataset:
\begin{equation*}
    \min_{x\in\{0,1\}^n:\ \|x\|_0\le K}\ \|x\|_0
    \quad \text{s.t.}\quad
    \bigl\|\pi_{\hat\theta(\tilde{\mathcal{D}}(x))}-\pi^\dagger\bigr\|_1 \le \varepsilon,
\end{equation*}

\section{Label Flip Attack on DPO}\label{sec:label-flip-attack-on-dpo}

\subsection{Effect of Label Flip Attack}\label{sec:effect-of-label-flip-attack}

The DPO optimization problem with log-linear policy can be written as $\min_{\theta} L_{\mathrm{DPO}}(\theta;\mathcal{D})$.
For sample $i$: $(s_i, a_i, a_i', o_i)$, the per-sample loss is
\begin{equation}\label{equ:per-sample-loss}
    \begin{aligned}
    \ell_i(\theta)
    =
    - \log \sigma\!\bigg(
    o_i\,\beta\bigg[
        \log \frac{\pi_\theta(a_i|s_i)}{\mu(a_i|s_i)}
        -
        \log \frac{\pi_\theta(a_i'|s_i)}{\mu(a_i'|s_i)}
    \bigg]\bigg).
    \end{aligned}
\end{equation}
Using~\eqref{equ:per-sample-loss}, the overall DPO loss can be written as:
\begin{equation*}
	L_{\mathrm{DPO}}(\theta;\mathcal{D})
	= \sum_{j \in \mathcal{D}} \ell_j(\theta)
	+ \frac{\lambda}{2} \lVert \theta - \theta_{\mu} \rVert^2 .
\end{equation*}

\begin{theorem}[Flip-induced gradient shift for log-linear DPO]\label{thm:flip-gradient-shift}
    Fix a comparison $(s_i,a_i,a_i',o_i)$ with $o_i\in\{+1,-1\}$, and define the feature difference $\Delta\psi_i := \psi(s_i,a_i)-\psi(s_i,a_i') \in \mathbb{R}^d$.
    Let $\ell_i(\theta)$ be the per-sample DPO loss in~\eqref{equ:per-sample-loss}, and let $g_i(\theta):=\nabla_\theta \ell_i(\theta)$.
    If we flip the label $o_i$ to $\tilde o_i=-o_i$, and we have $\tilde{g}_i(\theta)$ replacing the $o_i$ with $\tilde o_i$ in $g_i(\theta)$,
    then for every $\theta$ the per-sample gradient changes by a constant vector that is independent of $\theta$:
    \begin{equation*}
        \Delta g_i := \tilde g_i(\theta)-g_i(\theta) \;=\; o_i\,\beta\,\Delta\psi_i.
    \end{equation*}
\end{theorem}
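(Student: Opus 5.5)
The plan is to compute the per-sample gradient in closed form and then subtract. First, the log-linear structure in Definition~\ref{def:log-linear-policy} simplifies the DPO margin. For fixed $s_i$, the normalizing constant $\sum_{a'}\exp(\psi(s_i,a')^\top\theta)$ appears in both $\log\pi_\theta(a_i|s_i)$ and $\log\pi_\theta(a_i'|s_i)$, so it cancels in the difference. This gives
\begin{equation*}
\log \frac{\pi_\theta(a_i|s_i)}{\mu(a_i|s_i)} - \log \frac{\pi_\theta(a_i'|s_i)}{\mu(a_i'|s_i)} = \Delta\psi_i^\top\theta - c_i,
\end{equation*}
where $c_i := \log\mu(a_i|s_i)-\log\mu(a_i'|s_i)$ does not depend on $\theta$. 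If $\mu=\pi_{\theta_\mu}$, then $c_i=\Delta\psi_i^\top\theta_\mu$, but only its $\theta$-independence matters. Writing $z_i(\theta):=\beta(\Delta\psi_i^\top\theta - c_i)$, we get $\ell_i(\theta) = -\log\sigma(o_i z_i(\theta))$, with $z_i$ affine in $\theta$ and $\nabla_\theta z_i = \beta\Delta\psi_i$.

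Next, differentiate using $\frac{d}{du}\bigl(-\log\sigma(u)\bigr) = -(1-\sigma(u)) = -\sigma(-u)$. The chain rule then gives
\begin{equation*}
g_i(\theta) = -\sigma(-o_i z_i(\theta))\, o_i\beta\Delta\psi_i .
\end{equation*}
Replacing $o_i$ by $\tilde o_i=-o_i$ gives $\tilde g_i(\theta) = \sigma(o_i z_i(\theta))\, o_i\beta\Delta\psi_i$. Subtracting,
\begin{equation*}
\tilde g_i(\theta)-g_i(\theta) = \bigl[\sigma(o_i z_i(\theta))+\sigma(-o_i z_i(\theta))\bigr] o_i\beta\Delta\psi_i .
\end{equation*}
The sigmoid identity $\sigma(u)+\sigma(-u)=1$ makes the bracket equal to $1$. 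Hence the shift is exactly $o_i\beta\Delta\psi_i$, which is independent of $\theta$.

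The only step that needs care is the first one: confirming that the log-partition terms cancel, so that $z_i$ is exactly affine in $\theta$. If they did not cancel, the factor multiplying the sigmoid terms would depend on $\theta$. That factor would still appear identically in both gradients, though, so the identity $\sigma(u)+\sigma(-u)=1$ would give a $\theta$-dependent shift equal to $o_i$ times the gradient of the margin. The explicit cancellation is therefore exactly what yields the constant vector $o_i\beta\Delta\psi_i$. Everything after that step is a routine one-line calculation.
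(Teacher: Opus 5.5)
Your proof is correct and takes the same route as the paper. You cancel the log-partition terms so the margin is affine in $\theta$, differentiate $-\log\sigma$ by the chain rule, and use $\sigma(u)+\sigma(-u)=1$ to reduce the difference to $o_i\beta\Delta\psi_i$; the only cosmetic difference is that you absorb the reference term into a $\theta$-independent constant $c_i$ instead of writing $\Delta\psi_i^\top\theta_\mu$ via $\mu=\pi_{\theta_\mu}$, which shows the identity holds for any fixed reference policy.
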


Theorem~\ref{thm:flip-gradient-shift} is the key structural property behind our attack formulation: each flipped comparison contributes a fixed ``atom'' in gradient space for all values of policy parameter $\theta$.
Appendix~\ref{app:extension-identities} formalizes two related extension identities: DPO with general differentiable policy classes beyond the log-linear class has an exact but generally parameter-dependent flip-induced gradient shift, 
and the reward-modeling phase of RLHF has a similar parameter-independent identity for linear rewards. 
These identities show the potential generality of our attack formulation to other DPO policy classes and RLHF pipelines.

\subsection{Formulate Label Flip Attack Problem}\label{sec:label-flip-attack-problem}

Using Theorem~\ref{thm:flip-gradient-shift}, when the attacker flip a set $\mathcal{F} \subset \mathcal{D}$, the attacked gradient becomes:
\begin{equation*}
    \begin{aligned}
        \nabla_{\theta} L_{\mathrm{DPO}}(\theta;\tilde{\mathcal{D}})
        &= \sum_{j \notin \mathcal{F}} g_j(\theta)
        + \sum_{i \in \mathcal{F}} \tilde{g}_i(\theta)
        + \lambda (\theta - \theta_{\mu}) \\
        &= \nabla_{\theta} L_{\mathrm{DPO}}(\theta;\mathcal{D})
        + \Delta g_{\mathcal{F}} ,
    \end{aligned}
\end{equation*}
where $\Delta g_{\mathcal{F}} := \sum_{i \in \mathcal{F}} \Delta g_i $ is a term depending on $\mathcal{F}$, but not on $\theta$. 
Strong convexity of $L_{\mathrm{DPO}}(\theta;\tilde{\mathcal{D}})$ (Lemma E.14~\cite{nika2025policy}) ensures that the first order condition (FOC) is enough to guarantee optimality.
Then FOC minimizer for the attacked problem, $\tilde{\theta}$, satisfies:
\begin{equation*}
    \nabla_{\theta} L_{\mathrm{DPO}}(\tilde{\theta};\tilde{\mathcal{D}}) = \nabla_{\theta} L_{\mathrm{DPO}}(\tilde{\theta};\mathcal{D}) + \Delta g_{\mathcal{F}} = 0 .
\end{equation*}
Now we need to study: if we want the learned policy $\tilde{\theta}$ 
to be some (target) policy $\theta^\dagger$, what should the set $\mathcal{F}$ be?

From the FOC above, we have:
\begin{equation*}
    g^{\dagger} := \nabla_{\theta} L_{\mathrm{DPO}}(\theta^{\dagger};\mathcal{D})
    = - \Delta g_{\mathcal{F}}
    = - \sum_{i \in \mathcal{F}} o_i \beta \, \Delta\psi_i, 
\end{equation*}
where we define $g^{\dagger}$ as the gradient of the clean DPO loss at the target $\theta^{\dagger}$.

For simplicity, denote $v_i = o_i \beta \Delta\psi_i$, $V = [v_1, \dots, v_n] \in \mathbb{R}^{d \times n}$.
Then the label flip attack problem becomes:
\begin{equation*}
    \min_{\mathcal{F}} \quad  |\mathcal{F}| \quad \text{s.t.} \quad  \sum_{i \in \mathcal{F}} v_i = - g^{\dagger}.
\end{equation*}
It can be seen as a binary sparse approximation problem:
\begin{equation*}
    \begin{aligned}
        \min_{x \in \{0,1\}^n} \quad  \mathbf{1}^{\top} x \quad 
        \text{s.t.} \quad  V x = - g^{\dagger}.
    \end{aligned}
\end{equation*}
We note that, in the label flip attack problem,
the matrix $V$ is fixed and determined by the original dataset $\mathcal{D}$,
and the target gradient $-g^{\dagger}$ is also fixed and determined by the target policy $\theta^{\dagger}$.
The vector $x$ is a binary vector indicating which samples to flip in the dataset $\mathcal{D}$.

When we allow some approximation error $\varepsilon$ in reaching the target policy $\theta^{\dagger}$,
we say that:
a flip attack is
\emph{exactly successful} if $Vx + g^\dagger = 0$,
and \emph{approximately successful} if $\|Vx + g^\dagger\|_2 \le \varepsilon.$
Thus, the exact and approximate flip attack problems can be written as:
\begin{equation}\label{equ:exact-flip-attack}
    \begin{aligned}
        \min_{x \in \{0,1\}^n} \quad  \mathbf{1}^{\top} x \quad
        \text{s.t.} \quad  V x + g^{\dagger} = 0,
    \end{aligned}
\end{equation}
and
\begin{equation}\label{equ:approx-flip-attack}
    \begin{aligned}
        \min_{x \in \{0,1\}^n} \quad  \mathbf{1}^{\top} x \quad
        \text{s.t.} \quad  \|V x + g^{\dagger}\|_2 \leq \varepsilon.
    \end{aligned}
\end{equation}

\subsection{From Gradient Residual to Policy Closeness}\label{sec:grad-to-policy}

For log-linear policies, prior work justifies treating a \emph{parameter-space} target as a surrogate for a \emph{policy-space} success criterion.
In particular, Lemma E.5 of \cite{nika2025policy} relate an $\ell_1$ policy-closeness constraint with tolerance $\epsilon$ to a parameter-closeness constraint with tolerance $\epsilon'$: when $\epsilon' \le \epsilon/(2\sqrt{d})$, parameter feasibility implies policy feasibility, and they also provide conditions for the reverse implication.
Motivated by this equivalence, we analyze poisoning around a target parameter $\theta^\dagger$ and connect attack success to how flips perturb the DPO optimality conditions.

By construction,
\begin{equation}\label{eq:grad_identity_poison}
    \nabla_\theta L_{\mathrm{DPO}}(\theta^\dagger;\tilde{\mathcal{D}})
    = \nabla_\theta L_{\mathrm{DPO}}(\theta^\dagger;{\mathcal{D}}) + Vx
    = g^\dagger + Vx .
\end{equation}
Hence, the approximate flip-attack constraint
\begin{equation}\label{eq:grad_residual_surrogate}
    \|Vx + g^\dagger\|_2 \le \varepsilon
\end{equation}
is exactly a bound on the \emph{gradient residual} at $\theta^\dagger$ for the poisoned objective, i.e., it enforces that $\theta^\dagger$ is an $\varepsilon$-stationary point of $L_{\mathrm{DPO}}(\;\cdot\; ;\tilde{\mathcal{D}})$.
The following lemma shows that, under strong convexity, such approximate stationarity guarantees that training on $\tilde{\mathcal{D}}$ produces a parameter (and thus a policy) close to the target.

\begin{lemma}[Gradient residual to policy closeness]\label{lem:grad-to-policy}
    Assume $L_{\mathrm{DPO}}(\;\cdot\; ;\tilde{\mathcal{D}})$ is differentiable and $m$-strongly convex in $\theta$, 
    e.g., for log-linear DPO with $\ell_2$ regularization, see~\cite{nika2025policy}.
    Let $\hat\theta:=\arg\min_\theta L_{\mathrm{DPO}}(\theta;\tilde{\mathcal{D}})$ and define the trained policy
    $\pi_{\theta,\tilde{\mathcal{D}}} := \pi_{\hat\theta}$, with target policy $\pi^\dagger := \pi_{\theta^\dagger}$.
    If $\|Vx + g^\dagger\|_2 \le \varepsilon$ holds, equivalently
    $\|\nabla_\theta L_{\mathrm{DPO}}(\theta^\dagger;\tilde{\mathcal{D}})\|_2 \le \varepsilon$, then
    \begin{equation}\label{eq:param_bound}
        \|\hat\theta-\theta^\dagger\|_2 \le \varepsilon/m.
    \end{equation}
    Moreover, for log-linear policies this parameter bound implies the policy-space guarantee
    $\|\pi_{\hat\theta}-\pi^\dagger\|_1 \le \epsilon$
    whenever $\varepsilon \le m\,\epsilon/(2\sqrt{d})$.
\end{lemma}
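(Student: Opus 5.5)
The plan is to combine the standard strong-convexity inequality for gradients with the identity \eqref{eq:grad_identity_poison}, and then invoke Lemma E.5 of \cite{nika2025policy} for the passage from parameters to policies.

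\textbf{Step 1: reduce to a gradient residual at $\theta^\dagger$.} By \eqref{eq:grad_identity_poison}, the hypothesis $\|Vx+g^\dagger\|_2\le\varepsilon$ is literally $\|\nabla_\theta L_{\mathrm{DPO}}(\theta^\dagger;\tilde{\mathcal{D}})\|_2\le\varepsilon$. Strong convexity guarantees that $\hat\theta$ exists and is unique, and that $\nabla_\theta L_{\mathrm{DPO}}(\hat\theta;\tilde{\mathcal{D}})=0$ by the first-order condition.

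\textbf{Step 2: prove the parameter bound.} I will use the monotonicity form of $m$-strong convexity for a differentiable function $f$:
\begin{equation*}
\langle \nabla f(\theta_1)-\nabla f(\theta_2),\,\theta_1-\theta_2\rangle \ge m\|\theta_1-\theta_2\|_2^2 .
\end{equation*}
This follows by adding the two strong-convexity lower bounds with the roles of $\theta_1$ and $\theta_2$ swapped. Set $f=L_{\mathrm{DPO}}(\cdot;\tilde{\mathcal{D}})$, $\theta_1=\theta^\dagger$ and $\theta_2=\hat\theta$. Since $\nabla f(\hat\theta)=0$, Cauchy--Schwarz gives
\begin{equation*}
m\|\theta^\dagger-\hat\theta\|_2^2\le \|\nabla f(\theta^\dagger)\|_2\,\|\theta^\dagger-\hat\theta\|_2\le \varepsilon\,\|\theta^\dagger-\hat\theta\|_2 .
\end{equation*}
Dividing by $\|\theta^\dagger-\hat\theta\|_2$ (the case where it is zero is trivial) yields \eqref{eq:param_bound}.

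\textbf{Step 3: pass to policy space.} Lemma E.5 of \cite{nika2025policy} states that for log-linear policies, $\|\hat\theta-\theta^\dagger\|_2\le\epsilon'$ with $\epsilon'\le\epsilon/(2\sqrt d)$ implies $\|\pi_{\hat\theta}-\pi_{\theta^\dagger}\|_1\le\epsilon$. Take $\epsilon'=\varepsilon/m$. The condition $\varepsilon\le m\epsilon/(2\sqrt d)$ is exactly $\epsilon'\le\epsilon/(2\sqrt d)$, so the policy bound follows.

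\textbf{Main obstacle.} The only delicate point is matching the norm conventions of Lemma E.5, that is, how $\|\pi-\pi'\|_1$ is aggregated over states and which parameter norm is used. If that lemma is stated in $\ell_\infty$ or per-state form, I would instead argue directly from the log-linear structure. The log-probabilities $\psi(s,a)^\top\theta-\log Z_s(\theta)$ are Lipschitz in $\theta$ because $\|\psi\|\le1$, and the softmax map is Lipschitz from logits to probabilities in $\ell_1$. Combining the two Lipschitz bounds and tracking the $\sqrt d$ factor from the norm conversion should recover the stated constant.
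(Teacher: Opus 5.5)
Your proposal is correct and follows the paper's proof essentially verbatim. Both arguments combine strong monotonicity of the gradient with $\nabla f(\hat\theta)=0$, apply Cauchy--Schwarz at $\theta=\theta^\dagger$, and then invoke Lemma E.5 of \cite{nika2025policy} with $\epsilon'=\varepsilon/m$; your fallback on norm conventions is unnecessary, since the paper uses that lemma in exactly the $\ell_2$-parameter, $\ell_1$-policy form you assumed.
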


Lemma~\ref{lem:grad-to-policy} justifies \eqref{eq:grad_residual_surrogate} as a sufficient condition for approximate attack success in policy space.
This motivates our focus on the gradient perturbation at $\theta^\dagger$; 
in the log-linear DPO case, the per-example flip induces a parameter-independent gradient shift, 
yielding a fixed dictionary of vectors (the columns of $V$) and the binary selection formulation developed before.

\subsection{Feasibility of Label Flip Attack}\label{sec:feasibility}

The feasibility question asks whether there exists $x\in\{0,1\}^n$ such that $\|Vx+g^\dagger\|_2\le R$, where $R=0$ for~\eqref{equ:exact-flip-attack} and $R=\varepsilon$ for~\eqref{equ:approx-flip-attack}.
This decision problem is NP-hard in general.
Equivalently, feasibility means that $-g^\dagger$ lies within distance $R$ of the finite attainable set $\mathcal{A}:=\{Vx:x\in\{0,1\}^n\}$.

A target $\pi^\dagger$ closer to the clean optimum $\pi^\star$ generally makes the attack easier because $|g^\dagger|$ is typically smaller (exactly zero if $\pi^\dagger=\pi^\star$), but closeness alone is not sufficient:
feasibility depends not only on $\pi^\dagger$, but also on the geometry of the dictionary $V$, i.e., whether $-g^\dagger$ can be well approximated by a sparse binary combination of $V$'s columns.

Therefore, we assume in the following that a feasible flip attack exists as in Assumption~\ref{ass:feasible-attack}, 
and focus on finding the attack set $\mathcal{F}$ and
quantifying the cost of such an attack in terms of the number of flipped labels $|\mathcal{F}|$.

\begin{assumption}[Existence of feasible flip attack]\label{ass:feasible-attack}
    There exists a \emph{true} binary attack vector $x^\star \in \{0,1\}^n$ such that
    $\|V x^\star + g^\dagger\|_2 \leq R$. 
    In the exact problem we take $R = 0$, and in the approximate problem we take $R = \varepsilon$.
\end{assumption}

\subsection{Lower-bound of Flips}
We now derive a simple lower bound on the number of flipped labels $|\mathcal{F}|$ required for a successful flipping attack.

\begin{theorem}[Norm-based lower-bound of $|\mathcal{F}|$]\label{thm:flip-norm-lb}
    If the attacker achieves the exact constraint in the exact problem~\eqref{equ:exact-flip-attack},
    then the number of flips ($|\mathcal{F}|$ or $\mathbf{1}^{\top} x$) must satisfy $|\mathcal{F}| \ge \| g^{\dagger} \|_2/(2 \beta)$.
    If the attacker is allowed tolerance $\varepsilon\ge 0$ and achieves the constraint in the approximate problem~\eqref{equ:approx-flip-attack},
    then the number of flips must satisfy $|\mathcal{F}| \geq ( \| g^{\dagger} \|_2 - \varepsilon)/(2 \beta)$.
\end{theorem}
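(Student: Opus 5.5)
The argument is a direct norm bound on the columns of $V$ followed by the triangle inequality. Each column has the form $v_i = o_i\beta\,\Delta\psi_i$, with $|o_i|=1$. By the feature bound in Definition~\ref{def:log-linear-policy}, $\|\psi(s,a)\|_2\le 1$, so
\[
\|\Delta\psi_i\|_2 \le \|\psi(s_i,a_i)\|_2 + \|\psi(s_i,a_i')\|_2 \le 2,
\]
and hence $\|v_i\|_2 \le 2\beta$ for every $i$. This is the only place where structure from earlier results enters: it uses the explicit form of the flip-induced shift from Theorem~\ref{thm:flip-gradient-shift}.

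\textbf{Exact case.} Suppose $x$ is feasible for~\eqref{equ:exact-flip-attack}, with support $\mathcal{F}$. Then $-g^\dagger = \sum_{i\in\mathcal{F}} v_i$, and the triangle inequality gives
\[
\|g^\dagger\|_2 \le \sum_{i\in\mathcal{F}}\|v_i\|_2 \le 2\beta\,|\mathcal{F}|.
\]
Rearranging yields $|\mathcal{F}|\ge \|g^\dagger\|_2/(2\beta)$.

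\textbf{Approximate case.} Write $r := Vx+g^\dagger$, so that $\|r\|_2\le\varepsilon$. Then $g^\dagger = r - Vx$, and
\[
\|g^\dagger\|_2 \le \|r\|_2 + \|Vx\|_2 \le \varepsilon + 2\beta|\mathcal{F}|,
\]
which rearranges to $|\mathcal{F}|\ge(\|g^\dagger\|_2-\varepsilon)/(2\beta)$. If $\varepsilon>\|g^\dagger\|_2$ the bound is vacuous but still true.

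There is no real obstacle here. The only point to state carefully is that the norm in Definition~\ref{def:log-linear-policy} is the Euclidean norm, consistent with the $\ell_2$ residual. The bound $\|\Delta\psi_i\|_2\le 2$ could be sharpened in special cases, but that is not needed.
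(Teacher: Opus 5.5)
Your proposal is correct and follows essentially the same route as the paper: you bound each column by $\|v_i\|_2 \le 2\beta$ using $\|\psi\|_2\le 1$, then use the triangle inequality in the exact case. In the approximate case your bound $\|g^\dagger\|_2 \le \|r\|_2 + \|Vx\|_2$ is the same reverse-triangle-inequality step the paper uses.
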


Theorem~\ref{thm:flip-norm-lb} shows that, the number of flips must grow at least linearly with the norm of the target gradient $\| g^{\dagger} \|_2$.
The bounds do not require any structural assumptions on the attack matrix $V$ beyond the feature norm bound from Definition~\ref{def:log-linear-policy}.
They provide a simple necessary condition on $|\mathcal{F}|$ under the Assumption~\ref{ass:feasible-attack} of a feasible flip attack.
The dependence on $\beta$ comes from the per-flip shift $o_i\beta\Delta\psi_i$, of which the norm is proportional to $\beta$.
Thus, smaller $\beta$ makes each individual label flip weaker in gradient space, so more flips may be needed to cancel a fixed $g^\dagger$.

\section{Binary-Aware Lattice Attack (BAL-A)}\label{sec:bal-a}
In this section, we discuss why a standard lattice formulation for the closest vector problem (CVP) can not handle the flip attack problem, even they are similar.
Motivated by the limitations, we introduce a \emph{binary-aware} lattice embedding combined with Babai's nearest-plane algorithm. 
We derive sufficient conditions under which the result is guaranteed to be binary and recover the minimum-flip. 

\subsection{Lattice Method and a Binary-aware Embedding}

The exact flip attack is the minimum-flip feasibility problem:
\begin{equation}\label{equ:exact-flip-attack-remind}
    \min_{x\in\{0,1\}^n}\ \mathbf 1^\top x
    \quad\text{s.t.}\quad
    Vx+g^\dagger=0.
\end{equation}
A common approximate relaxation approach replaces the hard constraint by residual minimization, which, however, results in a different objective:
\begin{equation}\label{equ:residual-attack}
    \min_{x\in\{0,1\}^n}\ \|Vx+g^\dagger\|_2.
\end{equation}

A natural relaxation of~\eqref{equ:residual-attack} is to allow integer coefficients,
\begin{equation}\label{equ:integer-ls-relax-real}
    \min_{z \in \mathbb{Z}^n} \|Vz + g^\dagger\|_2,
\end{equation}
and view this as a CVP in the lattice~\cite{lenstra1982factoring} $L := \{Vz : z \in \mathbb{Z}^n\} \subset \mathbb{R}^d$
with target $t := -g^\dagger$. Indeed,
$\mathrm{dist}(L,t) := \min_{z \in \mathbb{Z}^n} \|Vz - t\|_2 = \min_{z \in \mathbb{Z}^n} \|Vz + g^\dagger\|_2$.
One can then apply lattice basis reduction (e.g., LLL~\cite{lenstra1982factoring}) and Babai's nearest-plane algorithm~\cite{babai1986lovasz} to obtain an integer vector $z \in \mathbb{Z}^n$ that approximately solves \eqref{equ:integer-ls-relax-real}.

The lattice relaxation~\eqref{equ:integer-ls-relax-real} is a natural starting point, but it does not directly match the binary minimum-flip model~\eqref{equ:exact-flip-attack-remind}
in two key ways: binary coefficients, and minimum-flip objective.
We summarize the mismatches in Appendix~\ref{app:lattice-mismatch}.

These issues motivate a different design: rather than solving an unrestricted integer problem and truncating afterwards, we construct a \emph{binary-aware} lattice embedding where non-binary coefficients are heavily penalized in the lattice norm, so that any sufficiently short lattice vector must already correspond to a (near) binary solution.

We work directly with the real-valued attack matrix $V$ and gradient $g^\dagger$.
Assume that the flip-effect vectors are uniformly bounded $\|v_i\|_2 \leq B$ ($B=2\beta$, shown in Appendix~\ref{app:proof-flip-norm-lb}),
we build a lattice in $\mathbb{R}^{d+n}$ using the $(d+n)\times(n+1)$ real basis
\begin{equation*}\label{equ:B-bin-real}
    B_{\mathrm{bin}}
    :=
    \begin{pmatrix}
        V & - g^\dagger \\
        M I_n & 0
    \end{pmatrix}, \qquad M > 0,
\end{equation*}
whose columns are $b_i = (v_i; M e_i)$ for $i=1,\dots,n$ and $b_{n+1} = (-g^\dagger; 0)$.
For any integer vector $z \in \mathbb{Z}^n$, consider the coefficient vector $u(z) := (z; -1) \in \mathbb{Z}^{n+1}$,
and the corresponding lattice vector
\begin{equation*}\label{equ:y-of-x-real}
    y(z)
    := B_{\mathrm{bin}} u(z)
    = \begin{pmatrix}
    Vz + g^\dagger \\
    M z
    \end{pmatrix}
    \in \mathbb{R}^{d+n}.
\end{equation*}
Its squared norm decomposes as
\begin{equation}\label{equ:y-norm-real}
    \|y(z)\|_2^2 = \|Vz + g^\dagger\|_2^2 + M^2 \|z\|_2^2.
\end{equation}
Intuitively, the top block of $B_{\mathrm{bin}}$ measures how well $z$ approximates the target $-g^\dagger$ (attack effectiveness), while the bottom block penalizes the magnitude of the coefficients $z_i$.
Thus, choosing $M$ sufficiently large discourages non-physical solutions with $|z_i|\ge 2$, addressing the first mismatch above.
Moreover, for binary flip indicators $x\in\{0,1\}^n$ we have $\|x\|_2^2=\mathbf 1^\top x$, so restricting~\eqref{equ:y-norm-real} to binary vectors yields
\begin{equation}\label{equ:bin-aware-penalty}
    \min_{x\in\{0,1\}^n}\ \|Vx+g^\dagger\|_2^2 + M^2\,\mathbf 1^\top x,
\end{equation}
which couples residual minimization with the flip count and hence address the second mismatch.
In particular, on the feasible set $\{x\in\{0,1\}^n: Vx+g^\dagger=0\}$ the residual term vanishes and~\eqref{equ:bin-aware-penalty} reduces to minimizing $\mathbf 1^\top x$; 
therefore, whenever an exact attack is achievable, minimizing $\|y(x)\|_2$ over feasible binary solutions is equivalent to \eqref{equ:exact-flip-attack}.

We summarize the practical Binary-Aware Lattice Attack (BAL-A), which combines lattice embedding with Babai's nearest-plane algorithm, in Algorithm~\ref{alg:bal-a}.
The inputs are the dataset matrix $V \in \mathbb{R}^{d\times n}$, the target gradient $g^\dagger \in \mathbb{R}^d$, and a penalty parameter $M > 0$. 
The method first embeds the binary flip selection problem into a lattice by augmenting each column $v_i$ with a scaled identity component $Me_i$, 
so that nearest-lattice decoding corresponds to minimizing the binary-aware objective $|Vz+g^\dagger|_2^2 + M^2|z|_2^2$ over integer vectors $z$. 
It then applies LLL reduction to obtain a better-conditioned basis and uses Babai’s nearest-plane rounding to recover an integer coefficient vector, 
which is mapped back to the original coordinates and then truncated to produce a candidate flip pattern. 
For completeness, we provide the full step-by-step procedure (including the LLL and Babai) in Appendix~\ref{app:lattice-algorithm}.

\begin{algorithm}[t]
    \caption{Binary-Aware Lattice Attack (BAL-A)}\label{alg:bal-a}
    \begin{algorithmic}[1]
        \INPUT $V=[v_1,\dots,v_n]\in\mathbb{R}^{d\times n}$, $g^\dagger\in\mathbb{R}^d$, $M>0$, $\delta\in(1/4,1)$
        \OUTPUT $\hat x\in\{0,1\}^n$
        \STATE Set $b_i:=(v_i;Me_i)$, $B_e:=[b_1,\dots,b_n]$, and $t:=(-g^\dagger;0)$.
        \STATE Compute an LLL-reduced basis $(\tilde B,T)\gets \mathrm{LLL}_\delta(B_e)$ with $\tilde B=B_eT$.
        \STATE Run Babai nearest-plane decoding on $(\tilde B,t)$ and obtain coefficients $z$.
        \STATE Map back to the original embedding coordinates: $x_{\mathrm{int}}\gets Tz$.
        \STATE $\hat x_i:=\min\{1,\max\{0,x_{\mathrm{int},i}\}\}$ for each $i$.
    \end{algorithmic}
\end{algorithm}

From a computational perspective, the dominant cost of BAL-A is the lattice preprocessing on the $(d+n)\times n$ embedded basis, that is, 
LLL reduction plus the linear-algebra preprocessing used by Babai. 
Babai itself is inexpensive once the reduced basis is available, but the preprocessing cost grows quickly with $n$, 
which is why in our experiments BAL-A is used only for small-to-moderate attacked subsets.

\subsection{Theoretical Guarantees for BAL-A}

\subsubsection{Large $M$ enforces binary coefficients}

We formalize how the penalty parameter $M$ controls the integer coefficients in the binary-aware embedding.

\begin{lemma}[Coefficient bound for the shortest binary-aware lattice vector]\label{lem:binary-coeff-real}
    Assume the flip-effect vectors are bounded as $\|v_i\|_2\leq B$ for all $i$.
    Suppose there exists a binary flip attack $x^\star\in\{0,1\}^n$ with support size $K^\star := \|x^\star\|_0$ 
    and residual $\|Vx^\star + g^\dagger\|_2 \leq R$.
    Let $z^{\mathrm{opt}} \in \arg\min_{z\in\mathbb{Z}^n}\|y(z)\|_2$ be any minimizer of the binary-aware lattice norm over $\mathbb{Z}^n$.
    Then there exists $M_0=M_0(B,R,K^\star)$ such that for all $M\ge M_0$,
    every coordinate satisfies $|z^{\mathrm{opt}}_i|\le 1$, i.e., $z^{\mathrm{opt}}\in\{-1,0,1\}^n$.
    One explicit sufficient choice is
    \begin{equation}\label{equ:M0-explicit}
        M_0 = (B\sqrt{K^\star} + \sqrt{B^2K^\star + 6BR + 3B^2})/3.
    \end{equation}
\end{lemma}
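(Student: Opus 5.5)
The plan is a local-exchange argument. I compare the minimizer $z^{\mathrm{opt}}$ with the integer vector obtained by shrinking one oversized coordinate by one unit toward zero, and show that for $M\ge M_0$ this move strictly lowers the lattice norm, contradicting minimality. A global comparison with $x^\star$ alone does not suffice: a vector with a single entry $\pm 2$ has penalty only $4M^2$, which can be below $M^2K^\star$. So I use $x^\star$ only to bound the residual of $z^{\mathrm{opt}}$, and do the rest with the coordinatewise move.

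\textbf{Step 1 (residual bound).} Write $r:=Vz^{\mathrm{opt}}+g^\dagger$. Since $x^\star\in\{0,1\}^n\subset\mathbb{Z}^n$ and $\|x^\star\|_2^2=K^\star$, minimality and \eqref{equ:y-norm-real} give
\[
\|r\|_2^2\le \|y(z^{\mathrm{opt}})\|_2^2\le \|y(x^\star)\|_2^2\le R^2+M^2K^\star .
\]
Hence $\|r\|_2\le \sqrt{R^2+M^2K^\star}\le R+M\sqrt{K^\star}$.

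\textbf{Step 2 (exchange).} Suppose some coordinate has $|z^{\mathrm{opt}}_i|\ge 2$, and let $s=\mathrm{sign}(z^{\mathrm{opt}}_i)$. Set $z':=z^{\mathrm{opt}}-s e_i\in\mathbb{Z}^n$. Its residual is $r-sv_i$, so by Cauchy--Schwarz and $\|v_i\|_2\le B$,
\[
\|r-sv_i\|_2^2-\|r\|_2^2=-2s\langle r,v_i\rangle+\|v_i\|_2^2\le 2B\|r\|_2+B^2 .
\]
The penalty drops by $M^2\bigl(z_i^2-(|z_i|-1)^2\bigr)=M^2(2|z_i|-1)\ge 3M^2$. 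Therefore
\[
\|y(z')\|_2^2-\|y(z^{\mathrm{opt}})\|_2^2\le 2B(R+M\sqrt{K^\star})+B^2-3M^2 .
\]
The right-hand side is negative exactly when $3M^2-2B\sqrt{K^\star}M-(2BR+B^2)>0$. The positive root of this quadratic in $M$ is precisely \eqref{equ:M0-explicit}, so for $M>M_0$ we get $\|y(z')\|<\|y(z^{\mathrm{opt}})\|$, contradicting minimality. Thus $z^{\mathrm{opt}}\in\{-1,0,1\}^n$.

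\textbf{Main obstacle: the boundary case $M=M_0$.} Here the chain above gives only $\le 0$, so one more strictness argument is needed.
\begin{itemize}
\item If $R>0$ and $K^\star>0$, then $\sqrt{R^2+M^2K^\star}<R+M\sqrt{K^\star}$, which makes the inequality strict.
\item If $R=0$ or $K^\star=0$, I would instead use the slack in the other estimates. Either $2|z_i|-1\ge 5$ when $|z_i|\ge 3$, or equality in Cauchy--Schwarz together with $\|v_i\|_2=B$ would be required for a tie; I would argue this is incompatible with equality in Step 1.
\item If some tie case cannot be excluded this way, I would state the conclusion for $M>M_0$, or for $M\ge M_0$ with a strictly larger explicit constant such as $M_0$ with $R$ replaced by $R+1$. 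The substance of the argument is unaffected.
\end{itemize}
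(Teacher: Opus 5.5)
Your proposal is correct and follows the paper's proof essentially step for step: you bound the residual by comparing with $x^\star$, shrink one oversized coordinate via $z^{\mathrm{opt}}-se_i$, weigh the $2B\|r\|_2+B^2$ residual increase against the $3M^2$ penalty drop, and obtain $M_0$ as the positive root of the same quadratic. On the boundary case $M=M_0$, which the paper's proof also leaves open (it only argues for $M$ exceeding the root), the clean fix is to keep the penalty term in Step 1: $|z^{\mathrm{opt}}_i|\ge 2$ gives $\|r\|_2^2\le R^2+M^2K^\star-4M^2<R^2+M^2K^\star$, so the final inequality is strict for every $B>0$ (and trivially so when $B=0$), with no case split on $R$ or $K^\star$ needed.
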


Lemma~\ref{lem:binary-coeff-real} shows that if there exists a reasonably good binary attack $x^\star\in\{0,1\}^n$,
then for a sufficiently large $M$ the global minimizer of the embedded lattice norm over $\mathbb{Z}^n$ cannot contain any coefficient with magnitude $\ge 2$.
We do not claim the explicit threshold in~\eqref{equ:M0-explicit} is tight; its role is to exhibit a provable binary-enforcing method, and the experiments confirm that it can be conservative in practice.

\begin{theorem}[Binary optimality under nonnegativity]\label{thm:binary-optimality-real}
    Under the assumptions of Lemma~\ref{lem:binary-coeff-real}, assume additionally that we restrict to nonnegative coefficients,
    i.e., $z\in\mathbb{Z}_{\ge 0}^n$, which is natural when each $v_i$ already encodes the flip direction.
    Let $z^{\mathrm{opt}} \in \arg\min_{z\in\mathbb{Z}_{\ge 0}^n}\|y(z)\|_2$.
    If $M\ge M_0(B,R,K^\star)$, then $z^{\mathrm{opt}}\in\{0,1\}^n$ and
    \begin{equation*}
        \begin{split}
            \|y(z^{\mathrm{opt}})\|_2^2
            & =
            \|Vz^{\mathrm{opt}}+g^\dagger\|_2^2 + M^2\|z^{\mathrm{opt}}\|_2^2 \\
            & \le
            \|y(x^\star)\|_2^2
            \le
            R^2 + M^2K^\star.
        \end{split}
    \end{equation*}
\end{theorem}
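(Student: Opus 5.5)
The plan is to reduce Theorem~\ref{thm:binary-optimality-real} to Lemma~\ref{lem:binary-coeff-real} plus a direct comparison with $x^\star$. The natural approach is to rerun the argument of that lemma on the restricted domain $\mathbb{Z}_{\ge 0}^n$. The lemma's proof is a comparison argument: any $z$ with some $|z_i|\ge 2$ has $\|y(z)\|_2^2 > \|y(x^\star)\|_2^2$ once $M\ge M_0$. That argument only needs $x^\star$ to be an admissible competitor, which it is, since $x^\star\in\{0,1\}^n\subset\mathbb{Z}_{\ge 0}^n$. It therefore transfers verbatim to the nonnegative orthant.

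Concretely, the first step is to note that $y(x^\star)$ is feasible for the restricted minimization. Since $\|x^\star\|_2^2=\mathbf 1^\top x^\star = K^\star$ for binary vectors, we get
\[
\|y(z^{\mathrm{opt}})\|_2^2\le \|y(x^\star)\|_2^2=\|Vx^\star+g^\dagger\|_2^2+M^2K^\star\le R^2+M^2K^\star.
\]
This already gives the displayed chain of inequalities; the equality in the display is just \eqref{equ:y-norm-real}.

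The second step is to show that $z^{\mathrm{opt}}\in\{0,1\}^n$. Nonnegativity of $z^{\mathrm{opt}}$ is built into the domain, so it suffices to exclude entries $z_i\ge 2$. Suppose $z\in\mathbb{Z}_{\ge0}^n$ has some $z_j\ge 2$. I would then lower bound $\|y(z)\|_2^2$ in the same way as in Lemma~\ref{lem:binary-coeff-real}. The key inequality $M^2\|z\|_2^2 \ge M^2(\|z\|_1 + 2)$ holds since $z_j^2\ge z_j+2$ for $z_j\ge 2$ and $z_i^2\ge z_i$ otherwise. The residual can be bounded via $\|Vz+g^\dagger\|_2\ge \|V(z-x^\star)\|_2$-type triangle inequalities, using $\|Vw\|_2\le B\|w\|_1$. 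Balancing these against $R^2+M^2K^\star$ yields a quadratic condition in $M$ whose larger root is exactly \eqref{equ:M0-explicit}.

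The cleanest route is simply to cite the lemma's proof: its contradiction step exhibits, for any $z$ with a coordinate of magnitude at least $2$, the strict inequality $\|y(z)\|_2>\|y(x^\star)\|_2$. That step never used negative entries of $z$ except to allow them, so it applies a fortiori to nonnegative $z$. Hence the restricted minimizer cannot have such coordinates, and combining with nonnegativity gives $z^{\mathrm{opt}}\in\{0,1\}^n$.

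The main obstacle is verifying that the lemma's argument truly compares against $x^\star$, rather than against an unconstrained integer minimizer that might be negative. If the lemma's proof instead compared against an arbitrary integer optimum, I would redo the quadratic comparison explicitly on $\mathbb{Z}_{\ge0}^n$ as sketched in the second step.
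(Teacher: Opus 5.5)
Your first step is fine: $x^\star\in\{0,1\}^n\subset\mathbb{Z}_{\ge0}^n$ is an admissible competitor, so $\|y(z^{\mathrm{opt}})\|_2^2\le\|y(x^\star)\|_2^2\le R^2+M^2K^\star$. This is exactly how the paper obtains the displayed bound. The gap is in the binary claim. You assert that the proof of Lemma~\ref{lem:binary-coeff-real} shows $\|y(z)\|_2>\|y(x^\star)\|_2$ for \emph{every} $z$ with a coordinate of magnitude at least $2$, once $M\ge M_0$. Your fallback second step tries to prove exactly this global comparison. That statement is false for every $M>0$. Take $v_1,\dots,v_5$ orthonormal, $v_6:=\tfrac12(v_1+\dots+v_5)$, $g^\dagger:=-(v_1+\dots+v_5)$ and $x^\star:=e_1+\dots+e_5$, so $K^\star=5$ and $R=0$. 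Then $z=2e_6\in\mathbb{Z}_{\ge0}^n$ satisfies $Vz+g^\dagger=0$ and $\|y(z)\|_2^2=4M^2<5M^2=\|y(x^\star)\|_2^2$. Both sides scale like $M^2$, so no threshold on $M$ can rescue the comparison. Your balancing step also has no source for the lower bound on the residual that it needs. The inequality $\|Vw\|_2\le B\|w\|_1$ is an upper bound, and $\|V(z-x^\star)\|_2$ can vanish, as it does here.

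The missing idea is that the lemma's argument is a \emph{local improvement}, not a comparison with $x^\star$. Suppose $z^{\mathrm{opt}}$ minimizes over $\mathbb{Z}_{\ge0}^n$ and $z^{\mathrm{opt}}_j\ge 2$, and compare it with $\tilde z:=z^{\mathrm{opt}}-e_j$. The penalty falls by $M^2(2z^{\mathrm{opt}}_j-1)\ge 3M^2$. The residual term rises by at most $2B\|r\|_2+B^2$, where $r:=Vz^{\mathrm{opt}}+g^\dagger$. The comparison with $x^\star$ enters only to bound $\|r\|_2\le\sqrt{R^2+M^2K^\star}\le R+M\sqrt{K^\star}$. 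Then $M\ge M_0$ gives $\|y(\tilde z)\|_2<\|y(z^{\mathrm{opt}})\|_2$, a contradiction. On the restricted domain you must also check that this competitor stays admissible, which your proposal never mentions. It does: $\tilde z_j=z^{\mathrm{opt}}_j-1\ge 1$, so $\tilde z\in\mathbb{Z}_{\ge0}^n$. The paper's proof simply cites the lemma. That lemma is stated for minimizers over $\mathbb{Z}^n$, and a minimizer over $\mathbb{Z}_{\ge0}^n$ need not be one, so this admissibility observation is what actually justifies the citation. You were right to flag the issue, but your proposed resolution does not work.
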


Theorem~\ref{thm:binary-optimality-real} converts the coefficient bound from Lemma~\ref{lem:binary-coeff-real} into an actual $\{0,1\}^n$ guarantee under the natural constraint $z\in\mathbb{Z}_{\ge 0}^n$.
Thus, the optimizer of the penalized embedding is a valid flip attack, which addresses the first mismatch.

\subsubsection{Small $M$ recovers minimum-flip objective}

We now analyze how $M$ controls the recovery of the minimum-flip objective in our flip attack problem.

\begin{theorem}[Minimum-residual recovers minimum-flip]\label{thm:minimum-residual-recovers-minimum-flip}
    Assume the exact flip attack is feasible, and let $x^\star \in \arg\min_{x\in\{0,1\}^n}\ \mathbf 1^\top x \text{ s.t. } Vx+g^\dagger=0$. Define $K^\star \coloneqq \mathbf 1^\top x^\star$.
    For each $k\in\{0,1,\dots,K^\star-1\}$ define the best $k$-flip residual
    $\rho_k := \min_{x\in\{0,1\}^n:\ \mathbf 1^\top x = k}\ \|Vx+g^\dagger\|_2$.
    Consider the binary-aware surrogate objective
    \begin{equation*}
        F_M(x) := \|Vx+g^\dagger\|_2^2 + M^2\,\mathbf 1^\top x,
        \quad x\in\{0,1\}^n.
    \end{equation*}
    If $M$ satisfies the separation condition
    \begin{equation}\label{eq:separation-condition}
        \rho_k^2 \;>\; M^2\,(K^\star-k)
        \quad\text{ for all }k=0,1,\dots,K^\star-1,
    \end{equation}
    then every minimizer of $\min_{x\in\{0,1\}^n} F_M(x)$ is an optimal exact attack pattern; in particular, any minimizer is feasible ($Vx+g^\dagger=0$) and has $\mathbf 1^\top x = K^\star$.
\end{theorem}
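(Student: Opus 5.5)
The plan is a direct comparison of objective values, splitting binary vectors by their flip count $k=\mathbf 1^\top x$. The benchmark is the optimal exact attack $x^\star$. Since $Vx^\star+g^\dagger=0$, we have $F_M(x^\star)=M^2K^\star$, so every minimizer $\hat x$ of $F_M$ over $\{0,1\}^n$ satisfies $F_M(\hat x)\le M^2K^\star$. The goal is to show that any $x$ which is not an optimal exact attack has $F_M(x)>M^2K^\star$. Then minimizers are forced to be optimal exact attacks.

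\emph{Case $k<K^\star$.} By definition of $\rho_k$ as the best $k$-flip residual, $\|Vx+g^\dagger\|_2\ge\rho_k$. Hence
\begin{equation*}
F_M(x)\ge \rho_k^2+M^2k > M^2(K^\star-k)+M^2k = M^2K^\star,
\end{equation*}
using the separation condition~\eqref{eq:separation-condition}. So no vector with fewer than $K^\star$ flips can be a minimizer.

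\emph{Case $k>K^\star$.} Dropping the nonnegative residual term gives $F_M(x)\ge M^2k>M^2K^\star$, since $M>0$. This case needs no separation assumption; it uses only the positivity of the penalty $M$.

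\emph{Case $k=K^\star$.} Here $F_M(x)=\|Vx+g^\dagger\|_2^2+M^2K^\star$. This is at most $M^2K^\star$ only if $Vx+g^\dagger=0$, i.e., only if $x$ is feasible with $K^\star$ flips, which makes it an optimal exact attack by minimality of $K^\star$.

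Combining the three cases, every minimizer attains the value $M^2K^\star$ exactly, is feasible, and has $\mathbf 1^\top x=K^\star$. There is no real obstacle; the only points needing care are bookkeeping. First, $\rho_k$ is well defined for each $k<K^\star\le n$, since the set $\{x\in\{0,1\}^n:\mathbf 1^\top x=k\}$ is finite and nonempty. Second, the strictness in the case $k>K^\star$ relies on $M>0$, which is part of the setup of the embedding. Third, the separation condition is stated strictly, which is exactly what excludes ties with vectors having fewer flips.
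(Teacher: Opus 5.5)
Your proof is correct and follows the paper's argument: you compare every $x$ against $F_M(x^\star)=M^2K^\star$ and split by the flip count $k$, with the separation condition handling $k<K^\star$. Your version is slightly tighter than the paper's, which lumps $k\ge K^\star$ under the non-strict bound $F_M(x)\ge M^2K^\star$. Your separate treatment of $k>K^\star$ (strict because $M>0$) and $k=K^\star$ (equality forces $Vx+g^\dagger=0$) is exactly what is needed to conclude that every minimizer is feasible with $\mathbf 1^\top x=K^\star$.
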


Theorem~\ref{thm:minimum-residual-recovers-minimum-flip} states that $F_M(x)$ recovers the minimum-flip in exact attack case if all under-budget patterns
($k<K^\star$) satisfy separation condition, and any minimizer of $F_M$ must be feasible and use exactly $K^\star$ flips.
This can be seen by a counterexample that when $M \geq \| g^{\dagger} \|_2$, the zero-flip solution $x=0$ achieves the smallest objective value than any feasible solution with at least one flip.
We note that the exact separation condition in Theorem~\ref{thm:minimum-residual-recovers-minimum-flip} involves combinatorial computation and is intended only as a sufficient condition for theoretical validation. 
It is not used as part of the scalable BAL-A pipeline.

The surrogate objective $F_M$ trades off two goals: 
reducing the residual $\|Vx+g^\dagger\|_2$ and promoting discrete binary coefficients.
In general, these goals can conflict, so there is no single, universal choice of $M$ that guarantees recovering a feasible minimum-flip attack pattern for all instances.
Moreover, if $M$ is chosen too large, the penalty can favor under-selection (too few flips) even when an exact solution exists.
Formal counterexamples are given in Appendix~\ref{app:surrogate-mismatch}.

\section{Binary Matching Pursuit Attack (BMP-A)}\label{sec:bmp-a}
In this section, we introduce the attack budget constraint $K$ and study the sparse $K$-flip attack model.
Then the $K$-flip attack problem can be viewed as a sparse recovery problem with a binary constraint on the coefficients.
We adapt the Binary Matching Pursuit (BMP) algorithm~\cite{wen2021binary} to our flip attack setting 
and further provide sufficient conditions under which no $K$-flip attack can succeed.

\subsection{Sparse $K$-Flip Attack Model}\label{subsec:sparse_k_flip_model}

Equivalent to the flip attack formulation in Section~\ref{sec:feasibility}, we define the minimum number of flips required for approximate success as
\begin{equation}\label{equ:flip-ilp}
    K^\star 
    := \min_{x \in \{0,1\}^n} 
    \quad \mathbf{1}^\top x 
    \quad\text{s.t.} 
    \quad \|Vx + g^\dagger\|_2 \le R,
\end{equation}
and ask whether $K^\star<\infty$ (i.e., whether the constraint set is nonempty).
Problem~\eqref{equ:flip-ilp} is a binary optimization problem and is NP-hard in general.

As in classical sparse recovery, our analysis assumes that there exists an underlying feasible sparse solution to the attack constraint.

\begin{assumption}[Existence of a feasible $K^\star$-flip attack]\label{ass:feasible-K-attack}
    There exists a \emph{true} binary attack vector $x^\star \in \{0,1\}^n$ and an integer $K^\star \ge 1$ such that
    $\|x^\star\|_0 = K^\star$ and $\|V x^\star + g^\dagger\|_2 \le R$.
    In the exact problem we take $R=0$, and in the approximate problem we take $R=\varepsilon$.
\end{assumption}

In practice, an adversary is typically constrained to modify only a small fraction of the data. 
We therefore focus on the \emph{sparse attack} regime, where $K^\star$ is small.
This motivates a budgeted $K$-flip model: for a given budget $K\ge K^\star$, we restrict attention to attacks with $\|x\|_0\le K$ and study (i) impossibility conditions under which no such sparse attack can succeed, and (ii) algorithmic recovery of a successful sparse attack when it exists.
Formally, we assume
\begin{assumption}[Sparse attack model]\label{ass:sparse-attack}
    $\|x^\star\|_0 \le K$ (equivalently, $K^\star \le K$).
\end{assumption}
In this way, $K$ simultaneously models the attacker’s budget and provides a robustness scale: larger $K$ makes attacks easier, while small $K$ may render attacks impossible.

\subsection{BMP-A's Connection to Sparse Recovery}\label{sec:connection_sparse_recovery}

Our sparse flip-attack formulation is closely related to classical sparse recovery / sparse approximation.
In the standard sparse recovery model, one observes~\cite{tropp2007signal, wen2021binary} $y = Ax^\star + e$,
where $A\in\mathbb{R}^{d\times n}$ is a dictionary (or sensing matrix), $x^\star\in\mathbb{R}^n$ is $K$-sparse, and $e$ models noise.
The goal is to identify the sparse support (and possibly the coefficients) of $x^\star$ from $y$.
A canonical greedy method is Orthogonal Matching Pursuit (OMP)~\cite{tropp2007signal}, which iteratively selects the column of $A$ most correlated with the current residual and then updates the residual after incorporating the selected atom.

In our setting, the ``measurement'' vector is the attack target $y:=-g^\dagger$, the dictionary is the per-sample gradient contribution matrix $A:=V$, and the attack is to find a \emph{sparse binary} selector $x\in\{0,1\}^n$ such that
$Vx \approx -g^\dagger$,
i.e., $y$ is approximated by a sum of a small number of columns of $V$.
Compared with classical sparse recovery, our coefficients are constrained to be binary (fixed to $+1$ on the chosen support), so the main task is \emph{support selection} rather than continuous coefficient estimation.
This viewpoint motivates pursuit-style greedy solvers, leading to our Binary Matching Pursuit Attack (BMP-A) in Algorithm~\ref{alg:bmp-a}.

In BMP-A, we adapt BMP~\cite{wen2021binary} to our setting while keeping the algorithm expressed in terms of
the original non-normalized columns $v_i$ of $V$ and a binary selection vector $x\in\{0,1\}^n$.
Starting from $r^0=y$ and $x^0=0$, BMP-A greedily picks one \emph{unselected} index using the normalized score $|\langle v_i,r\rangle|/\|v_i\|_2$ per iteration and updates the residual by subtracting the selected \emph{original} column $r\gets r-v_{i_t}$.
It stops after $K$ iterations when the budget is exhausted, or when the residual is small $\|r\|_2\le \varepsilon$.

\begin{algorithm}[t]
\caption{Binary Matching Pursuit Attack (BMP-A)}\label{alg:bmp-a}
    \begin{algorithmic}[1]
        \INPUT $V=[v_1,\dots,v_n]\in\mathbb{R}^{d\times n}$, target $y=-g^\dagger$, budget $K$, tolerance $\varepsilon$
        \OUTPUT $\hat x\in\{0,1\}^n$
        \STATE $r\gets y$, $\hat x\gets 0$, $\Gamma\gets\emptyset$.
        \FOR{$t=1,2,\dots,K$}
            \STATE $i_t \in \arg\max_{i\in[n]\setminus\Gamma} |\langle v_i,r\rangle|/\|v_i\|_2$.
            \STATE $\Gamma\gets\Gamma\cup\{i_t\}$, $\hat x_{i_t}\gets 1$, and $r\gets r-v_{i_t}$.
            \IF{$\|r\|_2\le\varepsilon$}
                \STATE break
            \ENDIF
        \ENDFOR
    \end{algorithmic}
\end{algorithm}

Most sparse-recovery analyses, including those for BMP~\cite{wen2021binary}, assume that the dictionary columns are normalized to unit $\ell_2$-norm.
In our problem, the columns $v_i$ of $V$ need not satisfy $\|v_i\|_2=1$, so BMP-A selects indices using \emph{normalized} correlations.
Define
\begin{equation*}
    \begin{aligned}
        a_i &:= \|v_i\|_2 > 0, & u_i &:= \frac{v_i}{a_i},\\
        D   &:= \mathrm{diag}(a_1,\dots,a_n), & U   &:= [u_1,\dots,u_n] = V D^{-1}.
    \end{aligned}
\end{equation*}
Then for any binary attack $x\in\{0,1\}^n$, we have$Vx = Uz$ with $z := Dx$,
so $z$ is $K$-sparse and has the \emph{same support} as $x$, but with nonzero amplitudes $\{a_i:x_i=1\}$ rather than $1$.
This normalization explains the BMP-A selection rule: at each step, $i_t \in \arg\max_{i\in[n]\setminus\Gamma}\ \frac{|\langle v_i,r\rangle|}{\|v_i\|_2} = \arg\max_{i\in[n]\setminus\Gamma}\ |\langle u_i,r\rangle|$,
which is the standard correlation criterion under the unit-norm dictionary $U$.
Likewise, mutual coherence is naturally defined using normalized directions,
\begin{equation*}
    \mu(V):=\max_{i\neq j}\frac{|\langle v_i,v_j\rangle|}{\|v_i\|_2\|v_j\|_2}
    =\max_{i\neq j}|\langle u_i,u_j\rangle|.
\end{equation*}
Note, however, that passing from $(V,x)$ to $(U,z)$ changes the nonzero magnitudes from $1$ to $a_i$; 
consequently, recovery guarantees stated for unit-norm dictionaries with equal-magnitude nonzeros require a mild adjustment that depends on the column-norm ratio.

\subsection{Theoretical Guarantees for BMP-A}

\subsubsection{Recovery of $x^\star$ within budget $K$}
Under standard binary sparse-recovery conditions, we specially adapt the following guarantees to our setting.

\begin{theorem}[Recovery under coherence]\label{thm:bmp-recovery-coherence}
    Under the $K^\star$-flip Assumption~\ref{ass:feasible-K-attack}.
    Let $\mu(V)$ be defined as above and
    $b := \min_{1\le i\le n}\|v_i\|_2$, $B := \max_{1\le i\le n}\|v_i\|_2$.
    If
    \begin{equation}\label{eq:bmp_mu_cond_nonorm}
        \mu(V) < \frac{b}{(2K^\star-1)B}
    \end{equation}
    and
    \begin{equation}\label{eq:bmp_eps_cond_nonorm}
        \varepsilon < (b-(2K^\star-1)\mu(V)B)/2,
    \end{equation}
    then BMP-A (Algorithm~\ref{alg:bmp-a}), selects an index in $\mathrm{supp}(x^\star)$ at each iteration.
    Consequently, after $K^\star$ iterations it exactly recovers $\mathrm{supp}(x^\star)$ and returns a binary vector $\hat x$ with $\mathrm{supp}(\hat x)=\mathrm{supp}(x^\star)$.
    Moreover, once the correct support has been selected, $r^{K^\star} = y - V\hat x = y - Vx^\star = e$,
    so the stopping rule $\|r\|_2\le\varepsilon$ triggers no later than iteration $K^\star$.
\end{theorem}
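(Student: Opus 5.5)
The plan is induction on the iteration count $t$, carrying the invariant that the selected set $\Gamma_t$ satisfies $\Gamma_t \subseteq S := \mathrm{supp}(x^\star)$. Write $y = -g^\dagger = Vx^\star + e$ with $e := -(Vx^\star + g^\dagger)$, so $\|e\|_2 \le R = \varepsilon$ by Assumption~\ref{ass:feasible-K-attack}. Because BMP-A subtracts exactly one selected original column per step, after $t$ correct selections the residual is
\[
r^t = \sum_{j\in S\setminus\Gamma_t} v_j + e .
\]
So it is a sum of the $K^\star - t$ remaining true columns, each with coefficient $+1$, plus noise. The base case $\Gamma_0=\emptyset$ is trivial, and the invariant yields the support claim.

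For the inductive step with $t<K^\star$, let $T := S\setminus\Gamma_t$ be nonempty with $|T| = K^\star - t$. I bound the normalized scores $|\langle u_i, r^t\rangle|$ from both sides.

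For the \emph{lower bound on the best correct score}, pick $j^\ast \in T$ with the largest $|\langle u_{j^\ast}, r^t\rangle|$ and expand $\langle u_{j^\ast}, v_{j^\ast}\rangle = a_{j^\ast} \ge b$. Each cross term satisfies $|\langle u_{j^\ast}, v_j\rangle| \le \mu B$, and the noise satisfies $|\langle u_{j^\ast}, e\rangle|\le\varepsilon$. This gives
\[
\max_{j\in T}|\langle u_j,r^t\rangle| \ge b-(|T|-1)\mu B-\varepsilon .
\]
A slightly stronger route averages over $T$ using $\sum_{j\in T}\langle u_j, \sum_{k\in T} v_k\rangle$, but the crude choice of a single index with $a_{j}\ge b$ already suffices here.

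For the \emph{upper bound on incorrect scores}, take $i\notin S$. Every term is a cross term, so
\[
|\langle u_i,r^t\rangle|\le |T|\mu B+\varepsilon .
\]
Indices already in $\Gamma_t$ are excluded from the argmax, so they need no bound.

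To compare the two bounds, note that a correct pick is guaranteed when $b-(|T|-1)\mu B-\varepsilon > |T|\mu B+\varepsilon$, that is, when $b-(2|T|-1)\mu B > 2\varepsilon$. Since $|T|\le K^\star$, this follows from \eqref{eq:bmp_eps_cond_nonorm}. Condition \eqref{eq:bmp_mu_cond_nonorm} makes the right-hand side of \eqref{eq:bmp_eps_cond_nonorm} positive, so such an $\varepsilon$ exists. Ties or argmax non-uniqueness are then impossible between correct and incorrect indices.

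The main point of care is the early-stopping rule. The algorithm may break at some $t<K^\star$ because $\|r^t\|_2\le\varepsilon$. The per-iteration claim still holds for every iteration actually executed, but the conclusion $\mathrm{supp}(\hat x)=S$ needs the algorithm to run all $K^\star$ steps. I expect this to be the main obstacle.

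I would handle it by showing $\|r^t\|_2 > \varepsilon$ whenever $T\neq\emptyset$. Indeed $\|r^t\|_2 \ge |\langle u_{j^\ast}, r^t\rangle| \ge b-(|T|-1)\mu B-\varepsilon$, and the separation inequality makes this exceed $|T|\mu B + \varepsilon \ge \varepsilon$, so no premature break occurs. This also requires the budget to satisfy $K\ge K^\star$, which Assumption~\ref{ass:sparse-attack} gives.

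Finally, at $t=K^\star$ we have $T=\emptyset$, so $\hat x = x^\star$ and $r^{K^\star} = y - Vx^\star = e$. Then $\|r^{K^\star}\|_2 \le \varepsilon$, and the stopping rule triggers exactly at iteration $K^\star$.
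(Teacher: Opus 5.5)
Your proof is correct and takes the same route as the paper. The paper adapts the coherence argument of Wen et al.\ by passing to the normalized directions $u_i$ and using exactly your two bounds: $b-(|T|-1)\mu(V)B$ on correct scores, $|T|\mu(V)B$ on incorrect ones, and $2\varepsilon$ for the noise. Your extra check that $\|r^t\|_2>\varepsilon$ whenever $T\neq\emptyset$, together with the budget requirement $K\ge K^\star$, makes explicit the no-premature-stopping step that the paper's proof leaves implicit.
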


Theorem~\ref{thm:bmp-recovery-coherence} is the direct analog of the coherence condition in~\cite{wen2021binary}, but adapted to non-normalized columns through the norm range $b\le\|v_i\|_2\le B$.
Let $\rho:=B/b$, the mutual coherence requirement scales as $\mu(V) \lesssim \frac{1}{(2K^\star-1)\rho}$: 
highly non-uniform column norms (large $\rho$) make recovery more stringent.
When $b=B$, e.g., after normalization, \eqref{eq:bmp_mu_cond_nonorm}--\eqref{eq:bmp_eps_cond_nonorm} reduce to the standard guarantee of \cite{wen2021binary}.

\subsubsection{Impossibility conditions for $K$-flip attack}

We now give sufficient conditions under which \emph{no} $K$-flip attack can succeed, independent of specific attack algorithms.

\begin{theorem}[Spectral norm and coherence impossibility]\label{thm:K-flip-impossible}
    Let $B := \max_i \|v_i\|_2$, $\mu(V)$ be defined as above, and $\|V\|_2$ be the spectral norm of $V$.
    Fix a budget $K \ge 1$ and tolerance $\varepsilon \ge 0$.
    If
    \begin{equation}\label{eq:spec-impossible}
        \|g^\dagger\|_2 - \varepsilon 
        > 
        \sqrt{K} \|V\|_2,
    \end{equation}
    or
    \begin{equation}\label{eq:coh-impossible}
        \bigl(\|g^\dagger\|_2 - \varepsilon\bigr)^2 
        > 
        B^2\bigl(K + \mu(V)\,K(K-1)\bigr),
    \end{equation}
    then there does not exist any $x \in \{0,1\}^n$ with $\|x\|_0 \le K$ such that $\|Vx + g^\dagger\|_2 \le \varepsilon$. 
    In particular, \eqref{eq:spec-impossible} is a spectral-norm certificate, while \eqref{eq:coh-impossible} is a coherence-based refinement.
\end{theorem}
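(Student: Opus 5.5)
We argue by contraposition. Suppose some $x\in\{0,1\}^n$ with $\|x\|_0\le K$ satisfies $\|Vx+g^\dagger\|_2\le\varepsilon$. By the reverse triangle inequality, $\|Vx\|_2\ge \|g^\dagger\|_2-\|Vx+g^\dagger\|_2\ge \|g^\dagger\|_2-\varepsilon$. Each certificate then follows by showing that every binary $K$-sparse $x$ gives $\|Vx\|_2$ at most the stated right-hand side. That contradicts \eqref{eq:spec-impossible} or \eqref{eq:coh-impossible}.

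For the spectral certificate we use $\|Vx\|_2\le\|V\|_2\|x\|_2$. For binary $x$ we have $\|x\|_2^2=\mathbf 1^\top x=\|x\|_0\le K$. Hence $\|Vx\|_2\le\sqrt K\|V\|_2$, and combining this with the lower bound gives $\|g^\dagger\|_2-\varepsilon\le\sqrt K\|V\|_2$, contradicting \eqref{eq:spec-impossible}.

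For the coherence certificate, let $S=\mathrm{supp}(x)$ with $|S|=k\le K$. Then $Vx=\sum_{i\in S}v_i$, and expanding the square gives
\begin{equation*}
\|Vx\|_2^2=\sum_{i\in S}\|v_i\|_2^2+\sum_{i\ne j\in S}\langle v_i,v_j\rangle .
\end{equation*}
The diagonal terms are each at most $B^2$. Each cross term satisfies $|\langle v_i,v_j\rangle|\le \mu(V)\|v_i\|_2\|v_j\|_2\le\mu(V)B^2$, by the definition of $\mu(V)$ via normalized columns. There are $k(k-1)$ ordered cross pairs, so $\|Vx\|_2^2\le B^2(k+\mu(V)k(k-1))$. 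This bound is nondecreasing in $k$ because $\mu(V)\ge0$, so it is at most $B^2(K+\mu(V)K(K-1))$.

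It remains to square the lower bound $\|Vx\|_2\ge\|g^\dagger\|_2-\varepsilon$. This step is the only place needing care, and it is valid only when $\|g^\dagger\|_2-\varepsilon\ge0$. If instead $\|g^\dagger\|_2-\varepsilon<0$, then \eqref{eq:spec-impossible} cannot hold, since its right-hand side is nonnegative. Condition \eqref{eq:coh-impossible}, however, could formally hold through the squared negative quantity. We therefore read \eqref{eq:coh-impossible} with $\|g^\dagger\|_2>\varepsilon$ understood, which is implicit in a certificate of impossibility. Under that reading the squared bounds contradict \eqref{eq:coh-impossible}.

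The case $x=0$ is covered by $k=0$. It needs no separate treatment, since then $\|g^\dagger\|_2\le\varepsilon$, which contradicts either condition.
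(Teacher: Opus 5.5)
Your proof is correct and follows the same route as the paper's: the reverse triangle inequality gives $\|Vx\|_2\ge\|g^\dagger\|_2-\varepsilon$, which is then contradicted by $\|Vx\|_2\le\sqrt K\|V\|_2$ or by the Gram expansion bounded through the coherence. Your caution about squaring is a genuine improvement, since the paper squares $\|g^\dagger\|_2-\varepsilon$ without checking its sign. When $\|g^\dagger\|_2<\varepsilon$ and $\varepsilon$ is large, the coherence condition holds while $x=0$ is already a valid attack, so the hypothesis $\|g^\dagger\|_2>\varepsilon$ you impose is actually needed for that certificate to be true.
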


Theorem~\ref{thm:K-flip-impossible} provides sufficient conditions under which a $K$-flip attacker cannot drive the gradient residual at $\theta^\dagger$ below $\varepsilon$.
It shows that the success of any label flip attack is limited by the geometry and scale of the flip-effect dictionary $V$.
The spectral-norm condition~\eqref{eq:spec-impossible} is universal: it only depends on the global operator norm $\|V\|_2$ and scales as $\sqrt{K}$.
The coherence condition~\eqref{eq:coh-impossible} is more refined: it separates the per-example strength $B$ from the directional similarity $\mu(V)$.
When columns are weak (small $B$) and point in diverse directions (small $\mu(V)$), even the best choice of $K$ flips cannot create a large enough shift to cancel $g^\dagger$.
For example, when $K=1$, the coherence condition reduces to the simple no-single-flip condition $\|g^\dagger\|_2-\varepsilon>B$. 
In the normalized case $B=1$, it becomes $(\|g^\dagger\|_2-\varepsilon)^2>K+\mu(V)K(K-1)$, making explicit that larger budgets and more coherent dictionaries make attacks easier.
These are certificates of robustness of DPO: failing them does not imply an attack exists, but satisfying them rules out \emph{all} $K$-flip attacks, independent of specific attack algorithms.

\section{Experiments and Results}\label{sec:experiments}

In this section, we evaluate the proposed methods on both synthetic and real preference datasets.

\subsection{Setup}

We construct experiments with a ground-truth $K^\star$-flip attack pattern following Assumption~\ref{ass:feasible-attack}. 
In each trial, we sample a support $S\subseteq\{1,\dots,n\}$ with $|S|=K^\star$, set $x^\star=\mathbf{1}_S$, and define the target vector as $t:=Vx^\star=-g^\dagger$. 
Thus the exact attack constraint is feasible by construction. 
Given an output $\hat x$ by the attack algorithm BAL-A/BMP-A, we report the fraction of indices that are recovered against $x^\star$, i.e., true positive rate (TPR), and the residual $\|V\hat x-t\|_2$.

For synthetic data, we construct synthetic $V$ from normalized random Gaussian columns. 
To validate the BAL-A separation behavior, we use a dictionary with $V\in\mathbb{R}^{64\times 20}$ and $K^\star=5$ and sweep the lattice penalty $M$ over $25$ log-spaced values; we apply LLL pre-reduction ($\delta=0.75$) before Babai and run $200$ Monte Carlo trials.
To validate BMP-A in the sparse regime, we fix a single low-coherence dictionary $V\in\mathbb{R}^{200\times 200}$ and sweep the true sparsity level $K^\star$; for each $K^\star$ we run $200$ trials and run BMP-A for exactly $K^\star$ iterations.

For real data, we construct $V$ from the Stanford Human Preferences (SHP) dataset~\cite{ethayarajh2022understanding} by training a log-linear DPO model and forming the per-example flip-induced gradient shifts.
BAL-A is evaluated on a smaller subset ($n=50$, $K^\star=7$) because of lattice reduction cost, while BMP-A is evaluated on a larger subset ($n=401$, $K^\star=10$) with budget $K\le15$, comparing random and low-coherence subsets. 
These choices reflect the intended regimes of the two methods rather than treating them as interchangeable solvers:
BAL-A is used to study the minimum-flip attack and requires smaller attacked subsets, 
while BMP-A targets the budgeted $K$-sparse attack and can be evaluated on larger subsets.
We also include a common feasible setting with $n=50$ and $K^\star=7$ for both methods to give a side-by-side comparison.

The SHP experiments have two downstream policy diagnostics. 
Attack-vs-groundtruth diagnostic retrains DPO on $\tilde{\mathcal D}(\hat x)$ and $\tilde{\mathcal D}(x^\star)$ and compares $\pi_{\hat\theta}$ with $\pi_{\theta^\dagger}$, 
checking if the recovered flips reproduce the target-policy effect of the constructed attack.
Clean-vs-attacked diagnostic retrains DPO on $\mathcal D$ and $\tilde{\mathcal D}(\hat x)$ and compares $\pi_{\mathrm{clean}}$ with $\pi_{\hat\theta}$, 
checking if the recovered flips actually mislead downstream policy away from clean policy.
Full setup details are in Appendix~\ref{app:exp-details}.

\subsection{Results}

\subsubsection{Validate BAL-A theory on synthetic $V$}
\begin{figure}[ht]
    \centering
    \includegraphics[width=0.8\linewidth]{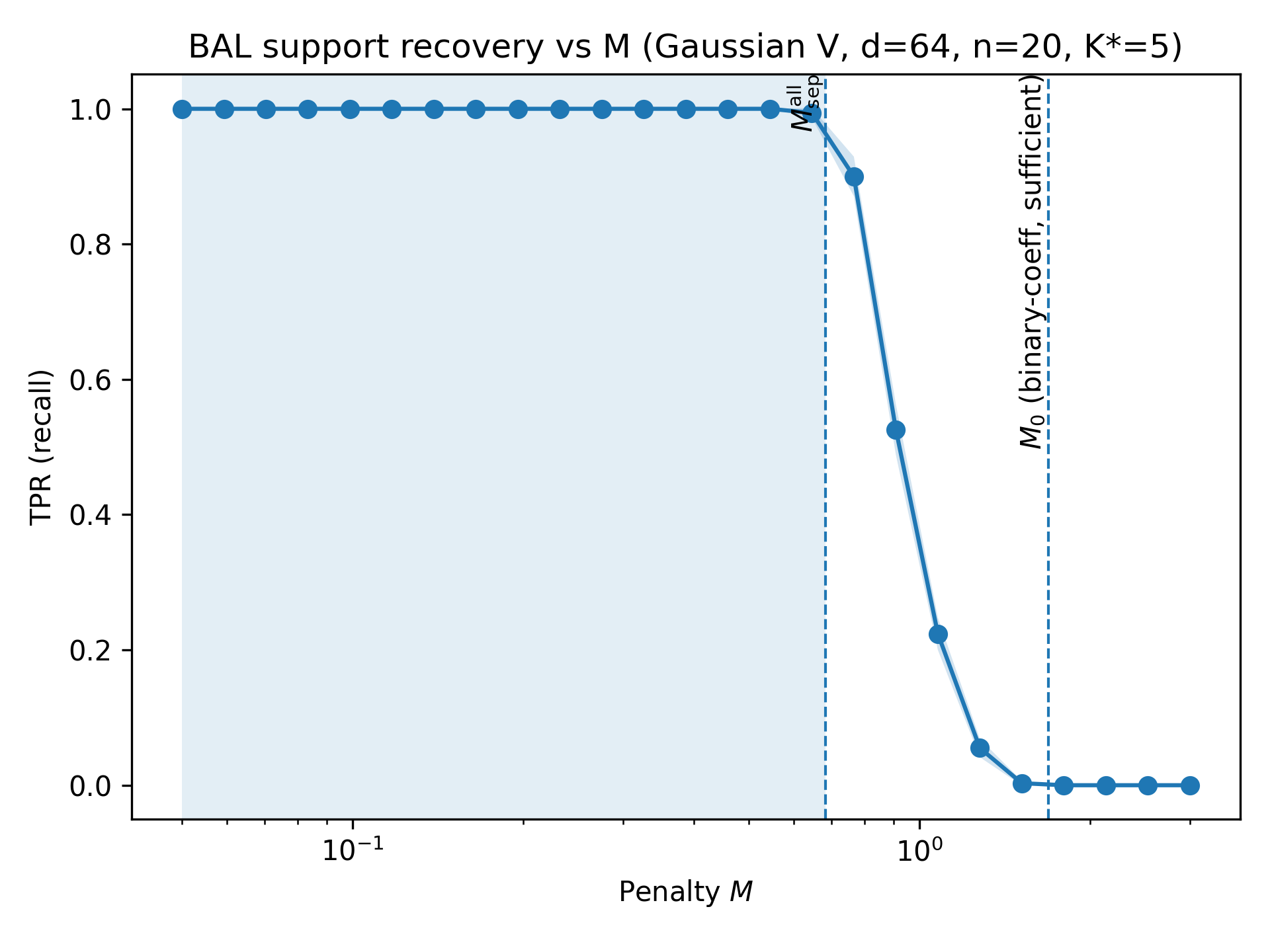}
    \caption{TPR of BAL-A on synthetic $V$ as a function of $M$.}
    \label{fig:bal-synthetic-M}
\end{figure}
Figure~\ref{fig:bal-synthetic-M} shows the support TPR of BAL-A as the lattice penalty $M$ varies on a Gaussian synthetic dictionary.
BAL-A recovers the true support almost perfectly for small $M$, but the recall drops quickly once $M$ becomes moderately large.
The transition happens around the separation threshold $M_{\text{all sep}}\approx 0.68$ (vertical line) satisfying all of those in Theorem~\ref{thm:minimum-residual-recovers-minimum-flip}: 
below this value all the separation conditions hold and recovery is stable, while above it BAL-A increasingly miss the true flips.
The binary-coefficient sufficient bound $M_0\approx 1.69$ is overly conservative in this setting.

\subsubsection{Validate BMP-A theory on synthetic $V$}
\begin{figure}[ht]
    \centering
    \includegraphics[width=0.8\linewidth]{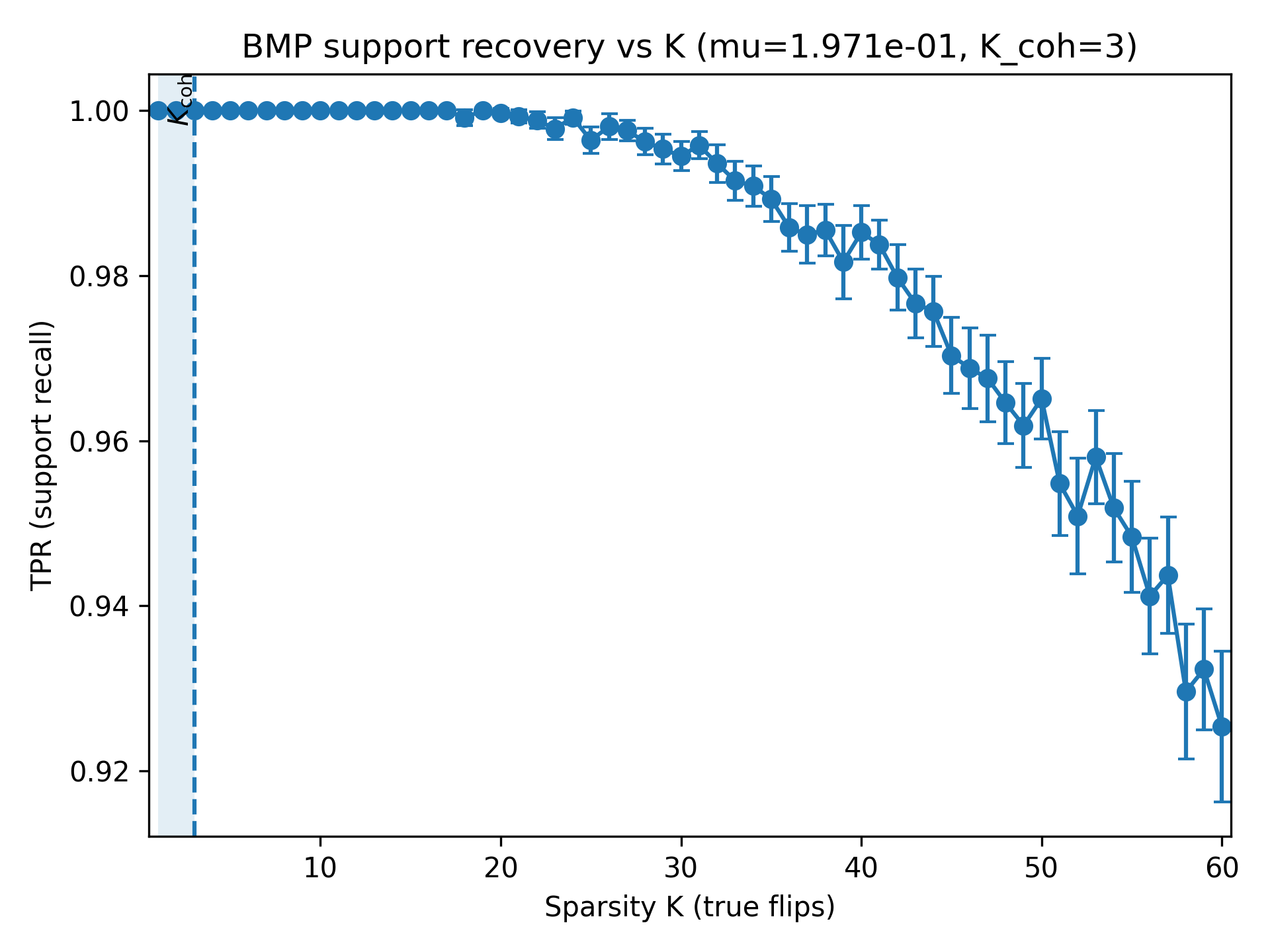}
    \caption{TPR of BMP-A on synthetic $V$ as a function of $K^\star$.}
    \label{fig:bmp-synthetic-K}
\end{figure}
Figure~\ref{fig:bmp-synthetic-K} evaluates BMP-A on a fixed low-coherence synthetic dictionary while increasing the true number of flips $K^\star$.
The constructed matrix has empirical coherence $\mu(V)\approx 0.197$ and the corresponding guarantee in Theorem~\ref{thm:bmp-recovery-coherence} only covers up to $K_{\text{coh}}=3$ (vertical line).
Within this guaranteed region, BMP-A achieves perfect support recovery.
Beyond $K_{\text{coh}}$, the sufficient condition no longer applies, 
yet BMP-A still maintains high TPR and degrades gradually as $K^\star$ grows, 
indicating that the coherence bound is conservative in this synthetic setting and that BMP-A remains effective outside the sufficient guarantee.

\subsubsection{Results on real dataset SHP}

On SHP, the attack-vs-groundtruth results in Appendix~\ref{app:shp-results} show that BAL-A and BMP-A can recover the constructed attack pattern and induce policies close to that trained on $\tilde{\mathcal D}(x^\star)$. 
BAL-A has perfect support recovery in the predicted $M$-s satisfying separation conditions, while BMP-A benefits from the low-coherence subset. 
The corresponding policy distances are $| \pi_{\theta^\dagger} - \pi_{\hat{\theta}} | \approx 0.012$ in the SHP runs, indicating that training on $\tilde{\mathcal D}(\hat x)$ closely tracks training on $\tilde{\mathcal D}(x^\star)$.

Figure~\ref{fig:shp-clean-vs-attacked} gives the complementary clean-vs-attacked diagnostic on the common feasible SHP case. 
In this matched case, BAL-A and BMP-A recover the same constructed flip pattern exactly ($\mathrm{TP}=7,\mathrm{FP}=0,\mathrm{FN}=0$), so their attacked subsets coincide. 
This also allows a direct runtime check: BAL-A takes $0.6865$ seconds, while BMP-A takes $1.37\times10^{-4}$ seconds. 
We include this comparison only as a matched diagnostic, and the two methods are designed for different attack problems.
\begin{figure}[ht]
    \centering
    \includegraphics[width=0.8\linewidth]{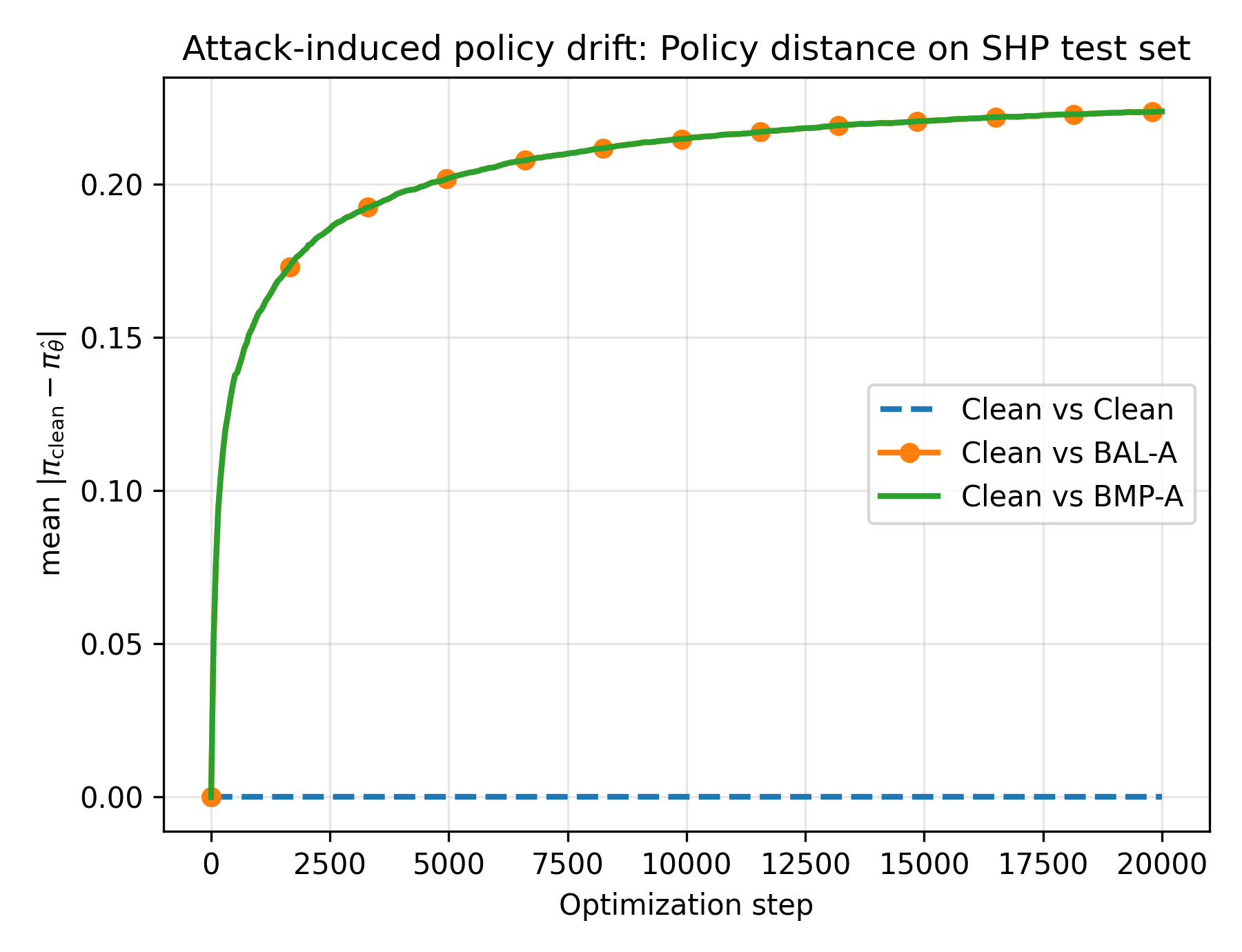}
    \caption{Clean-vs-attacked diagnostic. The curves overlap because BAL-A and BMP-A recover the same flip pattern.}
    \label{fig:shp-clean-vs-attacked}
\end{figure}
The clean-vs-attacked distance $\|\pi_{\mathrm{clean}}-\pi_{\hat\theta}\|_1$ stabilizes to around $0.224$, which is about $19$ times the attack-vs-groundtruth distance of $0.012$.
Thus the attack results do not merely recover the true flip pattern, they also mislead downstream DPO toward the target policy and away from the clean retraining baseline.

\section{Conclusion}\label{sec:conclusion}
We have studied label flip preference attacks on offline RLHF, specifically targeting the log-linear DPO pipeline. 
We have illustrated the effect of flipping one preference label on the gradient of the DPO objective, 
and converted the attack problem to a binary sparse approximation problem in the per-sample gradient space.
To solve it, we have proposed BAL-A algorithm, 
and provided sufficient conditions for binary coefficients and objective recovering.
In practice, with budget $K$, we have also considered the $K$-flip attack problem and propose BMP-A, 
and provided a coherence-based recovery guarantee.
We have also provided two impossibility conditions under which no $K$-flip attack can succeed.
We have provided extensive experiments on both synthetic and real data to validate the theory.

Our goal is to understand the vulnerability of offline RLHF to adversarial preference poisoning, 
and to motivate robust algorithms that can defend against such attacks. 
As a first step, we limit our study to label flips in log-linear DPO with attacker access to white-box feature mapping $\psi$. 
These assumptions give an abstraction of general RLHF while leaving broader attack and defense settings for future work.

\newpage

\section*{Acknowledgements}



The work of Chenye Yang and Lifeng Lai was supported in part by the National Science Foundation under grants CCF-2232907 and ECCS-2448268.
The work of Weiyu Xu was supported by the National Science Foundation under grant ECCS-2133205.

\section*{Impact Statement}


This paper presents work whose goal is to advance the field of Machine
Learning. There are many potential societal consequences of our work, none
which we feel must be specifically highlighted here.


\bibliography{attack_dpo_bib}
\bibliographystyle{icml2026}

\newpage
\appendix
\onecolumn


\section{Introduction: Additional Related Work}\label{app:related-work}

\subsection{Adversarial Attacks on Bandits}

A line of work has studied adversarial attacks on bandit problems, 
including multi-armed bandits (MAB)~\cite{jun2018adversarial,liu2019data,liu2020action,ma2023adversarial,yang2024stochastic}, 
linear contextual bandits~\cite{ma2018data,garcelon2020adversarial,liu2021efficient}, 
and dueling bandits~\cite{gajane2015relative,agarwal2021stochastic,saha2021adversarial}. 
These works have demonstrated that bandit algorithms are vulnerable to adversarial manipulation.

In the MAB setting, the learner takes an action by selecting one arm each round and observes its reward~\cite{robbins1952some,auer1995gambling,auer2002nonstochastic}. 
An adversary can manipulate the reward signals to mislead the learning algorithm into selecting a target arm more frequently~\cite{jun2018adversarial,liu2019data,ma2023adversarial}, 
or corrupt the action signals to influence the learning process~\cite{liu2020action}. 
Most of these existing work focused on the stationary random rewards setting, in which the distribution of reward of each arm does not change over time. 
Some~\cite{ma2023adversarial} studied the adversarial setting, in which the reward given by the environment can be arbitrarily chosen~\cite{auer2002nonstochastic,zimmert2019an}. 
Other work~\cite{yang2024stochastic} considered non-stationary stochastic bandits, in which the reward distribution of each arm can change over time but with some constraints on the total amount of changes~\cite{garivier2011upper,besbes2014stochastic,auer2019adaptively,besbes2019optimal,cheung2022hedging}.

In linear contextual bandits, the expected reward is modeled as a linear function of the context features~\cite{auer2002using,abe2003reinforcement,li2010contextual}.
Existing works on adversarial attacks against linear contextual bandits focus on the reward poisoning~\cite{ma2018data,garcelon2020adversarial}, where the attacker manipulates the reward signal;
context poisoning~\cite{garcelon2020adversarial}, where the adversary modifies the context observed without changing the reward associated with the context;
and action poisoning~\cite{liu2021efficient}, where the adversary changes the action selected by the learner.

In dueling bandits, the learner selects a pair of arms each round and observes relative feedback indicating which arm is preferred~\cite{yue2012k,zoghi2014relative,sui2018advancements}. 
Prior work studies stochastic dueling bandits with adversarial corruption, where an attacker can corrupt pairwise outcomes~\cite{agarwal2021stochastic}. 
Other works consider fully adversarial preference models, which can be viewed as an extreme form of corruption~\cite{gajane2015relative,saha2021adversarial}.

\subsection{Adversarial Attacks on Standard Reinforcement Learning}

Adversarial attacks on standard reinforcement learning (RL) have been widely studied under different poisoning models, 
where the attacker modifies the learning signals to mislead the learned policy.
Common corruptions include reward poisoning (manipulating reward or cost signals)~\cite{huang2019deceptive,zhang2020adaptive,rakhsha2020policy,rakhsha2021reward,banihashem2022admissible,xu2022efficient,li2025online},
action poisoning (manipulating the agent's action before it is applied)~\cite{liu2021provably},
and more general environment poisoning such as manipulating transition dynamics~\cite{rakhsha2020policy,xu2021transferable}.
Poisoning has also been studied in batch setting~\cite{ma2019policy},
and in dynamics-agnostic settings where attacker must learn to attack from interaction~\cite{sun2020vulnerabilityaware,liu2021provably,rakhsha2021reward,xu2022efficient,li2025online}.

Beyond single-agent RL, adversarial attacks have been considered in more complex settings, 
including multi-agent RL where the attacker can corrupt rewards and/or actions~\cite{liu2023efficient,wu2023reward},
and federated RL where poisoning can exploit both the RL dynamics and the federated aggregation mechanism~\cite{ma2024reward}.
There is also work that characterizes limits of reward-only or action-only poisoning in episodic RL~\cite{rangi2022understanding},
as well as work that studies optimal attack in attacker's Markov decision process~\cite{mcmahan2024optimal}.
In addition, many attacks are studied specifically in deep RL, 
where policies are neural networks and the attacker perturbs high-dimensional observations to induce harmful actions at decision time~\cite{huang2017adversarial,kos2017delving,lin2017tactics,mandlekar2017adversarially,pattanaik2018robust,gleave2020adversarial,sun2020stealthy,pan2022characterizing}.

\subsection{Adversarial Attacks on Reinforcement Learning from Human Feedback}

RLHF relies on human preference data as the training source, which is inherently different from bandit or standard RL settings that use numerical rewards.
Recent works have shown that RLHF faces substantial security issues related to the reliability of preference data, i.e., when there are adversarial attacks on preferences or random noises in human preferences~\cite{casper2023open}.
In offline RLHF~\cite{xiong2023iterative}, where the learner trains on a pre-collected preference dataset~\cite{christiano2017deep,ouyang2022training,bai2022training,zhu2023principled,rafailov2023direct}, 
preference data can be manipulated by adversaries to mislead training and induce targeted behaviors~\cite{rando2023universal,baumgartner2024best,wu2025preference,pathmanathan2025poisoning,nika2025policy,fu2025poisonbench}.
Most existing studies demonstrate this vulnerability from data injection attacks and label flip attacks 
experimentally on large-scale models~\cite{rando2023universal,baumgartner2024best,wu2025preference,pathmanathan2025poisoning,fu2025poisonbench}, 
while \cite{nika2025policy} provides a comprehensive theoretical analysis of data injection attacks and shows that a large appended poisoned preference set may be needed to mislead offline RLHF. 
In online RLHF, where the agent learns from real-time interactions with human feedbacks~\cite{chen2022human,saha2023dueling,wang2023rlhf,zhan2023provablereward,wu2024making}
adversaries also attack the preference to manipulate the learning process~\cite{yang2025human}.

Besides adversarial poisoning, the preference data is also inherently noisy due to annotation difficulty, limited expert availability, and systematic differences across annotators~\cite{casper2023open,yan2024reward}.
Such noise can lead to hard-to-predict behaviors and potential biases in aligned model outputs.
Accordingly, a line of work studies robustness of offline RLHF under noisy or partially corrupted preferences~\cite{yan2024reward,cheng2024rime,chowdhury2024provably,mandal2025corruption,wu2025towards}.
These works typically focus on random corruption models or random noise rather than a targeted adversarial attacker that strategically selects which comparisons to corrupt.

Compared with attacks on standard RL and bandits, attacks on RLHF face an additional challenge: the feedback is discrete preference data, and its effect on the learned policy is less direct and harder to predict.
Moreover, in label flip attack problem for offline RLHF, the attacker is restricted to manipulating a fixed preference dataset, rather than freely manipulating rewards or transitions or generating new comparison pairs.
Our work focuses on this practically plausible label flip threat model and provides a theoretical analysis for DPO under the log-linear policy class.

\section{Problem Formulation: Offline RLHF Pipeline and DPO Derivation~\cite{rafailov2023direct}}\label{app:rlhf-dpo-background}

This appendix provides a short summary that connects the standard offline RLHF pipeline to the DPO objective used in the main paper.
We note that these derivations were done in~\cite{rafailov2023direct} with a slightly different notation. We include them here for readers' convenience. The subsequent attack formulation and guarantees do not depend on the intermediate reward-modeling step.

RLHF frameworks usually combine three interconnected processes: feedback collection, reward modeling, and policy optimization~\cite{christiano2017deep, casper2023open}. 
Once given two answers $a_1$ and $a_2$ for the same prompt $s$, the human labeler is asked to compare the two answers and give their preference, 
denoted as $a_w \succ a_l \mid s$ where $a_w$ is the preferred answer and $a_l$ is the dispreferred answer in the pair $(a_1, a_2)$.
Note that to match the notation in~\cite{rafailov2023direct}, this appendix writes each comparison as $(s,a_w,a_l)$, so the label is implicit.
This is equivalent to the main-text notation $(s,a,a',o)$ with $o\in\{+1,-1\}$ by setting
\begin{equation*}
    (a_w,a_l)=
    \begin{cases}
        (a,a'), & o=+1,\\
        (a',a), & o=-1.
    \end{cases}
\end{equation*}
Equivalently, given $(s,a_w,a_l)$ one can take $(s,a,a',o)=(s,a_w,a_l,+1)$.

We assume that there is some latent reward model $r^* (a,s)$ for the prompt $s$ and the answer $a$, which can not be observed by anyone.
Bradley-Terry model~\cite{bradley1952rank} describes the human preference distribution $p^*$ for pairwise comparisons:
\begin{equation*}
    p^*(a_1 \succ a_2 \mid s) = \frac{\exp(r^*(s, a_1))}{\exp(r^*(s, a_1)) + \exp(r^*(s, a_2))}.
\end{equation*}
In the offline RLHF setting, the agent has access to a static dataset of comparisons $\mathcal{D} = \{ s^{(i)}, a_w^{(i)}, a_l^{(i)} \}_{i=1}^N$ sampled from $p^*$.
Using the dataset $\mathcal{D}$, the agent can learn a reward model $r_\phi(s,a)$ parameterized by $\phi$ via maximum likelihood estimation (MLE).
The negative log-likelihood loss function is defined as:
\begin{equation*}
    L_{\text{RLHF}}(r_\phi, \mathcal{D}) = - \mathbb{E}_{(s,a_w,a_l) \sim \mathcal{D}} \left[ \log \sigma (r_\phi (s,a_w) - r_\phi (s,a_l)) \right] ,
\end{equation*}
where $\sigma(x) = \frac{1}{1 + \exp(-x)}$ is the sigmoid function for the Bradley-Terry model.
Then a RL algorithm is used to optimize a policy $\pi_\theta$ parameterized by $\theta$ with the learned reward model $r_\phi(s,a)$:
\begin{equation*}
    \begin{split}
        \max_{\pi_\theta} &\mathbb{E}_{s \sim \mathcal{D},\, a \sim \pi_\theta(a \mid s)} \left[ r_\phi(s, a) - \beta\, \mathbb{D}_{\mathrm{KL}}\left[ \pi_\theta(a \mid s) \,\|\, \pi_{\mathrm{ref}}(a \mid s) \right] \right],
    \end{split}
\end{equation*}
where $\beta$ is a parameter controlling the trade-off between learning the human preferences and following the reference policy $\pi_{\mathrm{ref}}$.

To avoid the computational cost of policy optimization by reinforcement learning in large-scale problems,
DPO~\cite{rafailov2023direct} is proposed to directly optimize the policy $\pi_\theta$ with the preference data $(s, a_w, a_l)$.
Solving the policy optimization problem leads to the following optimal solution:
\begin{equation*}
    \pi_r(a \mid s) = \frac{1}{Z(s)} \pi_{\text{ref}}(a \mid s) \exp\left( \frac{1}{\beta} r(s, a) \right),
\end{equation*}
where $Z(s) = \sum_a \pi_{\text{ref}}(a \mid s) \exp\left( \frac{1}{\beta} r(s, a) \right)$ is the partition function.
Rearranging the above equation leads to:
\begin{equation*}
    r(s, a) = \beta \log \frac{\pi_r(a \mid s)}{\pi_{\text{ref}}(a \mid s)} + \beta \log Z(s).
\end{equation*}
Applying this reparameterization to the ground-truth reward function $r^*$ and corresponding optimal policy $\pi^*$, 
and the Bradley-Terry model, we have:
\begin{equation*}
    p^*(a_1 \succ a_2 \mid s) = \frac{1}{1 + \exp\left( \beta \log \frac{\pi^*(a_2 \mid s)}{\pi_{\text{ref}}(a_2 \mid s)} - \beta \log \frac{\pi^*(a_1 \mid s)}{\pi_{\text{ref}}(a_1 \mid s)} \right)}.
\end{equation*}
In this way, we model the human preference probability as a function of the optimal policy $\pi^*$ rather than the reward function $r^*$.
Therefore, the loss function in MLE for parameterized policy $\pi_\theta$ can be rewritten as:
\begin{equation*}
    \begin{split}
        L_{\text{DPO}}(\pi_\theta; \pi_{\text{ref}}) = &-\mathbb{E}_{(s, a_w, a_l) \sim \mathcal{D}} \left[ \log \sigma \left( \beta \log \frac{\pi_\theta(a_w \mid s)}{\pi_{\text{ref}}(a_w \mid s)} - \beta \log \frac{\pi_\theta(a_l \mid s)}{\pi_{\text{ref}}(a_l \mid s)} \right) \right].
    \end{split}
\end{equation*}
Solving this estimation problem leads to the desired policy $\pi_\theta$.

\section{Label Flip Attack: Extension Identities for General DPO and RLHF}\label{app:extension-identities}

This appendix records two natural extensions of Theorem~\ref{thm:flip-gradient-shift}. 

\subsection{Extension beyond log-linear DPO}

First, we consider DPO with differentiable policy classes beyond the log-linear class used in the main text, such as neural policies. 
In the log-linear case, the difference of the log-policy gradient reduces to a fixed feature difference, which makes each label flip a parameter-independent atom in the attack dictionary. 
Beyond this setting, the same algebra still gives an exact flip-induced gradient shift, but the atom is a policy-gradient difference and typically varies with $\theta$. 
The following proposition makes this distinction explicit: it defines a local dictionary at a target parameter $\theta^\dagger$, but does not by itself provide the fixed global dictionary used in the main attack reduction.

\begin{proposition}[Flip-induced gradient shift for DPO with general differentiable policy classes]\label{prop:general-dpo-flip-shift}
Consider any differentiable policy class $\{\pi_\theta\}$ and a fixed reference policy $\mu$ that does not depend on $\theta$. For a comparison $(s_i,a_i,a_i',o_i)$ with $o_i\in\{+1,-1\}$, define
\begin{equation*}
    q_i(\theta)
    :=
    \log \frac{\pi_\theta(a_i\mid s_i)}{\mu(a_i\mid s_i)}
    -
    \log \frac{\pi_\theta(a_i'\mid s_i)}{\mu(a_i'\mid s_i)}
\end{equation*}
and the per-sample DPO loss $\ell_i(\theta;o_i):=-\log\sigma(o_i\beta q_i(\theta))$. If the label is flipped from $o_i$ to $-o_i$, then
\begin{equation*}
    \nabla_\theta \ell_i(\theta;-o_i)-\nabla_\theta \ell_i(\theta;o_i)
    =
    o_i\beta\left(
    \nabla_\theta\log\pi_\theta(a_i\mid s_i)
    -
    \nabla_\theta\log\pi_\theta(a_i'\mid s_i)
    \right).
\end{equation*}
Thus, the flip-induced gradient shift is exact but typically depends on $\theta$. At a fixed target $\theta^\dagger$, one may define local atoms
\begin{equation*}
    v_i(\theta^\dagger)
    :=
    o_i\beta\left(
    \nabla_\theta\log\pi_\theta(a_i\mid s_i)
    -
    \nabla_\theta\log\pi_\theta(a_i'\mid s_i)
    \right)\bigg|_{\theta=\theta^\dagger}.
\end{equation*}
If $V_{\theta^\dagger}x=-g^\dagger$, where $V_{\theta^\dagger}:=[v_1(\theta^\dagger),\ldots,v_n(\theta^\dagger)]$ and $g^\dagger:=\nabla_\theta L_{\mathrm{DPO}}(\theta^\dagger;\mathcal D)$, then $\theta^\dagger$ is a stationary point of the attacked DPO objective.
\end{proposition}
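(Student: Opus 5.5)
The proof is a direct chain-rule computation, mirroring the argument behind Theorem~\ref{thm:flip-gradient-shift} but without using the log-linear structure. Write $\ell_i(\theta;o)=\phi(o\beta q_i(\theta))$ with $\phi(t):=-\log\sigma(t)$. Since $\phi'(t)=-(1-\sigma(t))=-\sigma(-t)$, the chain rule gives
\begin{equation*}
\nabla_\theta\ell_i(\theta;o)=-o\beta\,\sigma(-o\beta q_i(\theta))\,\nabla_\theta q_i(\theta).
\end{equation*}
Because $\mu$ does not depend on $\theta$, its contribution cancels in $\nabla_\theta q_i$, leaving
\begin{equation*}
\nabla_\theta q_i(\theta)=\nabla_\theta\log\pi_\theta(a_i\mid s_i)-\nabla_\theta\log\pi_\theta(a_i'\mid s_i).
\end{equation*}

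Next, subtract the gradients for labels $-o_i$ and $o_i$. Set $t:=o_i\beta q_i(\theta)$. The difference is
\begin{equation*}
o_i\beta\,\sigma(t)\nabla_\theta q_i+o_i\beta\,\sigma(-t)\nabla_\theta q_i=o_i\beta\bigl(\sigma(t)+\sigma(-t)\bigr)\nabla_\theta q_i.
\end{equation*}
The identity $\sigma(t)+\sigma(-t)=1$ then collapses this to $o_i\beta\nabla_\theta q_i(\theta)$, which is exactly the claimed shift. This cancellation is the only nontrivial step, and it is the same one used in Theorem~\ref{thm:flip-gradient-shift}. In the log-linear case $\nabla_\theta q_i$ reduces to $\Delta\psi_i$, because the softmax normalizer gradients cancel; in general no such cancellation happens, so the shift depends on $\theta$.

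Finally, for the stationarity claim, sum over the flipped set exactly as in Section~\ref{sec:label-flip-attack-problem}, using the per-sample identity evaluated at $\theta^\dagger$. Unflipped terms are unchanged and the regularizer is label-independent, so
\begin{equation*}
\nabla_\theta L_{\mathrm{DPO}}(\theta^\dagger;\tilde{\mathcal D}(x))=g^\dagger+\sum_i x_i v_i(\theta^\dagger)=g^\dagger+V_{\theta^\dagger}x.
\end{equation*}
Under the hypothesis $V_{\theta^\dagger}x=-g^\dagger$ this vanishes, so $\theta^\dagger$ is a stationary point of the attacked objective. We claim only stationarity, not global optimality, since strong convexity is not available for general policy classes. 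I expect no real obstacle: the argument needs only differentiability, which lets us interchange the sum and the gradient.
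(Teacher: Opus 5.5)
Your proposal is correct and follows essentially the same route as the paper's proof. Both compute $\nabla_\theta \ell_i(\theta;o)=-o\beta\,\sigma(-o\beta q_i(\theta))\nabla_\theta q_i(\theta)$ by the chain rule, subtract the gradients for the two labels using $\sigma(t)+\sigma(-t)=1$, and then sum the per-sample identity over the flipped indices at $\theta^\dagger$ to obtain stationarity (and only stationarity) of the attacked objective.
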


\paragraph{Challenges for extending to DPO with general policy classes.}
The proposition shows that the flip-induced shift still has an exact first-order form, but extending the full log-linear theory to a general policy class faces three additional challenges. 
First, constructing $V_{\theta^\dagger}$ requires per-example policy-gradient computations, such as per-sample backpropagation through a neural policy, rather than simply forming fixed feature differences. 
Second, because the atoms $v_i(\theta)$ depend on the parameter value, the equation $V_{\theta^\dagger}x=-g^\dagger$ is only a local first-order condition at $\theta^\dagger$.
For nonconvex DPO objectives, it does not by itself imply that $\theta^\dagger$ is a global optimizer of the attacked objective. 
Third, even a small gradient residual at $\theta^\dagger$ does not automatically yield the parameter-space and policy-space guarantees in Lemma~\ref{lem:grad-to-policy} without strong convexity.

\begin{proof}[Proof of Proposition~\ref{prop:general-dpo-flip-shift}]
Since the reference policy $\mu$ is fixed, $\nabla_\theta q_i(\theta)=\nabla_\theta\log\pi_\theta(a_i\mid s_i)-\nabla_\theta\log\pi_\theta(a_i'\mid s_i)$. Differentiating $\ell_i(\theta;o_i)=-\log\sigma(o_i\beta q_i(\theta))$ gives
\begin{equation*}
    \nabla_\theta \ell_i(\theta;o_i)
    =
    -o_i\beta\,\sigma(-o_i\beta q_i(\theta))\,\nabla_\theta q_i(\theta).
\end{equation*}
After flipping the label,
\begin{equation*}
    \nabla_\theta \ell_i(\theta;-o_i)
    =
    o_i\beta\,\sigma(o_i\beta q_i(\theta))\,\nabla_\theta q_i(\theta).
\end{equation*}
Subtracting the two gradients and using $\sigma(t)+\sigma(-t)=1$ yields
\begin{equation*}
    \nabla_\theta \ell_i(\theta;-o_i)-\nabla_\theta \ell_i(\theta;o_i)
    =
    o_i\beta\,\nabla_\theta q_i(\theta),
\end{equation*}
which proves the identity. The local stationarity claim follows by summing the identities over flipped indices: at $\theta^\dagger$, the attacked gradient is $\nabla_\theta L_{\mathrm{DPO}}(\theta^\dagger;\mathcal D)+V_{\theta^\dagger}x=g^\dagger+V_{\theta^\dagger}x$, which is zero if $V_{\theta^\dagger}x=-g^\dagger$.
\end{proof}

\subsection{Extension to RLHF reward modeling}

Next, we consider the standard two-stage RLHF pipeline with a separately learned reward model. 
Unlike DPO, where the preference loss directly updates the policy parameter, the first stage fits a reward parameter $\phi$ from the same type of pairwise comparison data. 
The following proposition shows that label flips induce an analogous gradient shift in the reward-modeling objective. 
This identity should be interpreted in reward-parameter space; its implication for the final policy depends on the subsequent policy-optimization stage.

\begin{proposition}[Flip-induced gradient shift in the RLHF reward-modeling phase]\label{prop:rlhf-reward-flip-shift}
Consider the Bradley--Terry reward-modeling loss for a comparison $(s_i,a_i,a_i',o_i)$,
\begin{equation*}
    \ell_i^{\mathrm{RM}}(\phi;o_i)
    :=
    -\log\sigma\left(o_i\left(r_\phi(s_i,a_i)-r_\phi(s_i,a_i')\right)\right),
\end{equation*}
where $r_\phi$ is a differentiable reward model. If the label is flipped from $o_i$ to $-o_i$, then
\begin{equation*}
    \nabla_\phi \ell_i^{\mathrm{RM}}(\phi;-o_i)
    -
    \nabla_\phi \ell_i^{\mathrm{RM}}(\phi;o_i)
    =
    o_i\left(
    \nabla_\phi r_\phi(s_i,a_i)
    -
    \nabla_\phi r_\phi(s_i,a_i')
    \right).
\end{equation*}
In particular, if the reward is linear, $r_\phi(s,a)=\phi^\top\psi(s,a)$, then the shift is the parameter-independent vector
\begin{equation*}
    o_i\left(\psi(s_i,a_i)-\psi(s_i,a_i')\right).
\end{equation*}
Equivalently, in winner-loser notation $(s_i,a_w^i,a_l^i)$ with $o_i=+1$, the shift is $\psi(s_i,a_w^i)-\psi(s_i,a_l^i)$.
\end{proposition}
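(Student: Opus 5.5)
The plan is a direct differentiation argument that parallels the proof of Proposition~\ref{prop:general-dpo-flip-shift}, with $\beta$ removed and $\pi_\theta$ replaced by $r_\phi$. Write $q_i^{\mathrm{RM}}(\phi):=r_\phi(s_i,a_i)-r_\phi(s_i,a_i')$, so that $\ell_i^{\mathrm{RM}}(\phi;o_i)=-\log\sigma(o_i q_i^{\mathrm{RM}}(\phi))$. Since $\frac{d}{dt}\bigl(-\log\sigma(t)\bigr)=-\sigma(-t)$, the chain rule gives
\begin{equation*}
    \nabla_\phi \ell_i^{\mathrm{RM}}(\phi;o_i) = -o_i\,\sigma\bigl(-o_i q_i^{\mathrm{RM}}(\phi)\bigr)\,\nabla_\phi q_i^{\mathrm{RM}}(\phi).
\end{equation*}
Replacing $o_i$ by $-o_i$ gives
\begin{equation*}
    \nabla_\phi \ell_i^{\mathrm{RM}}(\phi;-o_i) = o_i\,\sigma\bigl(o_i q_i^{\mathrm{RM}}(\phi)\bigr)\,\nabla_\phi q_i^{\mathrm{RM}}(\phi).
\end{equation*}

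Subtracting the two expressions and applying the identity $\sigma(t)+\sigma(-t)=1$ with $t=o_i q_i^{\mathrm{RM}}(\phi)$ makes every $\phi$-dependent sigmoid factor cancel. What remains is $o_i\nabla_\phi q_i^{\mathrm{RM}}(\phi)=o_i\bigl(\nabla_\phi r_\phi(s_i,a_i)-\nabla_\phi r_\phi(s_i,a_i')\bigr)$, which is the general identity.

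For the linear case $r_\phi(s,a)=\phi^\top\psi(s,a)$, we have $\nabla_\phi r_\phi(s,a)=\psi(s,a)$. The shift is then $o_i(\psi(s_i,a_i)-\psi(s_i,a_i'))$, which does not depend on $\phi$.

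For the winner--loser form, use the correspondence from Appendix~\ref{app:rlhf-dpo-background}: take $o_i=+1$ and $(a_i,a_i')=(a_w^i,a_l^i)$. The shift becomes $\psi(s_i,a_w^i)-\psi(s_i,a_l^i)$. If instead the comparison is recorded with $o_i=-1$, the sign flip and the swap of the pair cancel, so the winner--loser expression is the same.

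No step here is a real obstacle. The only thing to watch is the sign bookkeeping in the derivative of $-\log\sigma$; the sigmoid-complement identity then makes the cancellation immediate.
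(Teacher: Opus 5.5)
Your proposal is correct and matches the paper's proof essentially step for step. Both define the reward difference, differentiate $-\log\sigma(o_i h_i(\phi))$ by the chain rule, flip the sign, subtract, cancel the sigmoid factors with $\sigma(t)+\sigma(-t)=1$, and then specialize to linear rewards; your remark that the $o_i=-1$ encoding gives the same winner--loser expression is a correct detail the paper leaves implicit.
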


\paragraph{Challenges for extending to RLHF.}
The main difficulty is the two-stage nature of RLHF. 
The proposition gives a gradient identity only for the reward-modeling objective. 
A poisoned reward estimate must then pass through a separate policy-optimization stage, 
and the map from reward error to the final policy may depend on the RL optimizer, KL regularization, policy class, and possible nonconvexity. 
Therefore, the parameter-independent shift in reward-modeling phase does not by itself yield similar guarantees for final-policy manipulation as in log-linear DPO.

\begin{proof}[Proof of Proposition~\ref{prop:rlhf-reward-flip-shift}]
Let $h_i(\phi):=r_\phi(s_i,a_i)-r_\phi(s_i,a_i')$. Differentiating the reward-modeling loss gives
\begin{equation*}
    \nabla_\phi \ell_i^{\mathrm{RM}}(\phi;o_i)
    =
    -o_i\,\sigma(-o_i h_i(\phi))\,\nabla_\phi h_i(\phi).
\end{equation*}
After flipping the label,
\begin{equation*}
    \nabla_\phi \ell_i^{\mathrm{RM}}(\phi;-o_i)
    =
    o_i\,\sigma(o_i h_i(\phi))\,\nabla_\phi h_i(\phi).
\end{equation*}
Subtracting and using $\sigma(t)+\sigma(-t)=1$ gives
\begin{equation*}
    \nabla_\phi \ell_i^{\mathrm{RM}}(\phi;-o_i)-\nabla_\phi \ell_i^{\mathrm{RM}}(\phi;o_i)
    =
    o_i\,\nabla_\phi h_i(\phi),
\end{equation*}
which is the stated general reward-modeling identity. If $r_\phi(s,a)=\phi^\top\psi(s,a)$, then $\nabla_\phi h_i(\phi)=\psi(s_i,a_i)-\psi(s_i,a_i')$, independent of $\phi$.
\end{proof}


\section{BAL-A: Key Mismatches of the Standard Lattice Relaxation}\label{app:lattice-mismatch}
This standard lattice approach has two key mismatches (binary coefficients, and minimum-flip objective) with our flip model, which can be further categorized into three detailed mismatches:
\begin{enumerate}
    \item \textbf{Objective mismatch (minimum residual vs.\ minimum flips).}
    The CVP-style formulation focuses on minimizing the residual $\min_{x\in\{0,1\}^n}\ \|Vx+g^\dagger\|_2$,
    whereas our flip attack is a \emph{minimum-cardinality} problem: $\min_{x\in\{0,1\}^n}\ \mathbf{1}^\top x$.
    Even when an exact attack is feasible (so the residual is zero), residual minimization alone does not distinguish among multiple feasible solutions and therefore does not, in general, recover the minimum-flip pattern.

    \item \textbf{Unbounded integer coefficients.}
    In \eqref{equ:integer-ls-relax-real}, the coefficients $z_i$ can be arbitrary integers, corresponding to ``using'' the same flip-effect $v_i$ multiple times.
    In our setting, each data point can be flipped at most once, so the admissible coefficients must lie in $\{0,1\}$.
    Large integer coefficients are not meaningful as physical flip patterns.

    \item \textbf{No guaranteed truncation.}
    A common workaround is to truncate an integer solution to the binary hypercube, e.g., $x_i=\mathbf{1}\{z_i\ge 1\}$.
    This nonlinear projection can destroy the relationship between $\|Vz+g^\dagger\|_2$ and $\|Vx+g^\dagger\|_2$, and there is no guarantee that a near-optimal integer solution leads to a near-optimal binary solution.
\end{enumerate}


\section{BAL-A: Full Binary-Aware Lattice Attack Algorithm}\label{app:lattice-algorithm}

The Binary-Aware Lattice Attack (BAL-A) algorithm is summarized in Algorithm~\ref{alg:bal-a}.

Here we provide more details on each step of BAL-A:
\begin{enumerate}
    \item \textbf{Construct the binary-aware lattice basis and target.}
    \begin{enumerate}
        \item Form the extended basis
        \begin{equation*}
            B_e = [b_1,\dots,b_n] \in \mathbb{R}^{(d+n)\times n},\quad
            b_i := \begin{pmatrix} v_i \\ M e_i \end{pmatrix},
        \end{equation*}
        where $i=1,\dots,n$ and $e_i$ is the $i$-th standard basis vector in $\mathbb{R}^n$.
        \item Define the target vector
        \begin{equation*}
            t := \begin{pmatrix} -g^\dagger \\ 0 \end{pmatrix} \in \mathbb{R}^{d+n}.
        \end{equation*}
        Minimizing $\|B_e x - t\|_2$ over $x \in \mathbb{Z}^n$ is equivalent to minimizing
        $\|Vx + g^\dagger\|_2^2 + M^2 \|x\|_2^2$
        in the binary-aware objective.
    \end{enumerate}

    \item \textbf{Apply LLL basis reduction~\cite{lenstra1982factoring} to basis $B_e$.}
    \begin{enumerate}
        \item Fix a parameter $\delta \in (1/4,1)$, e.g.\ $\delta = 3/4$.
        \item Initialize the basis vectors $b_1,\dots,b_n$ as the columns of $B_e$ and compute their Gram--Schmidt orthogonalization:
        \begin{equation*}
            b_i^\ast,\ \mu_{i,j},\quad 1 \le j < i \le n,
        \end{equation*}
        where
        \begin{equation*}
            b_i^\ast = b_i - \sum_{j < i} \mu_{i,j} b_j^\ast,
            \qquad
            \mu_{i,j} = \frac{\langle b_i, b_j^\ast \rangle}{\langle b_j^\ast, b_j^\ast \rangle}.
        \end{equation*}
        \item Set $k \gets 2$.
        \item While $k \le n$, perform:
        \begin{enumerate}
            \item \emph{Size reduction:} for $j = k-1, k-2, \dots, 1$, set
            \begin{equation*}
                m := \mathrm{round}(\mu_{k,j}),
                \qquad
                b_k \gets b_k - m\,b_j,
            \end{equation*}
            and update the Gram--Schmidt data ($b_k^\ast$, $\mu_{k,\cdot}$).
            \item \emph{Lovász condition check:} if
            \begin{equation*}
                \delta \,\|b_{k-1}^\ast\|_2^2
                \le \|b_k^\ast\|_2^2 + \mu_{k,k-1}^2 \,\|b_{k-1}^\ast\|_2^2,
            \end{equation*}
            then set $k \gets k+1$ (the basis vectors $b_{k-1},b_k$ are well-ordered).
            Otherwise (\emph{swap step}):
            \begin{itemize}
                \item Swap the basis vectors: $(b_{k-1}, b_k) \gets (b_k, b_{k-1})$.
                \item Recompute Gram--Schmidt data for indices $k-1$ and $k$.
                \item Set $k \gets \max\{k-1, 2\}$.
            \end{itemize}
        \end{enumerate}
        \item When the loop terminates, the vectors $b_1,\dots,b_n$ form an LLL-reduced basis.
              Collect them into the reduced matrix $\tilde{B} \in \mathbb{R}^{(d+n)\times n}$.
        \item Keep track of the unimodular transformation $T \in \mathbb{Z}^{n\times n}$ such that $\tilde{B} = B_e T$,
        i.e., each reduced basis vector is an integer combination of the original ones.
    \end{enumerate}

    \item \textbf{Run Babai's nearest-plane algorithm~\cite{babai1986lovasz} on the reduced basis $\tilde{B}$.}
    \begin{enumerate}
        \item Compute a QR (or Gram--Schmidt) factorization of the reduced basis:
        \begin{equation*}
            \tilde{B} = Q R,
        \end{equation*}
        where $Q \in \mathbb{R}^{(d+n)\times n}$ has orthonormal columns and $R \in \mathbb{R}^{n\times n}$ is upper triangular.
        \item Compute the coordinates of the target in the orthonormal basis:
        \begin{equation*}
            y := Q^\top t \in \mathbb{R}^n.
        \end{equation*}
        \item Initialize an integer coefficient vector $z \in \mathbb{Z}^n$ (e.g., $z = 0$).
        \item For $i = n, n-1, \dots, 1$ (backwards):
        \begin{enumerate}
            \item Compute the one-step residual
            \begin{equation*}
                r_i := y_i - \sum_{j=i+1}^n R_{ij}\, z_j.
            \end{equation*}
            \item Set
            \begin{equation*}
                z_i := \mathrm{round}\!\left( \frac{r_i}{R_{ii}} \right),
            \end{equation*}
            where $\mathrm{round}(\cdot)$ denotes rounding to the nearest integer.
        \end{enumerate}
        \item Map back to the original basis:
        \begin{equation*}
            x_{\mathrm{int}} := T z \in \mathbb{Z}^n.
        \end{equation*}
        By construction, $\tilde{B} z = B_e x_{\mathrm{int}}$, so the corresponding lattice vector is
        \begin{equation*}
            y(x_{\mathrm{int}}) = B_e x_{\mathrm{int}} - t
            = \begin{pmatrix} V x_{\mathrm{int}} + g^\dagger \\[2pt] M x_{\mathrm{int}} \end{pmatrix}.
        \end{equation*}
    \end{enumerate}

    \item \textbf{Truncation.}
    \begin{enumerate}
        \item For $M$ large enough, the integer vector $x_{\mathrm{int}}$ produced by the nearest-plane step satisfies
            $x_{\mathrm{int}} \in \{0,1\}^n$. In that case, we directly take
            \begin{equation*}
                \hat x := x_{\mathrm{int}} \in \{0,1\}^n
            \end{equation*}
            as the binary flip pattern.
        \item If in practice some entries of $x_{\mathrm{int}}$ fall outside $\{0,1\}$, one may optionally project them back to $\{0,1\}$, e.g.,
        \begin{equation*}
            \hat x_i := \min\{1,\max\{0,x_{\mathrm{int},i}\}\},
            \quad i=1,\dots,n.
        \end{equation*}
    \end{enumerate}
\end{enumerate}


\section{BAL-A: Counterexamples for Surrogate Objective Mismatch}\label{app:surrogate-mismatch}

\begin{proposition}[No universal $M$ yields minimum-flip recovery]\label{prop:no_universal_M_minflip_general}
    Fix any $M>0$. There exist an instance $(V,g^\dagger)$ for which the exact flip attack is feasible with optimal flip count $K^\star=1$,
    i.e., $\exists\,x^\star\in\{0,1\}^n$ with $\|x^\star\|_0=1$ and $Vx^\star=-g^\dagger$.
    But the minimizer of the surrogate problem $\min_{x\in\{0,1\}^n} F_M(x)$ is \emph{infeasible} and hence not the optimal attack pattern.
    Consequently, there is no universal choice of $M$ that guarantees $F_M$ recovers a feasible minimum-flip attack pattern for all instances.
\end{proposition}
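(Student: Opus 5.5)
The plan is to give, for each fixed $M>0$, an explicit small instance on which the zero-flip pattern beats the unique feasible one-flip pattern in the surrogate $F_M$. This is exactly the mechanism noted after Theorem~\ref{thm:minimum-residual-recovers-minimum-flip}: when $M\ge\|g^\dagger\|_2$, the pattern $x=0$ has objective value no larger than that of any one-flip solution. Since $M$ is fixed first and the instance is then chosen, the only thing to arrange is $\|g^\dagger\|_2$ small relative to $M$ while an exact single-flip attack still exists.

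\textbf{Construction.} Take $d=1$ and $n=1$ (or embed into any $d,n$ by padding with a second column pointing in an orthogonal direction). Set $V=[v_1]$ with $v_1=c$ for a scalar $0<c<M$, and $g^\dagger=-c$. We can also take $c\le B=2\beta$ so that the instance is realizable as a flip dictionary. Then $x^\star=1$ satisfies $Vx^\star+g^\dagger=0$, and $x=0$ is infeasible because $g^\dagger\neq0$. Hence the exact attack is feasible, $K^\star=1$, and the unique feasible pattern is $x^\star$.

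\textbf{Comparison of surrogate values.} We have $F_M(0)=\|g^\dagger\|_2^2=c^2$ and $F_M(x^\star)=0+M^2=M^2$. Since $c<M$, it follows that $F_M(0)<F_M(x^\star)$. The hypercube $\{0,1\}^1$ has only these two points, so the unique minimizer is $x=0$, which is infeasible and therefore not an optimal attack.

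\textbf{Padded instance.} If the padded version with $n\ge2$ is desired, set $v_2$ with $\|v_2\|_2=c$ orthogonal to $v_1$. Any pattern using $v_2$ leaves the residual component $\pm c$ along $e_2$ and adds a penalty of $M^2$, so it is never better than $x=0$. This case check is routine.

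\textbf{Conclusion and main obstacle.} Because $M$ was arbitrary, no single $M$ works for all instances. The only point needing care is the quantifier order in this step: the instance must depend on $M$. The construction respects this by choosing $c=\min\{M,2\beta\}/2$.
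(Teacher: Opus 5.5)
Your proposal is correct and follows the paper's proof. You choose $\|g^\dagger\|_2<M$ (the paper takes $\|g^\dagger\|_2=M/2$), set $v_1=-g^\dagger$ so that $K^\star=1$, and observe that $F_M(0)=\|g^\dagger\|_2^2<M^2\le F_M(x)$ for every nonzero $x\in\{0,1\}^n$. That last bound holds whatever the remaining columns are, so your orthogonal-padding case check is unnecessary; the paper simply lets the other columns be arbitrary.
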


\begin{proposition}[$M$ can enforce under-selection for any sparsity level]\label{prop:underselect_general_K0}
    Fix any integer $K_0\ge 2$ and any residual scale $r>0$. There exists an instance $(V,g^\dagger)$ with optimal flip count $K^\star=K_0$,
    i.e., $\exists\,x^\star\in\{0,1\}^n$ with $\|x^\star\|_0=K_0$ and $Vx^\star=-g^\dagger$,
    such that for any $M$ satisfying $M^2 > r^2$ every minimizer $\hat x$ of $\min_{x\in\{0,1\}^n} F_M(x)$ satisfies $\|\hat x\|_0\le K_0-1$.
    In particular, minimizing $F_M$ does not recover the minimum-flip solution $x^\star$.
\end{proposition}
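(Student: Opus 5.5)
Given $K_0\ge 2$ and $r>0$, I will build an explicit instance in which $K_0$ unit-style atoms sum exactly to the target, while a single atom nearly reaches it. Concretely, I work in $\mathbb{R}^d$ with $d\ge K_0$ and take orthonormal vectors $e_1,\dots,e_{K_0}$. I set $v_i = c\,e_i$ for $i=1,\dots,K_0$ and $-g^\dagger = c\sum_{i\le K_0} e_i$. The scale $c>0$ is chosen so that every under-selected pattern has residual at most of order $r$. For a subset $S\subseteq[K_0]$ with $|S|=k$, the residual is $\|Vx+g^\dagger\|_2^2 = c^2(K_0-k)$, which gives
\begin{equation*}
F_M(x_S)=c^2(K_0-k)+M^2 k .
\end{equation*}
This is affine in $k$ with slope $M^2-c^2$. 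Choosing $c=r$, the condition $M^2>r^2$ makes the slope positive, so the minimum is at $k=0$. Hence $x=0$ is the unique minimizer, with $F_M(0)=r^2K_0 < M^2K_0 = F_M(x^\star)$. I also need to check that $K^\star=K_0$: the $e_i$ are linearly independent and the target has all coefficients equal to $1$, so the only binary exact solution is $x^\star=\mathbf 1_{[K_0]}$. In particular no pattern with fewer flips is feasible, and $K^\star=K_0$.

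I must respect the paper's standing norm constraint $\|v_i\|_2\le B=2\beta$ coming from Definition~\ref{def:log-linear-policy}. The construction is scale-free, so I can either let the instance be an abstract $(V,g^\dagger)$ as the proposition allows, or realize it with features $\psi$ such that $o_i\beta\Delta\psi_i = r e_i$. The latter needs $r\le 2\beta$, and otherwise I rescale $\beta$. I will note that the proposition quantifies over instances, so the realization can be chosen freely.

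If the statement is read as requiring a nontrivial distractor, so that the minimizer has exactly $K_0-1$ flips rather than $0$, I would add extra columns to $V$. The case above already satisfies "every minimizer has $\|\hat x\|_0\le K_0-1$", so this is unnecessary.

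The main obstacle is making sure no other binary pattern beats $x=0$ or attains $x^\star$'s value in a way that breaks the claim. With only the $K_0$ orthogonal columns, $n=K_0$ and every $x\in\{0,1\}^n$ is some $x_S$, so the affine computation covers all candidates. The claim that every minimizer has at most $K_0-1$ flips then follows because $F_M(x^\star)>F_M(0)$ strictly. If I want $n>K_0$ for realism, I will add columns orthogonal to all the $e_i$. Each such column adds both a positive residual and the penalty $M^2$, so it only increases $F_M$ and the minimizer is unchanged.
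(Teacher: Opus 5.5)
Your proof is correct and uses the same instance as the paper: $K_0$ mutually orthogonal columns of norm $r$ with $-g^\dagger=\sum_{i\le K_0}v_i$. The paper only compares $F_M(x^\star)=M^2K_0$ with the single $(K_0-1)$-flip pattern of value $r^2+M^2(K_0-1)$, implicitly using $F_M(x)\ge M^2\mathbf 1^\top x$ to rule out patterns with at least $K_0$ flips; your affine-in-$k$ computation instead identifies the minimizer as $x=0$ outright, and your orthogonal handling of extra columns guarantees that $K_0$ really is the minimum flip count, which the paper's ``arbitrary remaining columns'' does not ensure.
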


Proposition~\ref{prop:no_universal_M_minflip_general} shows that for any fixed $M$ there exist instances where the $F_M$ minimizer is infeasible even though a feasible exact solution exists. 
This rules out a universal $M$ that guarantees attack success via minimizing $F_M$.
Proposition~\ref{prop:underselect_general_K0} is conceptually different: it exhibits instances with any prescribed optimum sparsity $K^\star=K_0$
for which the $F_M$ strictly prefers an attack pattern with fewer flips.
Together, these results clarify that $F_M$ mixes two goals: 
(i) promoting binary structure via regularization,
(ii) recovering the minimum-flip attack pattern via residual minimization. 
These goals can conflict, so sufficient conditions tied to each need not admit a common choice of $M$ on a given instance.


\section{BAL-A: Worst-Case Performance of BAL-A}\label{app:babai-worstcase}

We cite the following corollary and definition on the worst-case performance of Babai's algorithm on LLL-reduced bases~\cite{babai1986lovasz, SD16_babai_lecture}:

\begin{corollary}\label{cor:bal-a-performance}
    For any $\delta \in (1/4,1]$, Babai's algorithm with a $\delta$-LLL basis solves $\gamma$-CVP for
    $\gamma \le \sqrt{n}\,(\delta-1/4)^{-n/2}$.
    In particular, there is an efficient algorithm that solves $\gamma$-CVP for $\gamma = 2^{O(n)}$.
\end{corollary}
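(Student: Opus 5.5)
The plan is to reproduce the classical Babai analysis, which needs only two facts about a $\delta$-LLL-reduced basis $\tilde B=[\tilde b_1,\dots,\tilde b_n]$ with Gram--Schmidt vectors $\tilde b_i^\ast$.

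\emph{Step 1: GS decay from the Lovász condition.} Size reduction gives $|\mu_{k,k-1}|\le 1/2$. Substituting this into the Lovász condition $\delta\|\tilde b_{k-1}^\ast\|_2^2\le \|\tilde b_k^\ast\|_2^2+\mu_{k,k-1}^2\|\tilde b_{k-1}^\ast\|_2^2$ gives $\|\tilde b_{k-1}^\ast\|_2^2\le \alpha\|\tilde b_k^\ast\|_2^2$, where $\alpha:=(\delta-1/4)^{-1}$. Iterating yields $\|\tilde b_j^\ast\|_2^2\le \alpha^{i-j}\|\tilde b_i^\ast\|_2^2$ for all $j\le i$.

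\emph{Step 2: error of the nearest-plane output.} In the back-substitution of the Babai step, each coordinate is rounded to the nearest integer in the Gram--Schmidt frame. Hence the output lattice vector $w$ satisfies $t-w=\sum_i c_i\tilde b_i^\ast$ (plus a component orthogonal to the lattice span, which is common to every candidate and can be projected away), with $|c_i|\le 1/2$. Therefore
\begin{equation*}
\|t-w\|_2^2\le \tfrac14\sum_{i=1}^n\|\tilde b_i^\ast\|_2^2 .
\end{equation*}

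\emph{Step 3: comparison with the true closest vector, by induction on $n$.} Let $x$ be a closest lattice vector to $t$. The first rounding step selects the hyperplane $c\,\tilde b_n^\ast+\mathrm{span}(\tilde b_1,\dots,\tilde b_{n-1})$ closest to $t$. There are two cases.
\begin{itemize}
\item If $x$ lies in that same hyperplane, both Babai and the optimum reduce to the $(n-1)$-dimensional problem with the same projected target. The $(n-1)$-prefix of an LLL basis is still LLL-reduced, so the induction hypothesis gives a factor $\sqrt{n-1}\,\alpha^{(n-1)/2}\le \sqrt n\,\alpha^{n/2}$.
\item Otherwise $x$ lies in a different hyperplane, so $\|t-x\|_2\ge \|\tilde b_n^\ast\|_2/2$. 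Combining Steps 1 and 2 gives
\begin{equation*}
\|t-w\|_2^2\le \tfrac14\sum_i \alpha^{n-i}\|\tilde b_n^\ast\|_2^2\le \tfrac n4\alpha^{n}\|\tilde b_n^\ast\|_2^2\le n\alpha^n\|t-x\|_2^2 .
\end{equation*}
\end{itemize}
In both cases $\|t-w\|_2\le\sqrt n\,(\delta-1/4)^{-n/2}\|t-x\|_2$.

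The main obstacle is Step 3. One must argue carefully that in the first case the residual problem is exactly a CVP instance on the prefix basis with the projected target, so that the induction applies with the same $\delta$.

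For the ``in particular'' part, fix $\delta=3/4$. Then $\alpha=2$ and $\gamma=\sqrt n\,2^{n/2}=2^{O(n)}$. LLL with $\delta<1$ runs in polynomial time, since the potential $\prod_i\|\tilde b_i^\ast\|^{2(n-i+1)}$ decreases by a factor $\delta$ per swap. Babai's step is a QR factorization plus back-substitution, which is also polynomial, so the whole procedure is efficient. The case $\delta=1$ is included only for the approximation bound, not for the efficiency claim.
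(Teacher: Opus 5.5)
Your proposal is correct and is the classical argument behind this result: the Gram--Schmidt decay $\|\tilde b_{k-1}^\ast\|_2^2\le(\delta-1/4)^{-1}\|\tilde b_k^\ast\|_2^2$, the $\frac14\sum_i\|\tilde b_i^\ast\|_2^2$ error bound for nearest-plane rounding, and the same-hyperplane/different-hyperplane induction. The paper gives no proof of its own and only cites Babai and the Stephens-Davidowitz notes, which rest on this same reasoning. The one step you leave implicit is in the efficiency claim: a factor-$\delta$ decrease of the potential per swap gives a polynomial number of swaps only when combined with a lower bound on the potential (for example, the Gram determinants are positive integers for integral input) and the usual bit-size control, both of which the standard LLL analysis supplies.
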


\begin{definition}[$\gamma$-CVP]\label{def:gamma_cvp}
    For any approximation factor $\gamma=\gamma(n)\ge 1$, the $\gamma$-Closest Vector Problem
    ($\gamma$-CVP) is defined as follows. The input is a basis for a lattice
    $\mathcal{L}\subseteq \mathbb{R}^n$ and a target vector $t\in \mathbb{R}^n$.
    The goal is to output a lattice vector $x\in \mathcal{L}$ with
    $\|x-t\| \le \gamma \cdot \text{dist}(t,\mathcal{L})$.
\end{definition}


\section{Experiments and Results: Setup Details}\label{app:exp-details}

This section collects implementation details omitted from the main text.

\subsection{Synthetic Dictionaries}
We use synthetic $V$ to test the theory in a controlled way.
For BMP-A, the coherence guarantee is stated for a fixed dictionary; 
therefore we fix a single low-coherence matrix $V\in\mathbb{R}^{200\times 200}$ and sweep the sparsity level $K^\star$.
We build $V$ by repeatedly resampling random Gaussian columns (normalized to unit norm) to reduce the maximum pairwise normalized correlation. 
The final constructed $V$ has empirical mutual coherence $\mu(V)\approx 0.1971$ and the maximum $K^\star$ satisfying the condition~\eqref{eq:bmp_mu_cond_nonorm} is $K_{\text{coh}} = 3$.
We then run $200$ Monte Carlo trials per $K$ with $K\in\{1,\dots,60\}$.
In each trial we sample a random support $S$ of size $K^\star$ as the ground-truth flips $x^\star$, set $- g^\dagger = \sum_{i\in S}v_i$, and run BMP-A for exactly $K^\star$ iterations to isolate whether the greedy rule selects the correct support as $K^\star$ increases.

For BAL-A, we study the dependence on the lattice penalty $M$ and compare to separation thresholds.
Computing the separation threshold requires combinatorial search, so we keep $n$ small.
We use a random Gaussian dictionary with unit-norm columns, with $V\in\mathbb{R}^{64\times 20}$, $K^\star=5$, and $200$ Monte Carlo trials.
The sufficient binary coefficient bound from \eqref{equ:M0-explicit} gives $M_0\approx 1.68817$ with $B=1$ and $R=0$.
The $M_{\text{all sep}}$ that satisfies all separation conditions in \eqref{eq:separation-condition} is computed exactly as $M_{\text{all sep}}\approx 0.68287$.
For each trial we sweep $M$ over $25$ log-spaced values in $[0.05,\,3.0]$; we apply LLL reduction $\delta=0.75$ before Babai to improve numerical stability.
With $n=20$, we can enumerate all $k$-flip candidates for $k<K^\star$ and compute the separation threshold exactly, which is why we use this small-$n$ synthetic setting.

\subsection{Stanford Human Preferences Dataset}
We also evaluate on a real preference dataset, Stanford Human Preferences (SHP)~\cite{ethayarajh2022understanding}, by constructing $V$ from a log-linear DPO pipeline.
We load $20{,}000$ training pairs, of which each (history, response) pair is embedded with a frozen DistilBERT encoder (\texttt{distilbert-base-uncased}), using maximum sequence length $256$ and the [CLS] embedding.
For each preference pair we form the feature difference $\psi_i=\phi(\text{preferred})-\phi(\text{non-preferred})$.
We train a log-linear DPO policy for $3$ epochs with learning rate $10^{-3}$, batch size $256$, and $\beta=1$.
After training, we build the attack matrix $V$ such that each column $v_i$ is the per-example gradient change caused by flipping the $i$-th preference label.

BAL-A is expensive in the order of $n$, so on SHP we randomly pick the subset of $n=50$ for training pairs attack $K^\star=7$ among the subset, and run a penalty sweep over $M$ over $5$ trials.
The sufficient binary coefficient bound from \eqref{equ:M0-explicit} gives $M_0\approx 6.71994$ with $B\approx 3.47103$ and $R=0$.
The $M_{\text{all sep}}$ that satisfies all separation conditions in \eqref{eq:separation-condition} is computed exactly as $M_{\text{all sep}}\approx 1.32166$.
We use $25$ log-spaced $M$ values in $[0.05,\,10.0]$, BAL-A with LLL pre-reduction ($\delta=0.75$) to attack.
For each BAL-A trial on SHP, after fixing a SHP subset, we sample a support $S$ of size $K^\star$ within that subset, set $x^\star=\mathbf{1}_S$, and define the target by $t=Vx^\star=-g^\dagger$. 
BAL-A then attempts to recover $S$ from $(V,g^\dagger)$.
Once we obtain the attack output $\hat{x}$, we evaluate support recovery against $x^\star$.

For BMP-A, we compare two subsets of the \emph{same size} to isolate the effect of coherence:
(i) a random subset of training pairs, and (ii) a low-coherence subset selected greedily by adding a column only if its maximum normalized inner product with the already-selected set is below a relaxed threshold.
We use $K^\star=10$ and a relaxed threshold larger than that in \eqref{eq:bmp_mu_cond_nonorm} so that the low-coherence subset is large enough to run many trials. 
Note that this threshold is not required to satisfy our sufficient condition; it is used only to reduce coherence in practice.
On each subset we run $200$ Monte Carlo trials: we sample $K^\star$ true flips uniformly within the subset, set $t=Vx^\star$, run BMP-A with a budget up to $K = K^\star+5 = 15$, and use stopping tolerance $\varepsilon=10^{-3}$.
This is the same ground-truth construction as above: after fixing the subset and the support $S$, we define $x^\star=\mathbf{1}_S$ and then set the target by $t=Vx^\star=-g^\dagger$.
We report support recovery and residual curves as a function of the budget, and we also report the empirical mutual coherence of each subset to quantify how the subset construction changes $V$.

The SHP experiments have two downstream policy diagnostics reporting parameter/policy distance. 
The attack-vs-groundtruth diagnostics compares DPO retrained on the BAL-A/BMP-A attacked subset $\tilde{\mathcal D}(\hat x)$ with DPO retrained on the ground-truth attacked subset $\tilde{\mathcal D}(x^\star)$; 
this checks whether the recovered flips reproduce the target-policy effect of the constructed attack. 
The clean-vs-attacked diagnostic compares DPO retrained on the clean subset $\mathcal D$ with DPO retrained on $\tilde{\mathcal D}(\hat x)$; 
this checks whether the recovered flips actually mislead downstream policy away from clean policy.

For attack-vs-groundtruth, we report $\|\hat\theta-\theta^\dagger\|_2$ and $\|\pi_{\hat\theta}-\pi_{\theta^\dagger}\|_1$ after retraining on $\tilde{\mathcal D}(\hat x)$ and $\tilde{\mathcal D}(x^\star)$. 
For clean-vs-attacked, we use a matched feasible $n=50$, $K^\star=7$ SHP case with BAL-A ($M=0.9$) and BMP-A ($K_{\max}=7$). 
Both methods recover the sampled support exactly ($\mathrm{TP}=7,\mathrm{FP}=0,\mathrm{FN}=0$), so their attacked subsets coincide. 
BAL-A takes $0.6865$ seconds and BMP-A takes $1.37\times 10^{-4}$ seconds. 
We retrain on $\mathcal D$ and $\tilde{\mathcal D}(\hat x)$ for $20000$ Adam steps with learning rate $10^{-3}$.


\section{Experiments and Results: SHP Diagnostics}\label{app:shp-results}

This section reports the SHP experiments referenced in the main text.
The results below give the support-recovery, residual, and attack-vs-groundtruth policy-distance diagnostics for BAL-A and BMP-A. Figure~\ref{fig:shp-clean-vs-attacked} in the main text gives the clean-vs-attacked diagnostic.

\subsection{BAL-A Performance on SHP}

\begin{figure}[ht]
    \centering
    \includegraphics[width=0.5\linewidth]{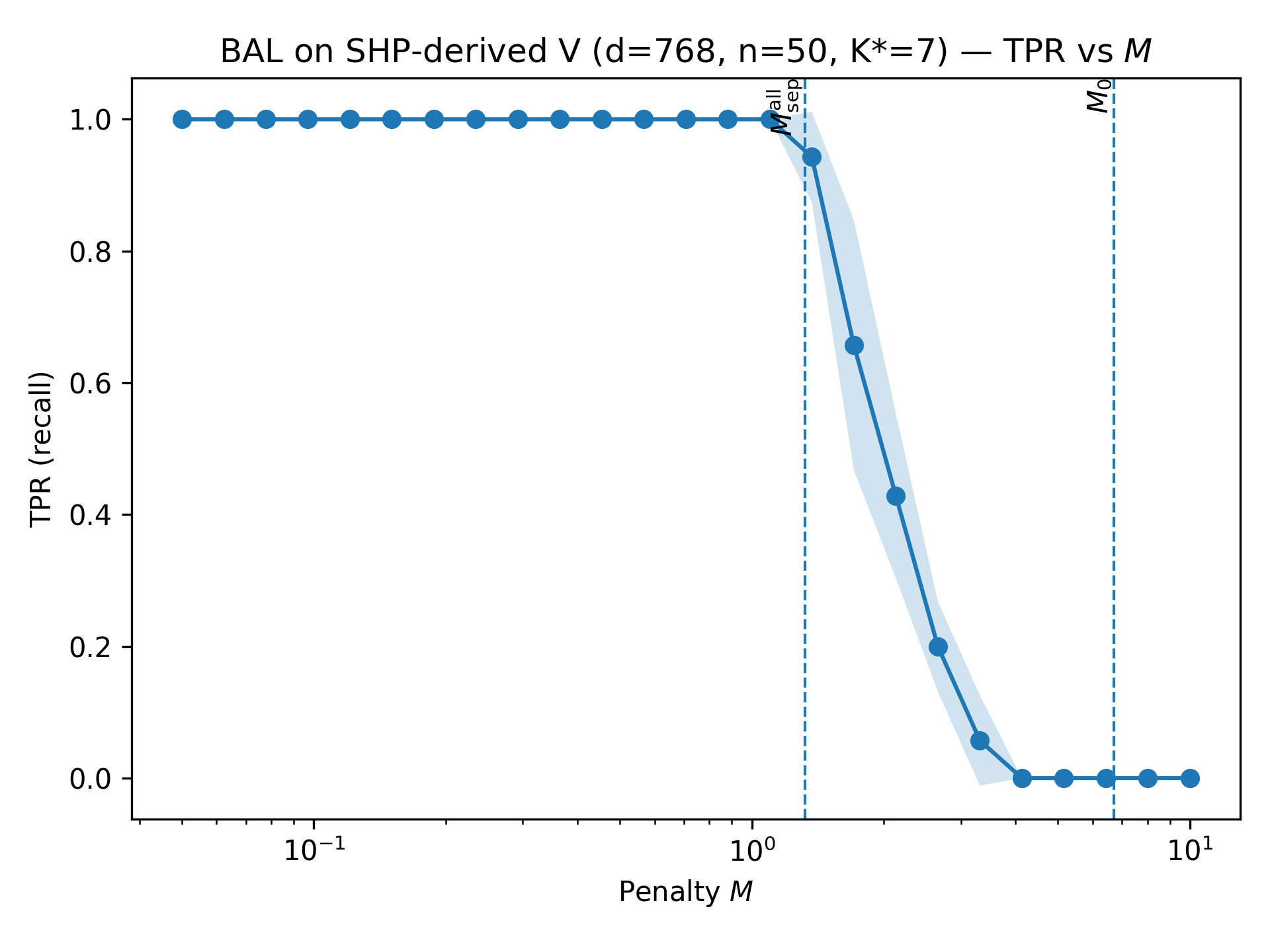}
    \caption{True positive rate of BAL-A on $V$ from SHP as a function of $M$.}
    \label{fig:bal-shp-M}
\end{figure}

Figure~\ref{fig:bal-shp-M} reports TPR of BAL-A on an SHP-derived dictionary ($d=768$, $n=50$, $K^\star=7$) as the penalty $M$ varies.
Recovery is near-perfect for small $M$, and then degrades rapidly when $M$ exceeds the separation threshold $M_{\text{sep,all}}$, 
consistent with the theory.
The explicit sufficient bound $M_0\approx 6.72$ is conservative in this setting.

\begin{figure}[ht]
    \centering
    \includegraphics[width=0.49\linewidth]{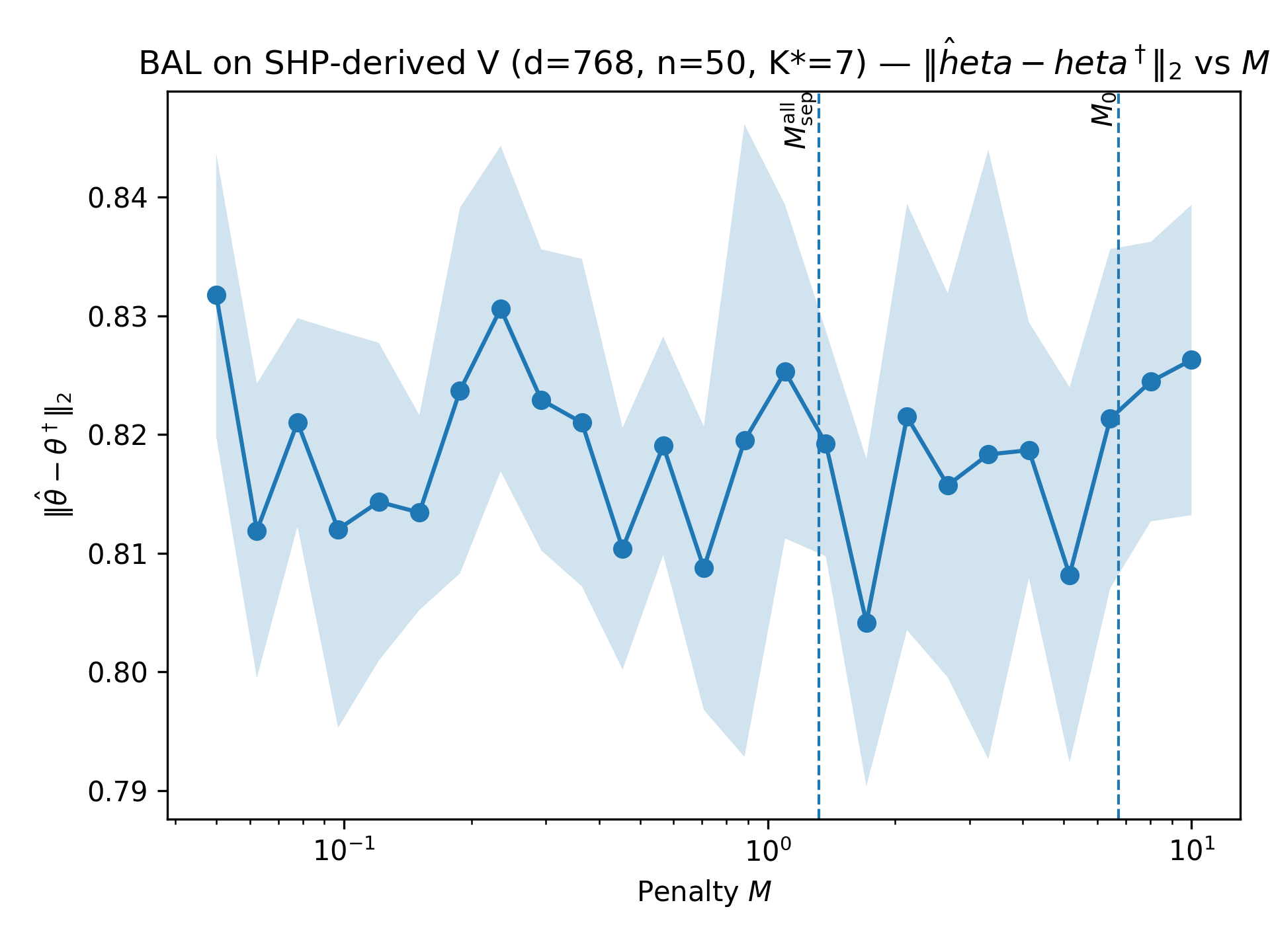} \hfill
    \includegraphics[width=0.49\linewidth]{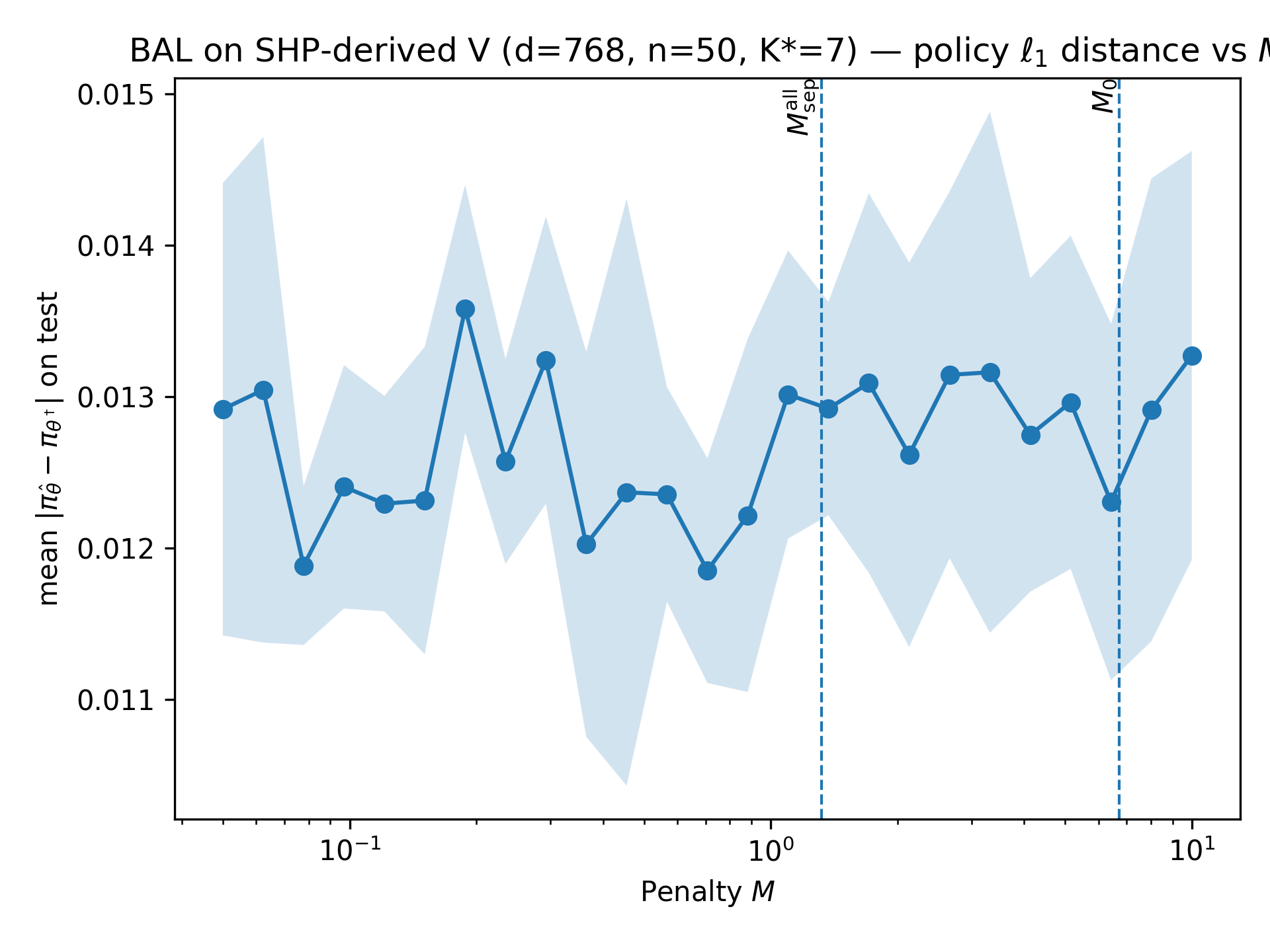}
    \caption{Attack-vs-groundtruth diagnostic for BAL-A: $\ell_2$ distance between learned parameters and $\ell_1$ distance between learned policies, comparing training on the BAL-A attacked subset $\tilde{\mathcal{D}}(\hat x)$ versus training on the ground-truth attacked subset $\tilde{\mathcal{D}}(x^\star)$, as a function of $M$.
    }
    \label{fig:bal-shp-l2-l1}
\end{figure}

Figure~\ref{fig:bal-shp-l2-l1} compares the learned model from the BAL-A attacked subset of SHP $\tilde{\mathcal{D}}(\hat x)$ to the model learned from the ground-truth attacked subset of SHP $\tilde{\mathcal{D}}(x^\star)$, 
using $\|\hat\theta-\theta^\dagger\|_2$ in parameter space and $\|\pi_{\hat\theta}-\pi_{\theta^\dagger}\|_1$ in policy space.
Across the full sweep of $M$, both distances change only mildly, even though support recovery in Fig.~\ref{fig:bal-shp-M} collapses for large $M$.
This weak dependence is expected in our SHP setting because the learned log-linear DPO model is trained on a limited subset ($n=50$), 
so different flip sets can still yield similar fitted parameters/policies.
These curves are the attack-vs-groundtruth policy diagnostic for BAL-A; together with TPR, they show when $\tilde{\mathcal D}(\hat x)$ induces a policy close to that of $\tilde{\mathcal D}(x^\star)$.
Because this experiment instantiates a log-linear DPO model on fixed preference pairs rather than a generative LLM, the most faithful downstream diagnostics here are parameter and policy distances after retraining, rather than qualitative free-form response examples.

\subsection{BMP-A Performance on SHP}
\begin{figure}[ht]
    \centering
    \includegraphics[width=0.5\linewidth]{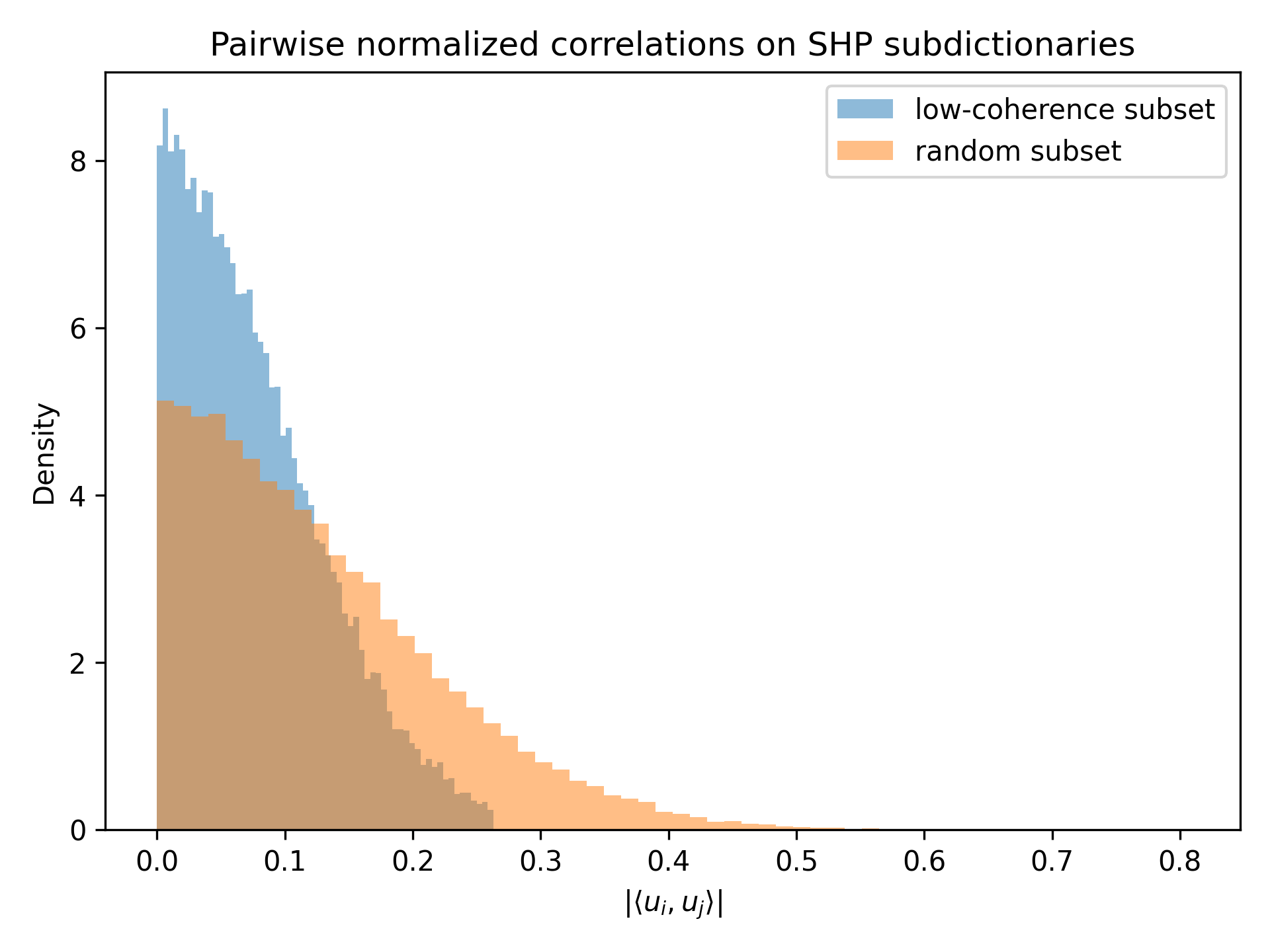}
    \caption{Histogram of pairwise normalized correlations for two subsets of SHP: a random subset and a low-coherence subset.}
    \label{fig:bmp-shp-coherence}
\end{figure}

Figure~\ref{fig:bmp-shp-coherence} shows that the subset construction meaningfully changes the geometry of the SHP-derived dictionary.
The random subset has a much heavier right tail of pairwise normalized correlations and a larger mutual coherence (here $\mu\approx 0.807$),
while the low-coherence subset shifts the distribution left (here $\mu\approx 0.263$).

\begin{figure*}[tb]
    \centering
    \includegraphics[width=0.49\linewidth]{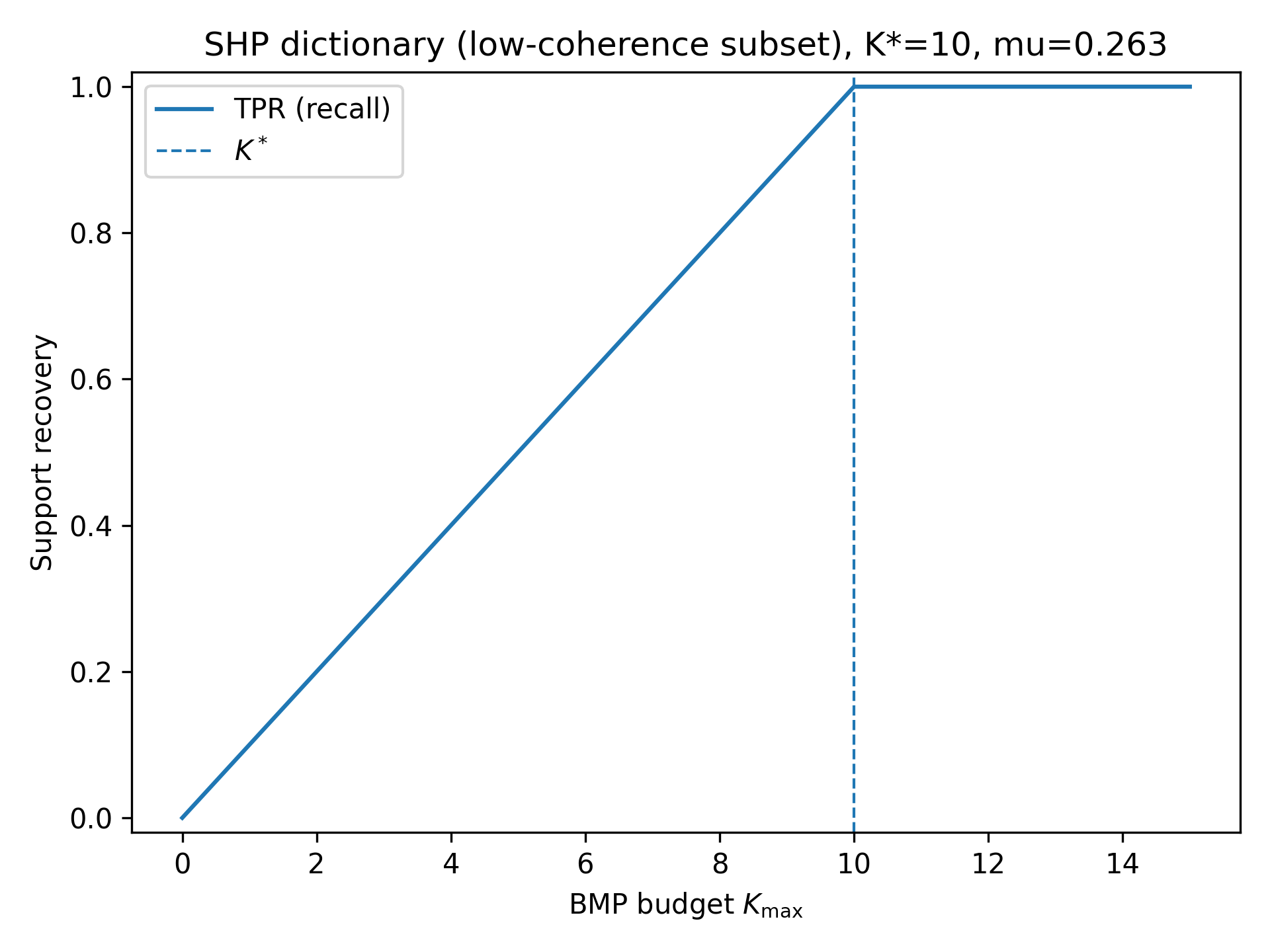}
    \includegraphics[width=0.49\linewidth]{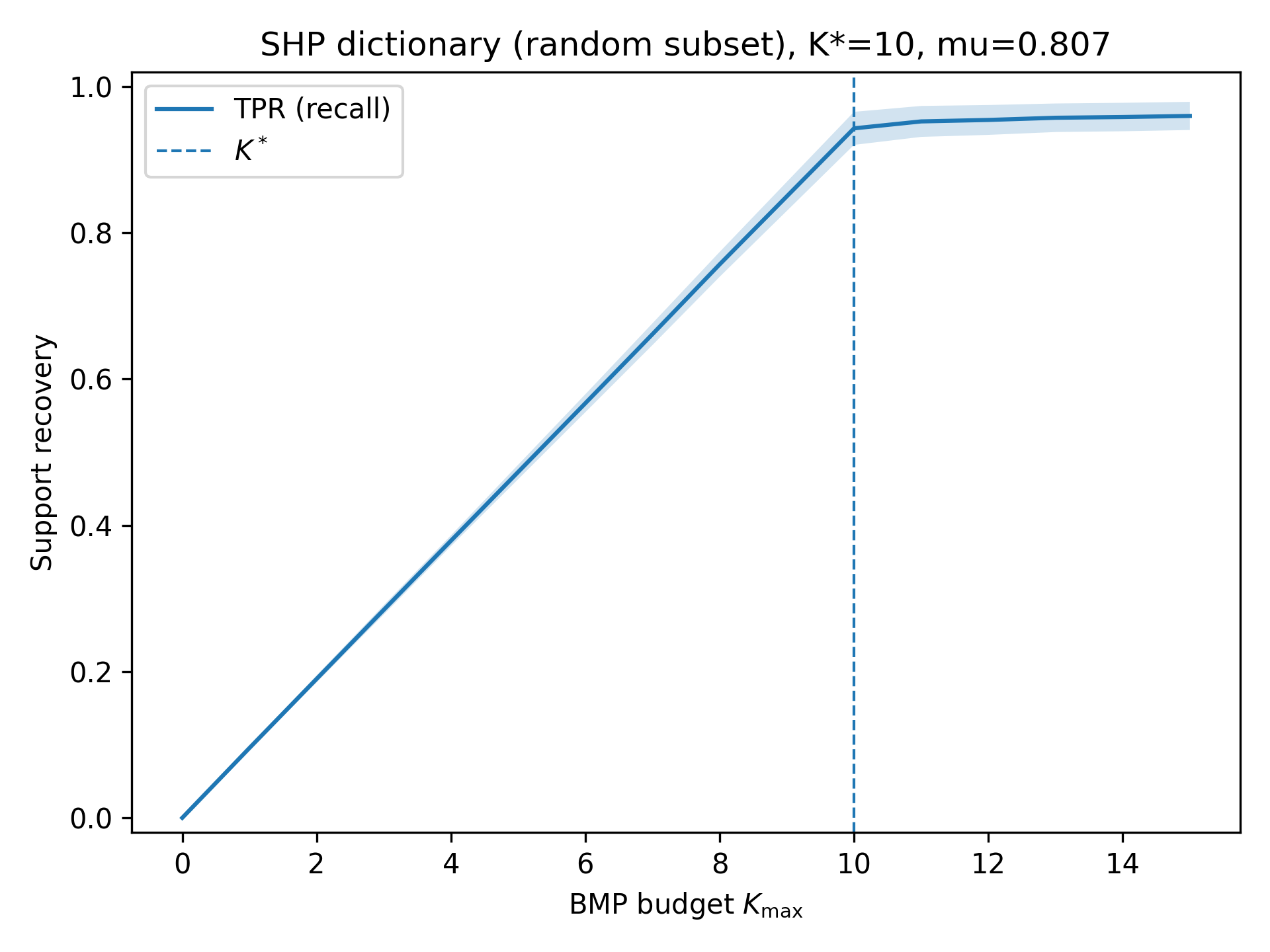} \\
    \includegraphics[width=0.49\linewidth]{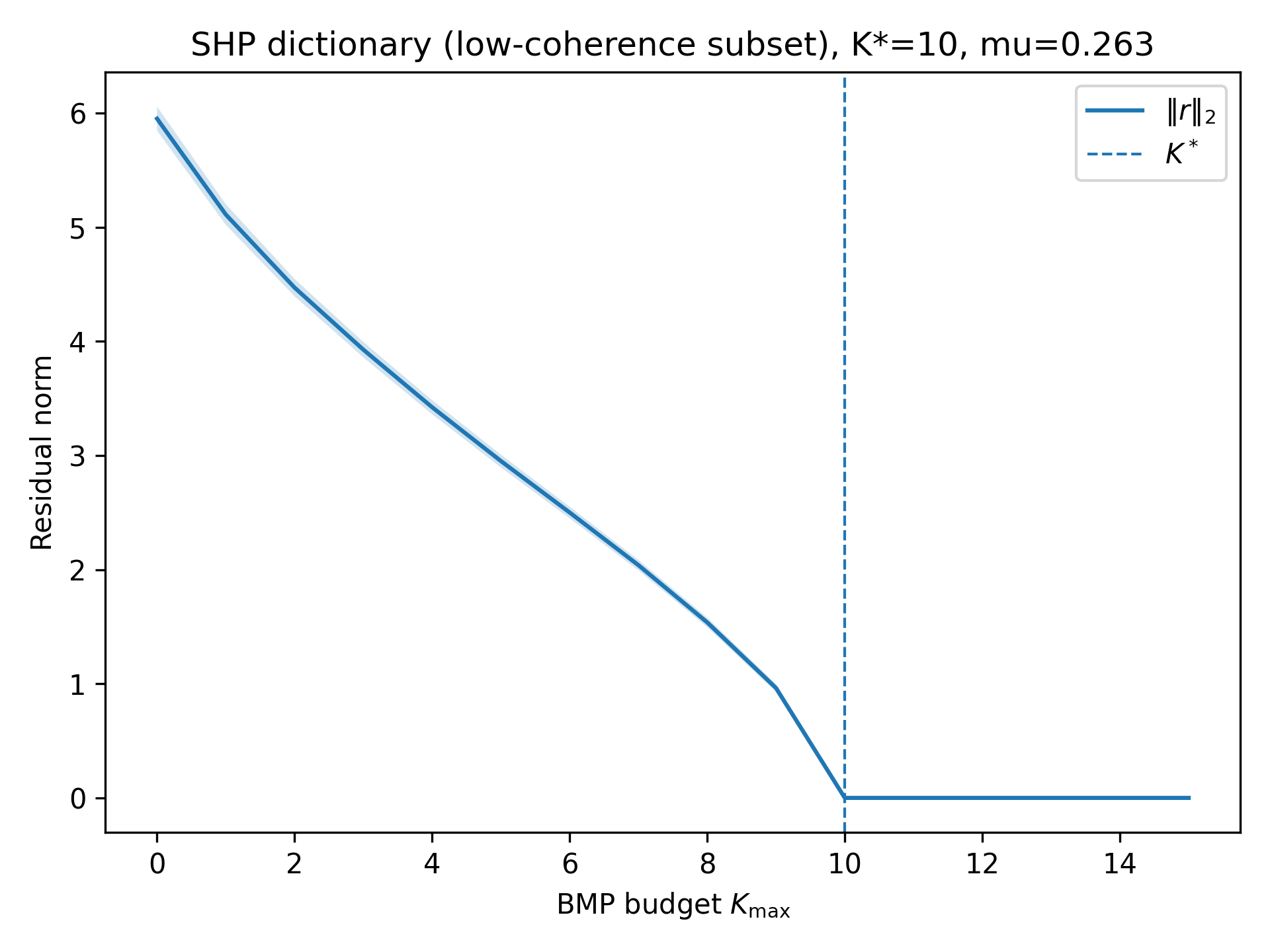}
    \includegraphics[width=0.49\linewidth]{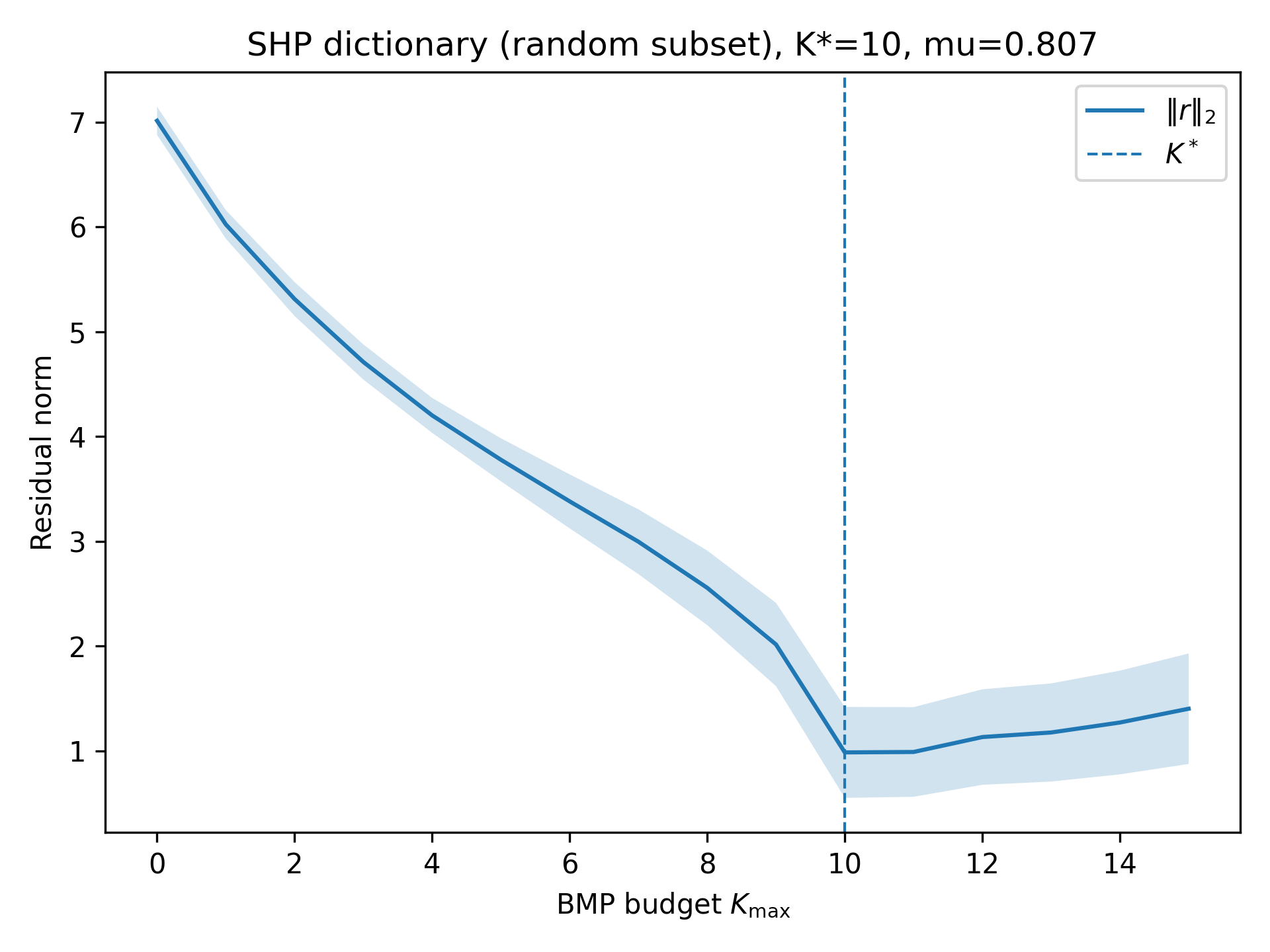}
    \caption{True positive rate and residual of BMP-A on $V$ from different subset of SHP as a function of budget $K$.}
    \label{fig:bmp-shp-K}
\end{figure*}

Figure~\ref{fig:bmp-shp-K} compares BMP-A on two SHP-derived dictionaries with the same size ($n=401$) but different coherence:
a random subset ($\mu=0.807$, $K_{\mathrm{coh}}=1$) and a low-coherence subset ($\mu=0.263$, $K_{\mathrm{coh}}=2$).
We sample $K^\star=10$ true flips and set $y=Vx^\star$, then run BMP-A with tolerance $\varepsilon=10^{-3}$ up to budget $t_{\max}=15$ with 200 trials.
The low-coherence subset yields consistently higher TPR as the budget increases and drives the residual down faster, often reaching a near-zero residual around $K^\star$.
In contrast, on the random subset BMP-A makes more incorrect selections as the budget grows, which limits support recovery and can increase the residual after later wrong updates.
Overall, reducing coherence and increasing budget improves the greedy dynamics in practice.

\begin{figure*}[tb]
    \centering
    \includegraphics[width=0.49\linewidth]{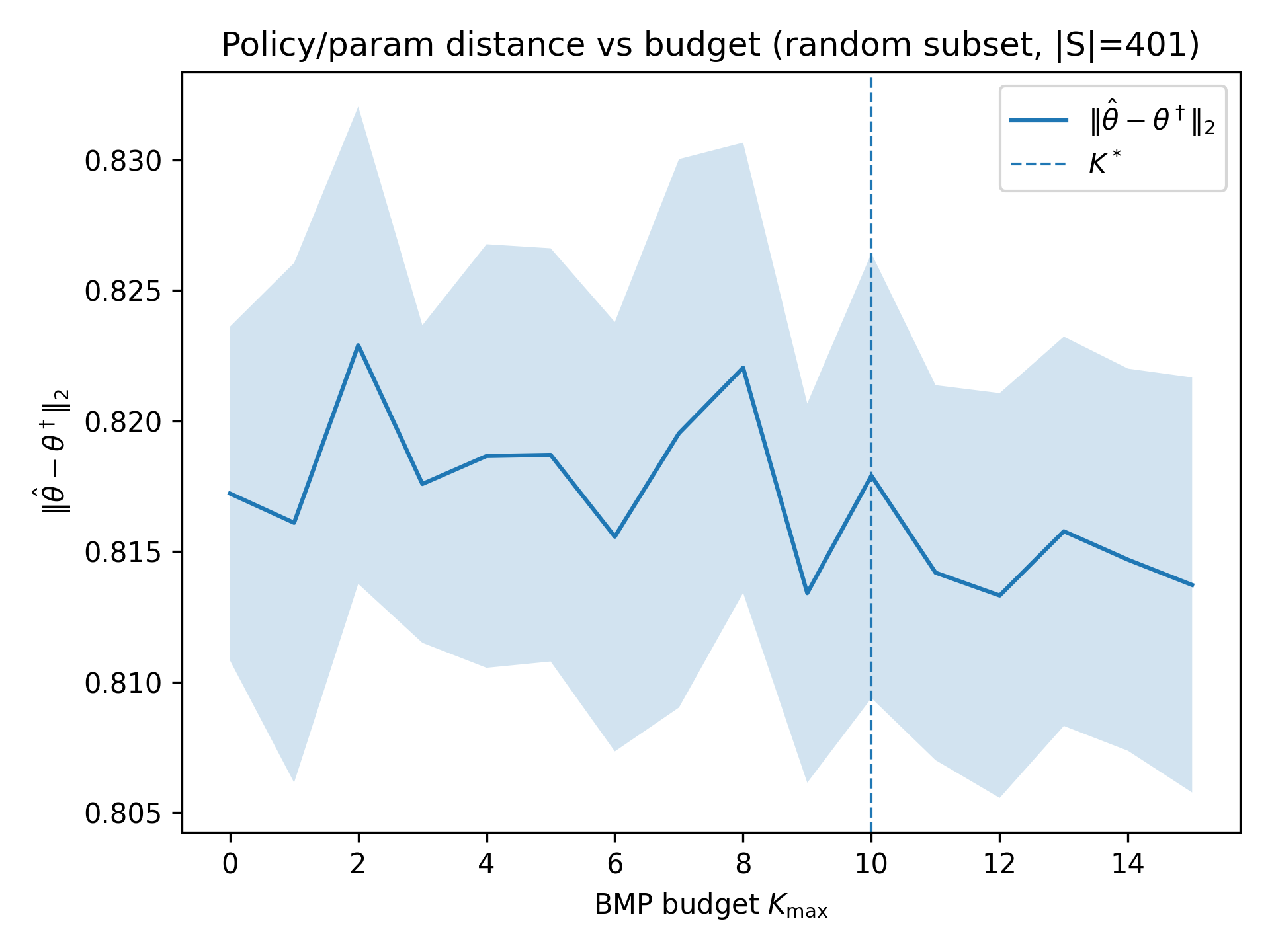} \hfill
    \includegraphics[width=0.49\linewidth]{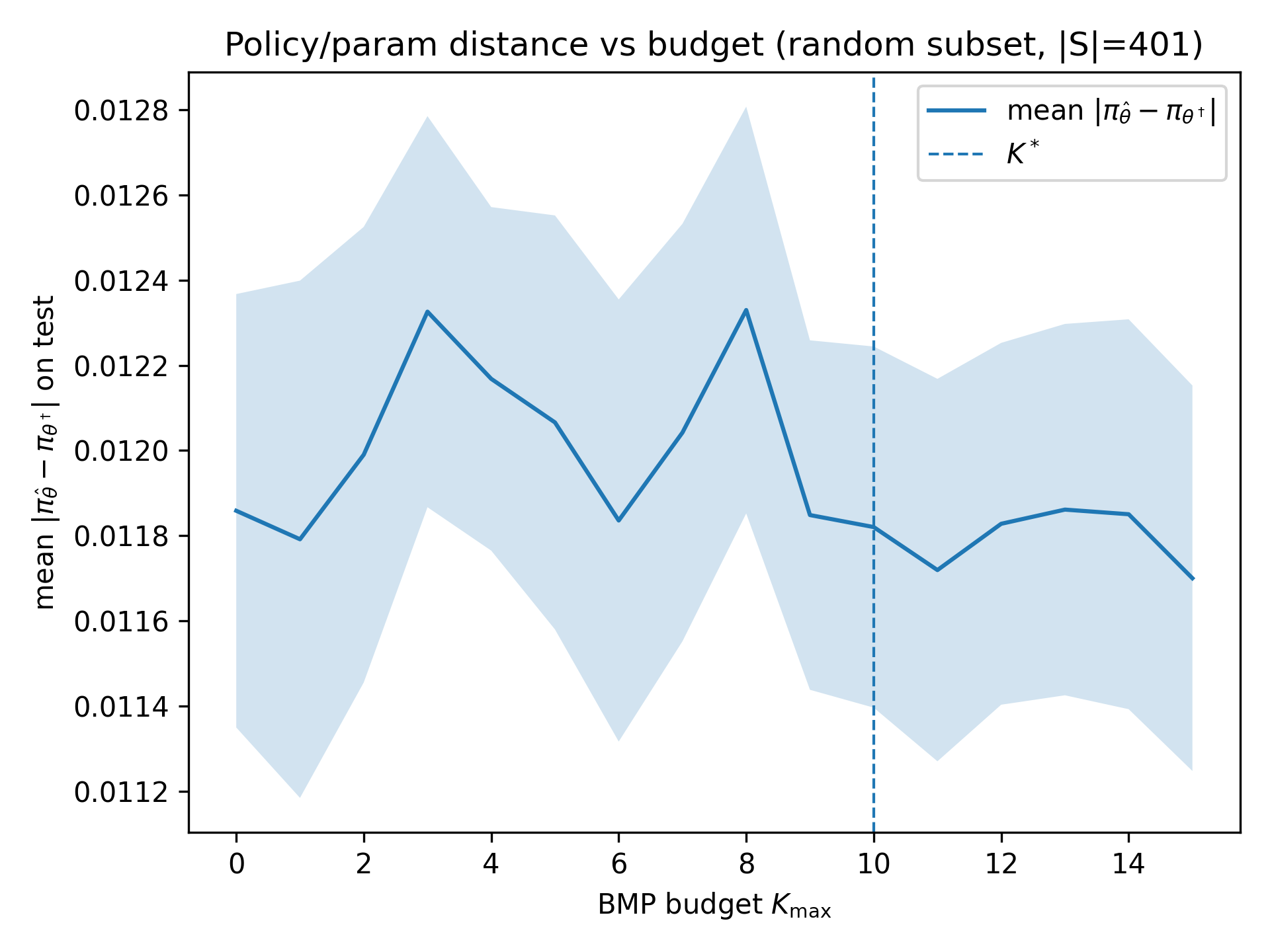} \\
    \includegraphics[width=0.49\linewidth]{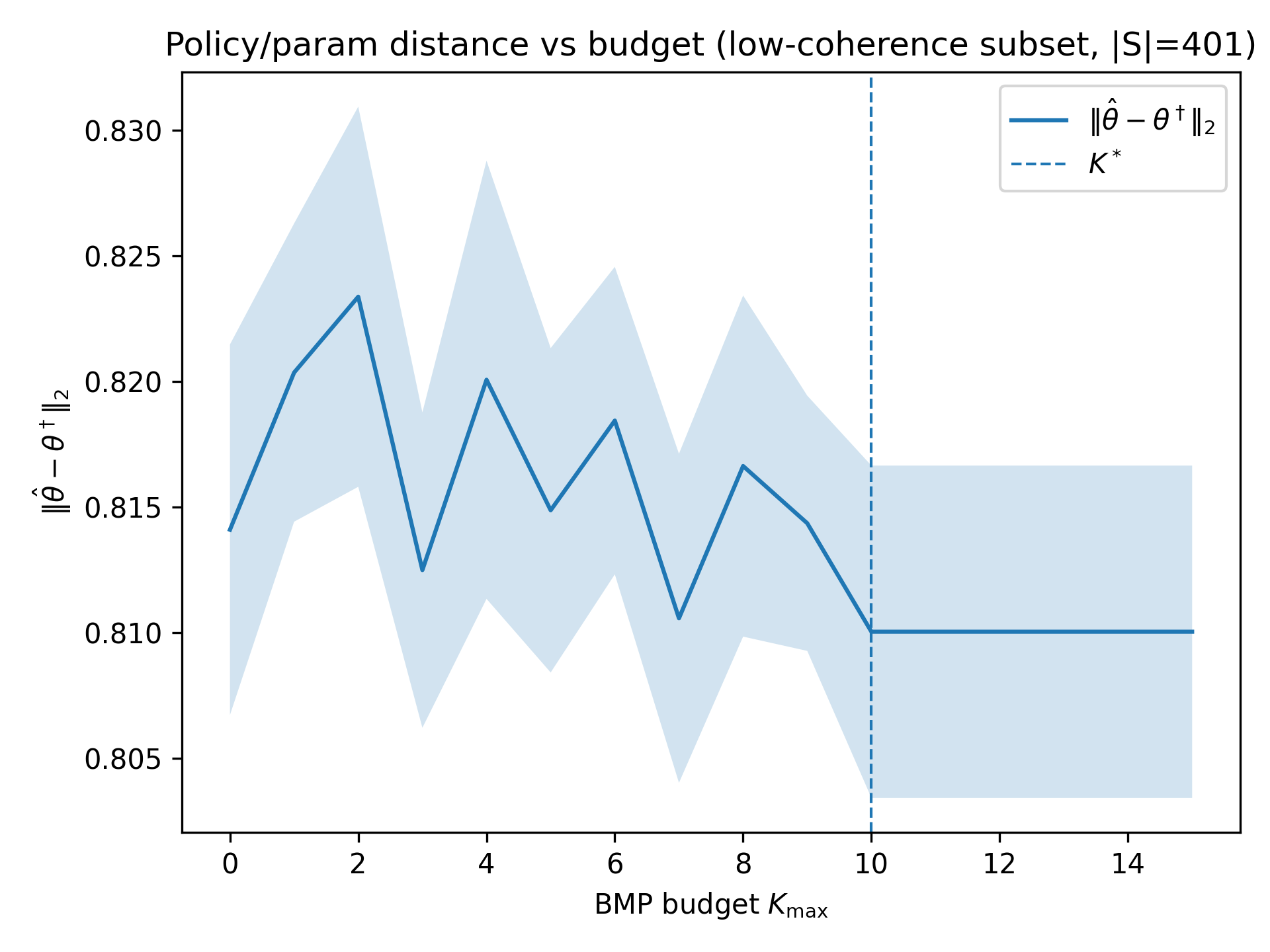} \hfill
    \includegraphics[width=0.49\linewidth]{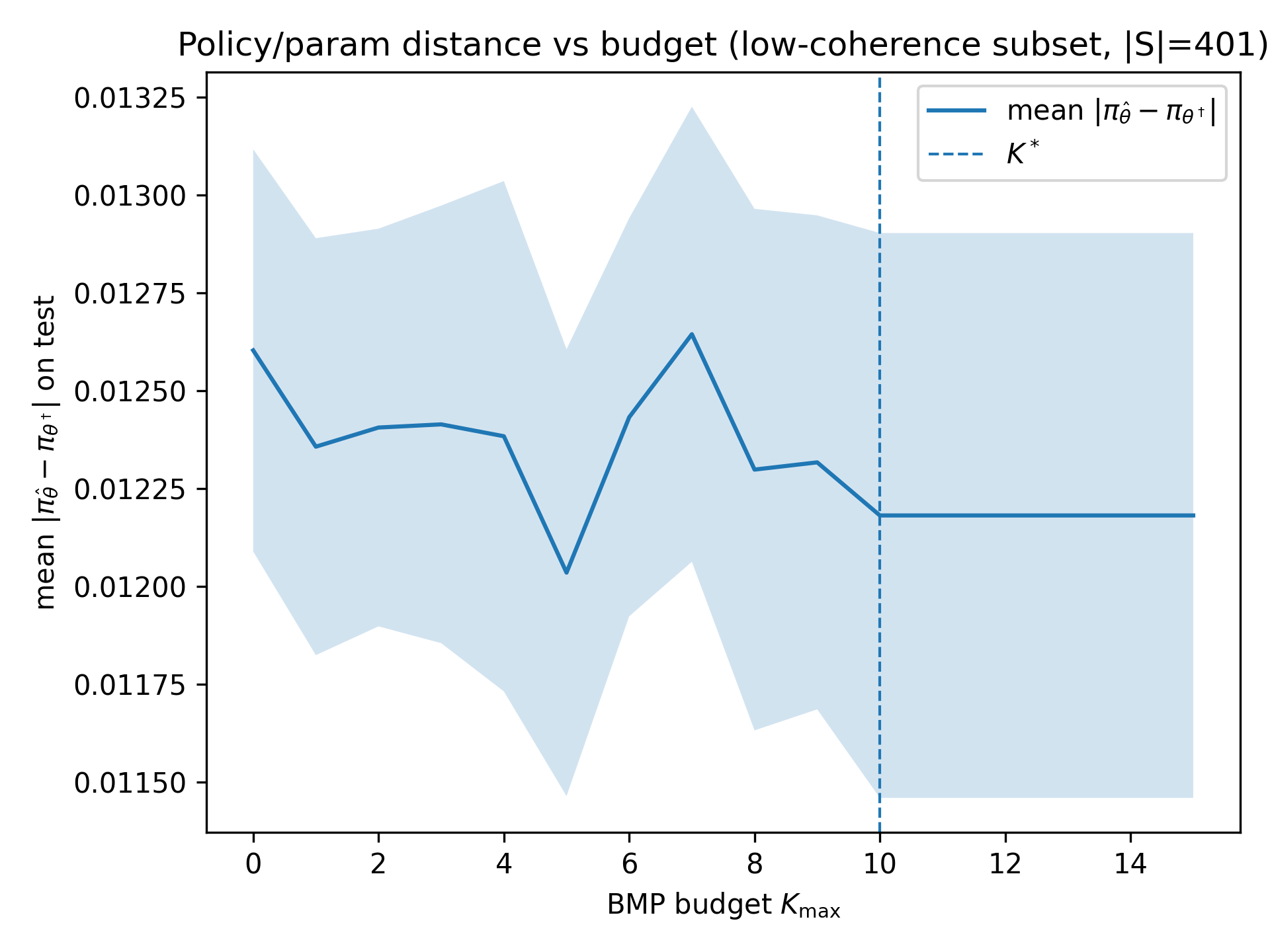}
    \caption{Attack-vs-groundtruth diagnostic for BMP-A: $\ell_2$ distance between learned parameters and $\ell_1$ distance between learned policies, 
    comparing training on the BMP-A attacked $\tilde{\mathcal{D}}(\hat x)$ versus training on the ground-truth attacked $\tilde{\mathcal{D}}(x^\star)$, as a function of budget $K$.}
    \label{fig:bmp-shp-l2-l1}
\end{figure*}

Figure~\ref{fig:bmp-shp-l2-l1} evaluates $\|\hat\theta-\theta^\dagger\|_2$ and $\|\pi_{\hat\theta}-\pi_{\theta^\dagger}\|_1$ for the BMP-A.
Over the sweep on budget $K$, both quantities change only modestly, even when the recovered support is imperfect.
This is reasonable because the DPO model is trained on a larger but still limited subset ($n=401$), so different flip sets can lead to similar fitted parameters and policies.
These curves are the attack-vs-groundtruth policy diagnostic for BMP-A; together with TPR and residual reduction, they show whether $\tilde{\mathcal D}(\hat x)$ produces a policy close to that of $\tilde{\mathcal D}(x^\star)$.


\section{Proofs of Theorems}\label{sec:proofs}

\subsection{Proof of Theorem~\ref{thm:flip-gradient-shift}}\label{app:flip-gradient-shift-proof}

\begin{proof}
    For the log-linear policy class, we have
    \begin{equation*}
        \log \pi_{\theta}(a \mid s)
        = \psi(s,a)^{\top} \theta - \log \sum_{b} \exp(\psi(s,b)^\top \theta),
    \end{equation*}
    and similarly for $\mu=\pi_{\theta_\mu}$.
    Therefore,
    \begin{equation*}
        \begin{aligned}
            \log \frac{\pi_{\theta}(a_i \mid s_i)}{\mu(a_i \mid s_i)} 
            - \log \frac{\pi_{\theta}(a_i' \mid s_i)}{\mu(a_i' \mid s_i)} 
            = \psi(s_i, a_i)^{\top}\theta 
            - \psi(s_i, a_i)^{\top}\theta_{\mu} 
            - \psi(s_i, a_i')^{\top}\theta 
            + \psi(s_i, a_i')^{\top}\theta_{\mu}
            = \Delta\psi_i^{\top} (\theta - \theta_{\mu}),
        \end{aligned}
    \end{equation*}
    where $\Delta\psi_i := \psi(s_i,a_i)-\psi(s_i,a_i') \in \mathbb{R}^{d}$.

    Plugging into~\eqref{equ:per-sample-loss} gives per-sample loss
    \begin{equation*}
        \ell_i(\theta) = -\log \sigma\!\left(o_i \beta \Delta\psi_i^\top(\theta-\theta_\mu)\right)
        = \log\!\left(1+\exp\!\left(-o_i \beta \Delta\psi_i^\top(\theta-\theta_\mu)\right)\right),
    \end{equation*}
    since $-\log \sigma(t) = -\log \left( \frac{1}{1 + e^{-t}} \right) = \log (1 + e^{-t})$.
    Differentiating yields the per-sample gradient
    \begin{align*}
        g_i(\theta)
        &= \nabla_\theta \ell_i(\theta)
        = -o_i \beta\, \sigma\!\left(-o_i \beta \Delta\psi_i^\top(\theta-\theta_\mu)\right)\Delta\psi_i.
    \end{align*}

    Now flip $o_i$ to $\tilde o_i=-o_i$. The new per-sample gradient is
    \begin{equation*}
        \tilde g_i(\theta)
        = -\tilde o_i \beta\, \sigma\!\left(-\tilde o_i \beta \Delta\psi_i^\top(\theta-\theta_\mu)\right)\Delta\psi_i
        = o_i \beta\, \sigma\!\left(o_i \beta \Delta\psi_i^\top(\theta-\theta_\mu)\right)\Delta\psi_i.
    \end{equation*}
    Hence, the gradient shift is
    \begin{equation*}
        \Delta g_i(\theta) 
        = \tilde{g}_i(\theta) - g_i(\theta) 
        = o_i \beta \, \sigma\!\left(o_i \beta \, \Delta\psi_i^{\top} (\theta - \theta_{\mu}) \right) \Delta\psi_i 
        + o_i \beta \, \sigma\!\left(-o_i \beta \, \Delta\psi_i^{\top} (\theta - \theta_{\mu}) \right) \Delta\psi_i .
    \end{equation*}
    Using the sigmoid identity $\sigma(x) = 1 - \sigma(-x)$, we have
    \begin{equation*}
        \Delta g_i(\theta) = o_i \beta \Delta\psi_i =: \Delta g_i.
    \end{equation*}
    Here we see that the gradient shift caused by flipping the label of one sample is a constant vector, independent of the parameter $\theta$. 
    This proves Theorem~\ref{thm:flip-gradient-shift}.
\end{proof}

\subsection{Proof of Lemma~\ref{lem:grad-to-policy}}\label{app:grad-to-policy-proof}

\begin{proof}
    Let $f(\theta):=L_{\mathrm{DPO}}(\theta;\tilde{\mathcal{D}})$ and $\theta^\star:=\hat\theta = \arg\min_\theta f(\theta)$.
    Since $f$ is $m$-strongly convex and differentiable, see Lemma E.14~\cite{nika2025policy}, its gradient is $m$-strongly monotone:
    for all $\theta$,
    \begin{equation*}
        \langle \nabla f(\theta)-\nabla f(\theta^\star),\, \theta-\theta^\star\rangle \ge m \|\theta-\theta^\star\|_2^2 .
    \end{equation*}
    Because $\theta^\star$ minimizes $f$, we have $\nabla f(\theta^\star)=0$. Thus,
    \begin{equation*}
        m \|\theta-\theta^\star\|_2^2
        \le \langle \nabla f(\theta),\, \theta-\theta^\star\rangle
        \le \|\nabla f(\theta)\|_2\,\|\theta-\theta^\star\|_2,
    \end{equation*}
    where the last inequality is Cauchy--Schwarz.

    If $\theta\neq \theta^\star$, dividing both sides by $m\|\theta-\theta^\star\|_2$ yields
    $\|\theta-\theta^\star\|_2 \le \|\nabla f(\theta)\|_2/m$, and the inequality is trivial when $\theta=\theta^\star$.

    Taking $\theta=\theta^\dagger$ and using \eqref{eq:grad_identity_poison} together with \eqref{eq:grad_residual_surrogate} gives \eqref{eq:param_bound}.
    Finally, Lemma E.5 of~\cite{nika2025policy} shows that for log-linear policies, if
    $\|\hat\theta(x)-\theta^\dagger\|_2 \le \epsilon/(2\sqrt{d})$, then
    $\|\pi_{\hat\theta(x)}-\pi_{\theta^\dagger}\|_1 \le \epsilon$.
    Combining this with \eqref{eq:param_bound} yields the stated sufficient condition
    $\varepsilon \le m\,\epsilon/(2\sqrt{d})$.
\end{proof}

\subsection{Proof of Theorem~\ref{thm:flip-norm-lb}}\label{app:proof-flip-norm-lb}
\begin{proof}
    Recall that we assume that the feature embedding is uniformly bounded, e.g.,
    \begin{equation*}
        \|\psi(s, a)\|_2 \le 1.
    \end{equation*}
    Under this assumption, each feature difference satisfies
    \begin{equation*}
        \|\Delta \psi_i\|_2
        = \|\psi(s_i, a_i) - \psi(s_i, a_i')\|_2
        \le 2,
    \end{equation*}
    and hence each individual flip contribution is bounded as
    \begin{equation*}
        \|o_i \beta \Delta \psi_i\|_2
        = \beta \|\Delta \psi_i\|_2
        \le 2\beta.
    \end{equation*}
    \paragraph{Exact attack.}
    In the exact attack setting, the constraint reads $g^\dagger = \sum_{i \in \mathcal{F}} o_i \beta \,\Delta \psi_i$. 
    Taking norms and applying the triangle inequality, we obtain
    \begin{equation}\label{equ:norm-lb-exact}
        \begin{split}
            \| g^{\dagger} \|_2
            = \left\| \sum_{i \in \mathcal{F}} o_i \beta \Delta\psi_i \right\|_2 
            \leq \sum_{i \in \mathcal{F}} \| o_i \beta \Delta\psi_i \|_2
            \leq \sum_{i \in \mathcal{F}} 2\beta
            = 2 \beta |\mathcal{F}|.
        \end{split}
    \end{equation}
    Consequently,
    \begin{equation}
        |\mathcal{F}| \ge \frac{\| g^{\dagger} \|}{2 \beta}.
    \end{equation}

    \paragraph{Approximate attack.}
    In the approximate case, we allow that
    $\left\| g^\dagger + \sum_{i \in \mathcal{F}} o_i \beta \Delta\psi_i \right\|_2 \leq \varepsilon$. 
    Applying the reverse triangle inequality to the summation term, we have
    \begin{equation*}
        \left\|
        \sum_{i \in \mathcal{F}} o_i \beta \Delta\psi_i
        \right\|_2
        \ge
        \|g^\dagger\|_2 - \varepsilon.
    \end{equation*}
    On the other hand, as in \eqref{equ:norm-lb-exact}, we also have
    \begin{equation*}
        \left\|
        \sum_{i \in \mathcal{F}} o_i \beta \Delta\psi_i
        \right\|_2
        \le 2 \beta |\mathcal{F}|.
    \end{equation*}
    Combining these inequalities yields
    \begin{equation*}
        \|g^\dagger\|_2 - \varepsilon
        \le 2 \beta |\mathcal{F}|,
    \end{equation*}
    so that
    \begin{equation}
        |\mathcal{F}|
        \geq
        \frac{\| g^{\dagger} \|_2 - \varepsilon}{2 \beta}.
    \end{equation}
\end{proof}

\subsection{Proof of Lemma~\ref{lem:binary-coeff-real}}
The proof is a local-improvement argument: if some coefficient were $\ge 2$, reducing it by one decreases the penalty by at least $3M^2$,
while the residual term can increase by at most $O(B\|r\|)+O(B^2)$; choosing $M$ large ensures the penalty decrease dominates, contradicting optimality.

\begin{proof}
    First, by feasibility of $x^\star$ and optimality of $x^{\mathrm{opt}}$,
    \begin{equation}\label{equ:y-opt-upper}
        \|y(x^{\mathrm{opt}})\|_2^2
        \le \|y(x^\star)\|_2^2
        = \|Vx^\star + g^\dagger\|_2^2 + M^2\|x^\star\|_2^2
        \le R^2 + M^2K^\star.
    \end{equation}
    Let
    \begin{equation*}
        r := Vx^{\mathrm{opt}} + g^\dagger \in\mathbb{R}^d.
    \end{equation*}
    Then from~\eqref{equ:y-opt-upper} and $\|y(x^{\mathrm{opt}})\|_2^2=\|r\|_2^2+M^2\|x^{\mathrm{opt}}\|_2^2\ge \|r\|_2^2$,
    we get
    \begin{equation}\label{equ:r-bound}
        \|r\|_2 \le \sqrt{R^2 + M^2K^\star}.
    \end{equation}

    We now show that for $M$ large enough, $x^{\mathrm{opt}}$ cannot have any coordinate with magnitude at least $2$.
    Suppose for contradiction that there exists an index $j$ with $|x^{\mathrm{opt}}_j|\ge 2$.
    Let $s:=\mathrm{sign}(x^{\mathrm{opt}}_j)\in\{-1,+1\}$ and define the perturbed integer vector
    \begin{equation*}
        \tilde x := x^{\mathrm{opt}} - s e_j,
    \end{equation*}
    where $e_j$ is the $j$-th standard basis vector.
    Define the corresponding residual
    \begin{equation*}
        \tilde r := V\tilde x + g^\dagger = r - s v_j.
    \end{equation*}
    Step 1 (bound the change in the top term).
    Compute
    \begin{equation*}
        \|\tilde r\|_2^2
        = \|r - s v_j\|_2^2
        = \|r\|_2^2 - 2s\langle r, v_j\rangle + \|v_j\|_2^2,
    \end{equation*}
    hence
    \begin{equation*}
        \|r\|_2^2-\|\tilde r\|_2^2
        =2s\langle r,v_j\rangle-\|v_j\|_2^2.
    \end{equation*}
    Taking absolute values and using the triangle inequality (and $|s|=1$),
    \begin{equation*}
        \bigl|\|r\|_2^2-\|\tilde r\|_2^2\bigr|
        =
        \bigl|2s\langle r,v_j\rangle-\|v_j\|_2^2\bigr|
        \le 2|\langle r,v_j\rangle|+\|v_j\|_2^2.
    \end{equation*}
    By Cauchy--Schwarz,
    \begin{equation*}
        |\langle r,v_j\rangle|\le \|r\|_2\,\|v_j\|_2,
    \end{equation*}
    so
    \begin{equation*}
        \bigl|\|r\|_2^2-\|\tilde r\|_2^2\bigr|
        \le 2\|r\|_2\|v_j\|_2+\|v_j\|_2^2.
    \end{equation*}
    Finally, using $\|v_j\|_2\le B$ yields
    \begin{equation}\label{equ:top-diff}
        \bigl|\|r\|_2^2 - \|\tilde r\|_2^2\bigr|
        \le 2B\|r\|_2 + B^2.
    \end{equation}
    Using \eqref{equ:r-bound}, we further obtain
    \begin{equation}\label{equ:top-diff-final}
        \bigl|\|r\|_2^2 - \|\tilde r\|_2^2\bigr|
        \le 2B\sqrt{R^2 + M^2K^\star} + B^2.
    \end{equation}

    Step 2 (exact change in the bottom penalty).
    Since only the $j$-th coordinate changes,
    \begin{equation*}
        \|x^{\mathrm{opt}}\|_2^2 - \|\tilde x\|_2^2
        = (x^{\mathrm{opt}}_j)^2 - (x^{\mathrm{opt}}_j - s)^2 
        = (x^{\mathrm{opt}}_j)^2 - \bigl((x^{\mathrm{opt}}_j)^2 - 2s x^{\mathrm{opt}}_j + 1\bigr) 
        = 2s x^{\mathrm{opt}}_j - 1
        = 2|x^{\mathrm{opt}}_j| - 1.
    \end{equation*}
    Because $|x^{\mathrm{opt}}_j|\ge 2$, we have $2|x^{\mathrm{opt}}_j|-1\ge 3$, hence
    \begin{equation}\label{equ:bottom-diff}
        M^2\bigl(\|x^{\mathrm{opt}}\|_2^2 - \|\tilde x\|_2^2\bigr) \ge 3M^2.
    \end{equation}

    Step 3 (combine and enforce positivity).
    By definition,
    \begin{equation*}
        \|y(x^{\mathrm{opt}})\|_2^2 - \|y(\tilde x)\|_2^2 
        = \bigl(\|r\|_2^2 - \|\tilde r\|_2^2\bigr)
        + M^2\bigl(\|x^{\mathrm{opt}}\|_2^2 - \|\tilde x\|_2^2\bigr).
    \end{equation*}
    Using \eqref{equ:top-diff-final} and \eqref{equ:bottom-diff},
    \begin{equation}\label{equ:gap-ineq}
        \|y(x^{\mathrm{opt}})\|_2^2 - \|y(\tilde x)\|_2^2
        \ge
        3M^2 - \Bigl(2B\sqrt{R^2 + M^2K^\star} + B^2\Bigr).
    \end{equation}
    If the right-hand side of \eqref{equ:gap-ineq} is strictly positive, then
    $\|y(\tilde x)\|_2^2 < \|y(x^{\mathrm{opt}})\|_2^2$, contradicting optimality of $x^{\mathrm{opt}}$.

    A sufficient (explicit) way to ensure positivity is to upper bound
    $\sqrt{R^2 + M^2K^\star} \le R + M\sqrt{K^\star}$, which yields the condition
    \begin{equation*}
        3M^2 > 2B(R + M\sqrt{K^\star}) + B^2.
    \end{equation*}
    Equivalently,
    \begin{equation*}
        3M^2 - 2B\sqrt{K^\star}\,M - (2BR + B^2) > 0,
    \end{equation*}
    which holds whenever $M$ exceeds the positive root of this quadratic, giving \eqref{equ:M0-explicit}.
    Thus, for all such $M$, no coordinate of $x^{\mathrm{opt}}$ can satisfy $|x^{\mathrm{opt}}_j|\ge 2$.
    Therefore $x^{\mathrm{opt}}\in\{-1,0,1\}^n$.
\end{proof}

\subsection{Proof of Theorem~\ref{thm:binary-optimality-real}}
\begin{proof}
    By Lemma~\ref{lem:binary-coeff-real}, every minimizer over $\mathbb{Z}^n$ satisfies $|x_i|\le 1$ for all $i$ when $M\ge M_0$.
    Under the additional restriction $x_i\ge 0$, this implies $x_i\in\{0,1\}$ for all $i$, hence $x^{\mathrm{opt}}\in\{0,1\}^n$.
    The objective bound follows from feasibility of $x^\star$ and optimality of $x^{\mathrm{opt}}$:
    $\|y(x^{\mathrm{opt}})\|_2^2\le \|y(x^\star)\|_2^2 \le R^2 + M^2K^\star$.
\end{proof}

\subsection{Proof of Theorem~\ref{thm:minimum-residual-recovers-minimum-flip}}
\begin{proof}
    Since $x^\star$ is feasible, $F_M(x^\star)=M^2K^\star$.
    Consider any $x\in\{0,1\}^n$.
    If $\mathbf 1^\top x \ge K^\star$, then $F_M(x)\ge M^2\,\mathbf 1^\top x \ge M^2K^\star = F_M(x^\star)$.

    If $\mathbf 1^\top x = k < K^\star$, then by definition of $\rho_k$,
    \begin{equation*}
        F_M(x) \;\ge\; \rho_k^2 + M^2 k.
    \end{equation*}
    Under \eqref{eq:separation-condition}, $\rho_k^2 + M^2 k > M^2(K^\star-k)+M^2k = M^2K^\star = F_M(x^\star)$.
    Hence no $k$-flip vector with $k<K^\star$ can minimize $F_M$, and no vector with $k\ge K^\star$ can do better than $x^\star$ either.
    Therefore any minimizer must have $\mathbf 1^\top x = K^\star$ and satisfy $Vx+g^\dagger=0$, i.e., it is an optimal exact attack pattern.
\end{proof}

\subsection{Proof of Proposition~\ref{prop:no_universal_M_minflip_general}}
\begin{proof}
    Fix any $M>0$. Choose any dimension $d\ge 1$ and any $n\ge 1$.
    Let $g^\dagger\in\mathbb{R}^d$ be any nonzero vector with
    \begin{equation*}
        \|g^\dagger\|_2 \;=\; \frac{M}{2} \;<\; M.
    \end{equation*}
    Construct $V\in\mathbb{R}^{d\times n}$ whose first column is
    \begin{equation*}
        v_1 := -g^\dagger,
    \end{equation*}
    and whose remaining columns are arbitrary.

    Then the exact flip attack is feasible with the binary vector $x^\star=e_1\in\{0,1\}^n$:
    \begin{equation*}
        Vx^\star \;=\; v_1 \;=\; -g^\dagger,
    \end{equation*}
    so the optimal flip count is $K^\star=\|x^\star\|_0=1$.

    Now consider the surrogate objective $F_M(x)=\|Vx+g^\dagger\|_2^2 + M^2\,\mathbf 1^\top x$ over $x\in\{0,1\}^n$.
    For the zero vector,
    \begin{equation*}
        F_M(0) \;=\; \|g^\dagger\|_2^2 \;=\; \frac{M^2}{4}.
    \end{equation*}
    For any $x\neq 0$ with $x\in\{0,1\}^n$, we have $\mathbf 1^\top x \ge 1$, hence
    \begin{equation*}
        F_M(x) \;=\; \|Vx+g^\dagger\|_2^2 + M^2\,\mathbf 1^\top x
        \;\ge\; 0 + M^2 \;>\; \frac{M^2}{4} \;=\; F_M(0).
    \end{equation*}
    Therefore, the unique minimizer of $\min_{x\in\{0,1\}^n} F_M(x)$ is $\hat x=0$.

    Finally, since $g^\dagger\neq 0$, $\hat x=0$ is infeasible for the exact attack constraint $Vx+g^\dagger=0$ (it yields residual $\|g^\dagger\|_2>0$),
    whereas a feasible exact solution exists with one flip.
    This shows that for the fixed $M$ we constructed an instance where minimizing $F_M$ fails to recover a feasible minimum-flip exact attack.
    Since $M>0$ was arbitrary, no universal choice of $M$ can guarantee minimum-flip recovery for all instances.
\end{proof}

\subsection{Proof of Proposition~\ref{prop:underselect_general_K0}}
\begin{proof}
    Fix $K_0\ge 2$ and $r>0$. Choose any $d\ge K_0$ and any $n\ge K_0$. Construct a dictionary whose first $K_0$ columns are orthogonal vectors
    with equal norm $r$:
    \begin{equation*}
        v_1,\dots,v_{K_0}\in\mathbb{R}^d,
        \qquad
        \langle v_i,v_j\rangle=0\ (i\neq j),
        \qquad
        \|v_i\|_2=r.
    \end{equation*}
    (The remaining columns, if any, are arbitrary.) Define
    \begin{equation*}
        x^\star=\sum_{i=1}^{K_0} e_i \in\{0,1\}^n,
        \qquad
        K^\star=\|x^\star\|_0=K_0,
    \end{equation*}
    and set the noiseless target
    \begin{equation*}
        -g^\dagger \;=\; Vx^\star \;=\; \sum_{i=1}^{K_0} v_i.
    \end{equation*}
    Then $x^\star$ is feasible and achieves zero residual, hence using $\|x^\star\|_2^2=\|x^\star\|_0=K_0$,
    \begin{equation*}
        F_M(x^\star)=0+M^2K_0.
    \end{equation*}
    Now consider the $(K_0-1)$-flip candidate $x'=\sum_{i=1}^{K_0-1} e_i$. Its residual is
    \begin{equation*}
        Vx' - (-g^\dagger)=\sum_{i=1}^{K_0-1}v_i - \sum_{i=1}^{K_0}v_i = -v_{K_0},
    \end{equation*}
    so $\|Vx' - (-g^\dagger)\|_2^2=\|v_{K_0}\|_2^2=r^2$, and $\|x'\|_2^2=K_0-1$. Therefore,
    \begin{equation*}
        F_M(x') = r^2 + M^2(K_0-1).
    \end{equation*}
    Comparing $F_M(x')$ and $F_M(x^\star)$,
    \begin{equation*}
        F_M(x') < F_M(x^\star)
        \quad\Longleftrightarrow\quad
        r^2 + M^2(K_0-1) < M^2K_0
        \quad\Longleftrightarrow\quad
        M^2 > r^2.
    \end{equation*}
    Hence for any $M$ satisfying $M^2 > r^2$, the $K_0$-flip solution $x^\star$ cannot minimize $F_M$, and every minimizer $\hat x$
    must satisfy $\|\hat x\|_0\le K_0-1$.
\end{proof}

\subsection{Proof of Theorem~\ref{thm:bmp-recovery-coherence}}

\begin{proof}
    We follow the proof of Theorem 1 in Appendix~B of~\cite{wen2021binary} and only record the substitutions needed to match our
    non-normalized BMP-A.

    \paragraph{Notation substitutions.}
    Replace the unit-norm dictionary columns $A_i$ in \cite{wen2021binary} by the normalized directions
    $u_i:=v_i/\|v_i\|_2$, and denote $U:=[u_1,\dots,u_n]$.
    Replace the measurement model $y=Ax^\star+v$ by our target $y$ together with the residual noise
    $e:=y-Vx^\star$ (so $\|e\|_2\le \varepsilon$ by Assumption~\ref{ass:feasible-K-attack}).
    Replace their sparsity level $K$ by $K^\star$, their tolerance $\delta$ by $\varepsilon$, and their coherence
    $\mu(A)=\max_{i\neq j}|A_i^\top A_j|$ by
    $\mu(V):=\max_{i\neq j}|\langle u_i,u_j\rangle|$.
    Finally, introduce the column-norm range
    $b:=\min_i\|v_i\|_2$ and $B:=\max_i\|v_i\|_2$.

    \paragraph{Changes to Appendix~B inequalities.}
    The greedy selection rule of BMP-A is
    \begin{equation*}
        i_k\in\arg\max_{i\notin\Gamma_{k-1}}
        \frac{|\langle v_i,r^{k-1}\rangle|}{\|v_i\|_2}
        =\arg\max_{i\notin\Gamma_{k-1}}|\langle u_i,r^{k-1}\rangle|,
    \end{equation*}
    so (B1) in Appendix~B of~\cite{wen2021binary} becomes
    \begin{equation*}
        \|U_{\Omega\setminus\Gamma_{k-1}}^\top r^{k-1}\|_\infty
        >
        \|U_{\Omega^c}^\top r^{k-1}\|_\infty.
    \end{equation*}
    The residual decomposition (B2) is replaced by the exact identity implied by our update $r\leftarrow r-v_i$:
    \begin{equation*}
        r^{k-1}=y-\sum_{i\in\Gamma_{k-1}} v_i
        =\sum_{i\in\Omega\setminus\Gamma_{k-1}} v_i + e
        =\sum_{i\in\Omega\setminus\Gamma_{k-1}} \|v_i\|_2\,u_i + e.
    \end{equation*}
    With these substitutions, the derivation of (B3) is unchanged, except that every occurrence of $A^\top(\cdot)$
    is replaced by $U^\top(\cdot)$ and $A_{\Omega\setminus\Gamma}x_{\Omega\setminus\Gamma}$ is replaced by $V_{\Omega\setminus\Gamma}\mathbf{1}$,
    since $x^\star$ is binary on the remaining support.

    The two key bounds (B4) and (B5) in Appendix~B of~\cite{wen2021binary} now read
    \begin{equation*}
        \|U_{\Omega\setminus\Gamma}^\top V_{\Omega\setminus\Gamma}\mathbf{1}\|_\infty
        \ge b-(|\Omega\setminus\Gamma|-1)\mu(V)B
        \quad\text{and}\quad
        \|U_{\Omega^c}^\top V_{\Omega\setminus\Gamma}\mathbf{1}\|_\infty
        \le |\Omega\setminus\Gamma|\mu(V)B,
    \end{equation*}
    since $\langle u_i,v_i\rangle=\|v_i\|_2\ge b$ and
    $|\langle u_i,v_j\rangle|=\|v_j\|_2|\langle u_i,u_j\rangle|\le B\mu(V)$ for $i\neq j$.
    The noise bound (B6) is unchanged because $\|u_i\|_2=1$:
    \begin{equation*}
        \|U_{\Omega\setminus\Gamma}^\top e\|_\infty+\|U_{\Omega^c}^\top e\|_\infty
        \le 2\|e\|_2 \le 2\varepsilon.
    \end{equation*}
    Combining these bounds yields the same correlation gap as in Appendix~B of~\cite{wen2021binary} provided that
    \begin{equation*}
        b-(2K^\star-1)\mu(V)B>2\varepsilon>0 ,
    \end{equation*}
    which is exactly~\eqref{eq:bmp_mu_cond_nonorm}--\eqref{eq:bmp_eps_cond_nonorm}.
    Therefore, BMP-A selects an index in $\Omega=\mathrm{supp}(x^\star)$ at each iteration, and after $K^\star$ iterations it recovers $\mathrm{supp}(x^\star)$.

    Finally, once $\mathrm{supp}(\hat x)=\mathrm{supp}(x^\star)$, we have $r^{K^\star}=y-V\hat x=y-Vx^\star=e$, so
    $\|r^{K^\star}\|_2\le\varepsilon$ and the stopping rule triggers no later than iteration $K^\star$.
\end{proof}

\subsection{Proof of Theorem~\ref{thm:K-flip-impossible}}
\begin{proof}
    We prove impossibility by upper bounding the largest gradient shift $\|Vx\|_2$ achievable with at most $K$ flips, and comparing it to the amount of shift needed to reach tolerance $\varepsilon$.

    Suppose there exists a binary vector $x\in\{0,1\}^n$ with $\|x\|_0\le K$ such that $\|Vx+g^\dagger\|_2\le\varepsilon$.
    Let $r:=Vx+g^\dagger$.
    By the reverse triangle inequality,
    \begin{equation}\label{eq:needed_shift}
        \|Vx\|_2 \;\ge\; \|g^\dagger\|_2-\|r\|_2 \;\ge\; \|g^\dagger\|_2-\varepsilon.
    \end{equation}
    Thus any successful $K$-flip attack must produce a shift of size at least $\|g^\dagger\|_2-\varepsilon$.

    \paragraph{Spectral norm impossibility~\eqref{eq:spec-impossible}.}
    Since $x$ is binary with at most $K$ ones, we have $\|x\|_2=\sqrt{\|x\|_0}\le \sqrt{K}$.
    Therefore,
    \begin{equation*}
        \|Vx\|_2 \le \|V\|_2\,\|x\|_2 \le \|V\|_2\sqrt{K}.
    \end{equation*}
    Combining with~\eqref{eq:needed_shift} yields
    \begin{equation*}
        \|g^\dagger\|_2-\varepsilon \le \|V\|_2\sqrt{K},
    \end{equation*}
    which contradicts~\eqref{eq:spec-impossible}.

    \paragraph{Coherence impossibility~\eqref{eq:coh-impossible}.}
    Let $S:=\mathrm{supp}(x)$ so that $|S|=\|x\|_0\le K$ and $Vx=\sum_{i\in S} v_i$.
    Write $v_i=a_i u_i$ with $a_i=\|v_i\|_2$ and $\|u_i\|_2=1$.
    Expanding the squared norm and using $|\langle u_i,u_j\rangle|\le \mu(V)$ gives
    \begin{equation}
        \begin{split}
            \|Vx\|_2^2
            &=\Big\langle \sum_{i\in S} a_i u_i,\ \sum_{j\in S} a_j u_j\Big\rangle 
            = \sum_{i\in S} a_i^2 + \sum_{\substack{i,j\in S\\ i\neq j}} a_i a_j \langle u_i,u_j\rangle \\
            &\le \sum_{i\in S} a_i^2 + \mu(V)\sum_{\substack{i,j\in S\\ i\neq j}} a_i a_j 
            \le |S|B^2 + \mu(V)\,|S|(|S|-1)B^2 
            \le B^2\Bigl(K+\mu(V)\,K(K-1)\Bigr).
            \label{eq:coh_upper}
        \end{split}
    \end{equation}
    Together with~\eqref{eq:needed_shift}, this implies
    \begin{equation*}
        \bigl(\|g^\dagger\|_2-\varepsilon\bigr)^2 \le B^2\Bigl(K+\mu(V)\,K(K-1)\Bigr),
    \end{equation*}
    which contradicts~\eqref{eq:coh-impossible}. This completes the proof.
\end{proof}

\end{document}